\documentclass[10pt]{article}  

\usepackage[preprint]{tmlr}

\usepackage{amsmath,amssymb,amsthm}
\usepackage{mathtools}
\usepackage{stmaryrd}   

\usepackage{algorithm}
\usepackage{algpseudocode}

\usepackage{booktabs}
\usepackage{xcolor}
\usepackage{cleveref}

\newtheorem{theorem}{Theorem}[section]
\newtheorem{lemma}[theorem]{Lemma}
\newtheorem{proposition}[theorem]{Proposition}
\newtheorem{corollary}[theorem]{Corollary}
\newtheorem{definition}[theorem]{Definition}
\newtheorem{remark}[theorem]{Remark}

\DeclareMathOperator*{\argmax}{arg\,max}
\DeclareMathOperator*{\argmin}{arg\,min}
\DeclareMathOperator{\sign}{sign}

\DeclareMathOperator{\Decode}{Dec}
\DeclareMathOperator{\PAL}{PAL}
\DeclareMathOperator{\MPAL}{MPAL}
\DeclareMathOperator{\Attn}{Attn}
\newcommand{\T}{\mathcal{T}}
\newcommand{\F}{\mathcal{F}}
\newcommand{\N}{\mathbb{N}}
\newcommand{\R}{\mathbb{R}}

\newcommand{\Prel}{\mathcal{P}}

\newcommand{\stack}{\Pi}

\newcommand{\code}[2]{\mathrm{code}^*(#1,#2)}
\newcommand{\eps}{\varepsilon}
\newcommand{\relay}[2]{\hat{\gamma}_{#1 #2}}

\title{Preisach Attention: A Hysteretic Model of Sequential Memory}

\author{%
  Piotr Frydrych\\
  The Metrology and Biomedical Engineering Institute\\
  Faculty of Mechatronics, Warsaw University of Technology\\
  \texttt{piotr.frydrych@pw.edu.pl}
}

\begin{document}
\maketitle

\begin{abstract}
We introduce the \emph{Preisach Attention Layer} (PAL), a novel
sequence modelling architecture grounded in the classical Preisach
hysteresis operator from mathematical physics.
PAL replaces the softmax attention mechanism with a binary relay
operator $\relay{\alpha}{\beta}$ parameterised by learned activation
and deactivation thresholds $(\alpha,\beta)$, maintaining a
\emph{stack of local extrema} as its internal state.
A single-layer PAL-Transformer with $O(1)$ depth is
Turing-complete under arbitrary precision arithmetic, achievable
through simulation of a two-stack pushdown automaton —
in contrast to the $O(\log n)$ depth required by standard
hard-attention transformers \citep{Perez2021}.
Second, we prove that the function classes computable by PAL and
by the transformer are \emph{incomparable}: PAL computes historical
range statistics in $O(1)$ layers that require $O(\log n)$ layers
for transformers, while transformers support random-access retrieval
that PAL cannot perform without auxiliary state.
The separating property is \emph{rate-independence} — PAL responds
only to the sequence of local extrema, not to absolute token
positions or temporal spacing.
Third, we show that the extremum stack constitutes a minimal
sufficient statistic of the input history for all rate-independent
functionals, providing a formal analogue of the \emph{wiping
property} in classical hysteresis theory.
PAL is thus an efficient architecture for tasks
with long episodic memory and weak positional dependence,
with $O(n \log n)$ total inference cost versus $O(n^2)$
for standard attention.
\end{abstract}

\textbf{Keywords:} Preisach operator, hysteresis, attention mechanism,
Turing completeness, expressiveness, sequence modelling,
long-range dependence, rate-independence.

\newpage
\tableofcontents
\newpage

\section{Introduction}
\label{sec:intro}

The transformer architecture \citep{Vaswani2017} and its attention
mechanism have become the dominant paradigm for sequence modelling.
Theoretical analysis of transformer expressiveness has established
that hard-attention transformers are Turing-complete
\citep{Perez2021}, that softmax attention corresponds to specific
fragments of first-order logic \citep{Hahn2020, Barcelo2020},
and that practical transformers implement approximate versions of
classical algorithms \citep{Akyurek2022, VonOswald2023}.

A parallel body of work has studied \emph{alternative} sequence
models — recurrent networks \citep{Siegelmann1995},
state space models \citep{Gu2022Mamba}, and
mixture-of-experts architectures \citep{Shazeer2017} —
seeking more efficient representations of long-range dependence.
These efforts share a common limitation: their memory mechanisms
are either \emph{temporal} (decaying by distance) or
\emph{positional} (indexed by token location), with no mechanism
for \emph{value-based} memory that persists based on the
\emph{significance} of past inputs rather than their recency.

\paragraph{This paper.}
We propose a fundamentally different memory mechanism inspired by
the \emph{Preisach hysteresis operator} \citep{Preisach1935},
a classical model from mathematical physics used to describe
ferromagnetic materials, elastoplastic systems, and — more recently
— agent-based financial markets \citep{Frydrych2014, Frydrych2019}.
The Preisach operator aggregates the outputs of binary relays
$\relay{\alpha}{\beta}$, each characterised by an activation
threshold $\alpha$ and a deactivation threshold $\beta$, weighted
by a learned measure $\mu(\alpha,\beta)$ over the threshold plane.

The key structural properties of the Preisach operator that
motivate its use in sequence modelling are:
\begin{enumerate}
  \item \textbf{Rate-independence:} the output depends only on the
    sequence of local extrema of the input, not on temporal spacing
    or absolute position.
  \item \textbf{Wiping property:} a new extremum erases all
    previous extrema of smaller magnitude, providing a natural
    forgetting mechanism based on significance rather than recency.
  \item \textbf{Extremum stack sufficiency:} the stack of
    alternating local maxima and minima is a minimal sufficient
    statistic of input history for all rate-independent functionals.
  \item \textbf{Universal approximation:} the class of Preisach
    operators with $\mu \in L^2(\T)$ is dense in the space of
    all continuous, causal, rate-independent functionals
    \citep{Mayergoyz1991}.
\end{enumerate}

\paragraph{Main contributions.}
\begin{enumerate}
  \item We define the \emph{Preisach Attention Layer} (PAL) and
    its multi-head variant (MPAL), giving exact computational
    definitions and connecting them to the classical Preisach
    operator (\cref{sec:pal}).
  \item We prove that a single-layer PAL-Transformer is
    Turing-complete (\cref{sec:tc}), establishing Turing
    completeness at depth $O(1)$ — lower than the $O(\log n)$
    depth of \citet{Perez2021}.
  \item We prove a formal \emph{expressiveness separation}
    between PAL and transformer attention (\cref{sec:separation}),
    identifying rate-independence as the separating property.
  \item We characterise the function class of PAL through its
    connection to rate-independent operators and provide a
    logical characterisation analogous to \citet{Barcelo2020}
    (\cref{sec:characterisation}).
  \item We outline computational complexity advantages of PAL
    and identify task classes where PAL is predicted to
    outperform standard attention (\cref{sec:complexity}).
\end{enumerate}

\Cref{sec:background} reviews background; \cref{sec:pal} defines PAL; \cref{sec:tc}--\ref{sec:characterisation} contain the main proofs; \cref{sec:complexity} analyses computational complexity; \cref{sec:related,sec:conclusion} discuss related work and conclude.

\section{Background}
\label{sec:background}

\subsection{The Preisach Hysteresis Operator}
\label{sec:preisach_bg}

The Preisach operator \citep{Preisach1935} was introduced to model
magnetic hysteresis in ferromagnetic materials.
Let $\T = \{(\alpha,\beta) \in \R^2 : \alpha \geq \beta\}$
be the Preisach half-plane.

\begin{definition}[Elementary relay]
\label{def:relay}
For $(\alpha,\beta) \in \T$ and input sequence
$u = (u_0, u_1, \ldots) \in \R^\N$, the
\emph{elementary relay} $\relay{\alpha}{\beta}[u] : \N \to \{0,1\}$
is defined by:
\begin{equation}
  \relay{\alpha}{\beta}[u](n) =
  \begin{cases}
    1 & \text{if } u_n \geq \alpha, \\
    0 & \text{if } u_n \leq \beta, \\
    \relay{\alpha}{\beta}[u](n-1) & \text{otherwise.}
  \end{cases}
  \label{eq:relay}
\end{equation}
The interval $(\beta, \alpha)$ is the \emph{dead band} of the relay.
\end{definition}

\begin{definition}[Preisach operator]
\label{def:preisach}
Let $\mu \in L^1(\T)$ be a signed measure on $\T$.
The \emph{Preisach operator} $\Prel_\mu : \R^\N \to \R^\N$ is:
\begin{equation}
  \Prel_\mu[u](n) =
  \iint_{\T} \relay{\alpha}{\beta}[u](n)\; \mu(\alpha,\beta)\,
  d\alpha\, d\beta.
  \label{eq:preisach}
\end{equation}
\end{definition}

The Preisach operator has three properties central to this paper.

\begin{proposition}[Rate-independence {\citep{Brokate1996}}]
\label{prop:rate_independence}
For any non-decreasing bijection
$\phi : [0,T] \to [0,T]$,
$\Prel_\mu[u \circ \phi] = \Prel_\mu[u] \circ \phi$.
\end{proposition}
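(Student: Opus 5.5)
The plan is to reduce the statement to a single elementary relay and then integrate against $\mu$. The operator $\Prel_\mu$ is linear in the relay outputs, and each $\relay{\alpha}{\beta}[u](n)$ lies in $\{0,1\}$. Since $\mu\in L^1(\T)$, the integral in \eqref{eq:preisach} is absolutely convergent for every input. Hence, once we know that
\[
\relay{\alpha}{\beta}[u\circ\phi] = \relay{\alpha}{\beta}[u]\circ\phi
\quad\text{for every } (\alpha,\beta)\in\T,
\]
the identity $\Prel_\mu[u\circ\phi](t)=\Prel_\mu[u](\phi(t))$ follows by integrating this pointwise equality over $\T$. All the work is therefore at the level of one relay.

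There is a mismatch of settings: \cref{def:relay} is stated for sequences indexed by $\N$, while the proposition uses a reparametrisation $\phi$ of $[0,T]$. I would handle both readings, because the same mechanism drives each.

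\textbf{Discrete reading.} Here $\phi:\N\to\N$ is non-decreasing and onto, so $u\circ\phi$ is $u$ with some entries repeated. I would prove by induction on $n$ that $\relay{\alpha}{\beta}[u\circ\phi](n)=\relay{\alpha}{\beta}[u](\phi(n))$. The key observation is that the recursion \eqref{eq:relay} is idempotent under repeating an input. If $u_m\ge\alpha$ the output is $1$ on both the first and the second presentation. If $u_m\le\beta$ it is $0$ on both. If $\beta<u_m<\alpha$ the output copies the previous value, which is already $\relay{\alpha}{\beta}[u](m)$. So a repeated input leaves the state unchanged, and a step to a new index reproduces one step of the original recursion. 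The base case $n=0$ requires $\phi(0)=0$ and a common initial relay state, which I would fix by convention, e.g.\ $\relay{\alpha}{\beta}(-1)=0$.

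\textbf{Continuous reading.} Here a non-decreasing bijection of $[0,T]$ is strictly increasing and, being monotone and onto, continuous; hence it is a homeomorphism fixing $0$ and $T$. I would replace the recursion by the standard closed-form characterisation of the relay:
\[
\relay{\alpha}{\beta}[u](t)=1 \iff \exists\, s\le t \text{ with } u(s)\ge\alpha \text{ and } u(r)>\beta \text{ for all } r\in(s,t],
\]
or no such exit from above has occurred and the initial state was $1$. Check that this agrees with \eqref{eq:relay} when restricted to sampled times. Under a homeomorphism $\phi$, $s\mapsto\phi(s)$ is an order-preserving bijection between the witness times for $u\circ\phi$ at $t$ and the witness times for $u$ at $\phi(t)$. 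Since $\phi(0)=0$, the initial state is unchanged. Equality of the relay outputs follows immediately.

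The main obstacle I expect is making the continuous-time relay well defined and consistent with \cref{def:relay}. One must decide the closedness conventions at the thresholds and confirm that the witness characterisation is the right extension of the recursion; the reparametrisation argument itself is then essentially trivial. My first attempt would be to define the continuous relay directly by the witness formula, restricted to continuous (or piecewise monotone) $u$ as in \citet{Brokate1996}. I would then verify by a short case analysis that it satisfies the three cases of \eqref{eq:relay} at every time. Once that is in place, the measure-theoretic step is routine.
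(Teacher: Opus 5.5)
The paper gives no proof of this proposition. It is imported from \citet{Brokate1996} by citation, so there is no in-paper argument to compare against. Your proof is correct and follows the standard route: rate independence of each relay, then superposition through the integral against $\mu$. It also resolves a mismatch the paper leaves open, between the sequence-indexed \cref{def:relay} and the $[0,T]$ formulation of the statement.

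Two points are worth stating explicitly. First, in the discrete reading, a non-decreasing \emph{bijection} of $\mathbb{N}$ (or of $\{0,\ldots,N\}$) is the identity. The literal hypothesis is therefore vacuous there, and your relaxation to non-decreasing surjections is what gives the statement content. Surjectivity forces $\phi(0)=0$ and $\phi(n+1)-\phi(n)\in\{0,1\}$. That is exactly what your induction uses when you say a step to a new index reproduces one step of the original recursion, so say it rather than leave it implicit.

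Second, in the continuous reading, the whole argument is the witness characterisation together with $\phi((s,t])=(\phi(s),\phi(t)]$, plus $\phi([0,t])=[0,\phi(t)]$ for the initial-state clause. The planned consistency check with the recursion needs care. Agreement at integer times is exact if the witness formula has $s,r$ ranging over integers. It also holds for a continuous $u$ that is monotone between consecutive samples. It fails for an arbitrary continuous $u$, which can cross a threshold between samples. This bears only on the modelling choice, not on the reparametrisation step.

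What the citation supplies beyond your argument is the general continuous-time theory. There, rate independence is usually stated for continuous non-decreasing time transformations fixing the endpoints, which need not be bijective. Your witness argument extends to that case if, for the converse direction, you take the largest preimage in $[0,t]$ of a witness time.
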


\begin{proposition}[Wiping property {\citep{Mayergoyz1991}}]
\label{prop:wiping}
Let $M_i > m_i > M_{i+1} > m_{i+1} > \ldots$ be the alternating
local maxima and minima of $u$. If $M_{i+1} > M_i$, then the
relay state induced by $M_i$ is erased — the output $\Prel_\mu[u]$
after $M_{i+1}$ is identical to what it would be had $M_i$ not
occurred.
\end{proposition}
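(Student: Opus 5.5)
The approach is to reduce the wiping property to a statement about individual relays and then integrate against $\mu$. By \cref{def:preisach}, $\Prel_\mu[u](n)$ is a $\mu$-weighted integral of the relay states $\relay{\alpha}{\beta}[u](n)$. It therefore suffices to show that the relay state of every $(\alpha,\beta)\in\T$ at any time after $M_{i+1}$ is the same for $u$ and for the modified input $\tilde u$ in which the excursion to $M_i$ is removed. Concretely, $\tilde u$ is obtained by deleting the monotone run from $m_{i-1}$ (or from the start) up through $M_i$ and the following descent to $m_i$. Equivalently, the extremum sequence becomes $\ldots, m_{i-1}, M_{i+1},\ldots$ with $m_i$ also wiped, since $m_i>m_{i-1}$ would otherwise be a non-extremal point. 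Rate-independence (\cref{prop:rate_independence}) lets us work purely with the sequence of extrema, because the relay recursion \eqref{eq:relay} on a monotone segment only depends on its endpoints.

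\textbf{Step 1 (monotone segments).} First I check that, on a monotone increasing run from $a$ to $b$, a relay with state $s$ ends in state $1$ if $\alpha\le b$ and otherwise keeps $s$, because it cannot reach $\beta$ since $\beta\le\alpha$. Decreasing runs are handled symmetrically. This follows by induction on the recursion \eqref{eq:relay}.

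\textbf{Step 2 (case analysis on $\alpha$).} I fix $(\alpha,\beta)$ and compare the states at the time $M_{i+1}$ is attained.
\begin{itemize}
\item If $\alpha\le M_{i+1}$, then Step 1 shows both trajectories end in state $1$ at $M_{i+1}$, regardless of history.
\item If $\alpha> M_{i+1}>M_i$, then the relay was not switched on by $M_i$ or by $M_{i+1}$. Its state at $M_{i+1}$ is therefore determined by the state before the $M_i$ excursion, possibly switched off by $m_i$.
\end{itemize}
In the second case I must verify that $m_i$ cannot change anything relevant. The key point is that the relay state just before the excursion is the same in $u$ and $\tilde u$, since the prefixes coincide. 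In $u$, the descent to $m_i$ can only turn the relay off (when $\beta\ge m_i$). In $\tilde u$, the earlier minimum $m_{i-1}\le m_i$ has already been processed, so any relay with $\beta\ge m_i$ was already switched off there and stays off, because $\alpha>M_i$ prevents reactivation in between. Hence the states agree at $M_{i+1}$. After that time the two inputs coincide, so the recursion yields identical states for all later $n$.

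\textbf{Step 3 (integration).} Since the relay states agree pointwise on $\T$ for all times after $M_{i+1}$, integrating against $\mu\in L^1(\T)$ gives equality of $\Prel_\mu$ outputs. The relay states are bounded, so no convergence issue arises.

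The main obstacle is making precise the phrase ``had $M_i$ not occurred'' and handling the accompanying minimum $m_i$. The statement's ordering $M_i>m_i>M_{i+1}$ is inconsistent with the hypothesis $M_{i+1}>M_i$, so I would restate the result as above, explicitly wiping the pair $(M_i,m_i)$ with $m_{i-1}\le m_i$ assumed. The edge case with no prior minimum is handled by the initial relay state.
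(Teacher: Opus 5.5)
The paper gives no proof of this proposition; it only cites \citet{Mayergoyz1991}, where the argument is geometric. There, the switched-on relays in $\T$ are bounded by a staircase interface. While the input rises, the last link of that interface is the horizontal line $\alpha = u$ moving upward, so reaching $M_{i+1} > M_i$ sweeps away the vertex with $\alpha$-coordinate $M_i$. Your case split $\alpha \le M_{i+1}$ versus $\alpha > M_{i+1}$ is the relay-by-relay form of ``below versus above that link''. For the statement you actually formulate (delete the pair $(M_i,m_i)$, assuming $m_{i-1}\le m_i$), the argument is correct. You are also right that the displayed ordering contradicts $M_{i+1} > M_i$, so some restatement is unavoidable.

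Your restatement is, however, narrower than necessary. The hypothesis $m_{i-1} \le m_i$ holds automatically for consecutive minima of the classical nested sequence of dominant extrema, but not for arbitrary local extrema. It is an artefact of deleting $m_i$ together with $M_i$. Without it your $\tilde u$ really changes the output: a relay with $\alpha > M_{i+1}$ and $\beta \in [m_i, m_{i-1})$ that is on after $m_{i-1}$ is switched off by $m_i$ in $u$ but not in $\tilde u$.

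Instead, read ``had $M_i$ not occurred'' literally: replace the stretch between $m_{i-1}$ and $m_i$ by a monotone run, keeping both endpoints. Then your Step~2 gives the result with no ordering assumption. For $\alpha > M_{i+1}$, a stretch of input lying strictly below $\alpha$ acts on $\relay{\alpha}{\beta}$ only through its minimum: it leaves the relay in state $0$ if $\beta$ is at least that minimum, and otherwise does not change it. The stretch from $m_{i-1}$ to $M_{i+1}$ has maximum $M_{i+1}$ and minimum $\min(m_{i-1}, m_i)$ whether or not the peak $M_i$ is present.

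The same remark with $u_0$ in place of $m_{i-1}$ settles the no-prior-minimum case, which is not automatic under your deletion. Keeping $u_0$ needs the analogue $u_0 \le m_i$ of your hypothesis. Deleting from the very start loses the switching-off of initially active relays with $\alpha > M_{i+1}$ and $\beta$ just above $\min(u_0, m_i)$.

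A small point in Step~1: $s$ must be the state \emph{after} the run's first value has been processed. The reason a rising run cannot switch a relay off is not that the run cannot reach $\beta$. It is that a later value $\le \beta$ would force the first value to be $\le \beta$ too, so the relay is already off.
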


\begin{proposition}[Universal approximation {\citep{Mayergoyz1991}}]
\label{prop:universal}
The set $\{\Prel_\mu : \mu \in L^2(\T)\}$ is dense in the space of
all continuous, causal, rate-independent functionals on $C([0,T])$
under the uniform topology.
\end{proposition}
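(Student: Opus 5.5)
The plan is to reduce the statement to a density result for \emph{linear} functionals on a compact space of memory states, and then invoke Riesz/Hahn--Banach. Fix $T$ and view each continuous, causal, rate-independent functional $F$ through its output at the final time, $u\mapsto F[u](T)$. Causality lets us recover the whole output path, and uniform convergence at every final time $t\le T$ is uniform in $t$ by compactness. Rate-independence (\cref{prop:rate_independence}) says $F[u](T)$ is invariant under monotone reparametrisation of $u$. So it depends only on the alternating string of extrema of $u$ together with the current value $u(T)$.

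The first step is to show that $F$ factors through the \emph{Preisach state}. This state is the set $S_u=\{(\alpha,\beta)\in\T : \relay{\alpha}{\beta}[u](T)=1\}$, equivalently the staircase interface determined by the reduced extremum stack left after wiping (\cref{prop:wiping}). Let $\mathcal K$ be the set of such interfaces restricted to the bounded triangle $\T_R=\{(\alpha,\beta)\in\T: |\alpha|,|\beta|\le R\}$, where $R$ bounds $\|u\|_\infty$. With the $L^1$ (area) metric on indicator functions, $\mathcal K$ is compact: monotone staircases have uniformly bounded variation, so Helly selection applies. The map $u\mapsto S_u$ is continuous from $C([0,T])$ into $\mathcal K$ for the uniform topology, because perturbing $u$ by $\eps$ moves the vertices of the staircase by at most $\eps$. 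Continuity of $F$ then induces a continuous function $f$ on $\mathcal K$ with $F[u](T)=f(\mathbf 1_{S_u})$.

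The second step is the approximation itself. By construction $\Prel_\mu[u](T)=\langle \mu,\mathbf 1_{S_u}\rangle_{L^2(\T_R)}$, so Preisach outputs are exactly the restrictions to $\mathcal K$ of continuous linear functionals on $L^2(\T_R)$. If $f$ extends to such a linear functional, Riesz gives a $\mu\in L^2$ with $\Prel_\mu=F$ exactly. Otherwise we approximate: we take $\mu$ as a finite sum of point masses smoothed into $L^2$, that is, a weighted finite bank of relays, and we match $f$ to within $\eps$ using a partition-of-unity argument on a finite $\eps$-net of $\mathcal K$.

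The main obstacle is precisely this last step. The Preisach family is a linear subspace, not an algebra, so Stone--Weierstrass does not apply directly: the family separates points of $\mathcal K$ but need not contain products. What I would try first is to prove that continuity together with rate-independence forces $f$ to be affine in $\mathbf 1_{S_u}$, via a congruency argument on minor loops. If that fails, the fallback is to enlarge the approximating class to finite compositions $g\circ(\Prel_{\mu_1},\dots,\Prel_{\mu_k})$ with $g$ a polynomial, for which Stone--Weierstrass on $\mathcal K$ gives density immediately.
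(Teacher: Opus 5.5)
Your first step fails. Rate-independence gives you dependence on the full alternating string of extrema. The passage from that string to the \emph{reduced} stack (the staircase $S_u$) silently assumes the wiping property, which holds for Preisach operators but does not follow from continuity, causality and rate-independence. Take $F[u](t)=\sup_{0\le r\le s\le t}\bigl(u(r)-u(s)\bigr)$, the largest past drop. It is causal, it satisfies $F[u\circ\phi]=F[u]\circ\phi$, and it is $2$-Lipschitz in the sup norm. Let $u$ be piecewise linear through $0,3,2,5$ and $v$ piecewise linear from $0$ to $5$. Both have range $[0,5]$ and end at $5$. So, from the same initial configuration, every relay $\relay{\alpha}{\beta}$ is in the same state at time $T$, and $\Prel_\mu[u](T)=\Prel_\mu[v](T)$ for every $\mu$; this equality passes to uniform limits. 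Yet $F[u](T)=1\neq 0=F[v](T)$. Hence $F$ is at sup-distance at least $1/2$ from every Preisach operator, and no factorisation $F[u](T)=f(\mathbf 1_{S_u})$ exists.

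Your second step fails even inside the wiping class, and the lemma you would try first is false. Let $\nu$ be a strictly positive bounded density on $\T_R$ and put $F=(\Prel_\nu)^2$, a continuous function of the Preisach state. For a Preisach operator, the increment along the ascending branch of a minor loop between fixed values $u_1<u_2$ is $\Delta P=\nu(\{u_1\le\beta\le\alpha\le u_2\})$, independent of history; this is congruency. For $F$ the increment is $\Delta P\,(2P_1+\Delta P)$, where $P_1$ is the value of $\Prel_\nu$ at $u_1$. The histories $0\to R\to u_2\to u_1\to u_2$ and $0\to u_2\to u_1\to u_2$, with $0<u_1<u_2<R$, give different $P_1$. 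Congruency, like wiping, is a family of equalities between output values, so it survives uniform limits, and this $F$ is not in the closure either.

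So the proposition as stated cannot be proved by any route. What \citet{Mayergoyz1991} proves is a \emph{representation} theorem: wiping-out together with congruency of minor loops is necessary and sufficient for a hysteresis operator to be of Preisach type. The paper gives no proof beyond that citation. Your fallback class $g\circ(\Prel_{\mu_1},\dots,\Prel_{\mu_k})$ proves a different theorem: density among continuous functions of the Preisach state, once the compactness and quotient-continuity claims of your first step are checked. Even that class cannot reach non-wiping functionals such as the largest-drop example.
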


\subsection{The Extremum Stack}
\label{sec:stack_bg}

A fundamental algorithmic consequence of the wiping property is
that the Preisach output depends only on the current
\emph{extremum stack}:

\begin{definition}[Extremum stack]
\label{def:stack}
The extremum stack of $u_{0:n}$ is the sequence:
\begin{equation}
  \stack_n = [(M_1, m_1), (M_2, m_2), \ldots, (M_k, m_k)]
\end{equation}
of alternating local maxima $M_1 > M_2 > \ldots > M_k$ and
local minima $m_1 > m_2 > \ldots > m_k$, stored in decreasing
order, where the pair $(M_i, m_i)$ records the $i$-th local
maximum and its subsequent local minimum.
\end{definition}

\begin{proposition}[Stack sufficiency]
\label{prop:sufficiency}
$\Prel_\mu[u](n)$ is a measurable function of $\stack_n$ alone.
In particular, two sequences $u$ and $v$ with identical extremum
stacks $\stack_n^u = \stack_n^v$ satisfy
$\Prel_\mu[u](n) = \Prel_\mu[v](n)$ for all $\mu$.
\end{proposition}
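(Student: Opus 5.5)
The plan is to show that each relay state $\relay{\alpha}{\beta}[u](n)$ is determined by $\stack_n$ and the current input $u_n$. The Preisach output is then an integral of a function of $(\alpha,\beta,\stack_n)$, and measurability follows by Fubini. I read $\stack_n$ as including the current value $u_n$ as its terminal entry (the last, possibly incomplete, pair), as is standard for the Preisach memory curve. The first step is a precise relay-state lemma: for $(\alpha,\beta)\in\T$,
\[
\relay{\alpha}{\beta}[u](n)=1 \iff \exists\, t\le n:\ u_t\ge\alpha \ \text{and}\ u_s>\beta \ \forall s\in(t,n].
\]
That is, the last "decisive" event was an upward crossing of $\alpha$. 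This follows directly from the recursion \eqref{eq:relay} by induction on $n$. I will assume the initial state is $0$, meaning the relay starts in the negatively saturated state, so the case where no crossing has occurred yields $0$.

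The second step translates this criterion into the extremum stack. I will prove, by induction on $n$, the classical staircase characterisation. The condition "$\exists t$ with $u_t\ge\alpha$ and $\min_{(t,n]}u>\beta$" holds if and only if there is an index $i$ with $M_i\ge\alpha$ and $\min(m_i,\dots,m_k,u_n)>\beta$, where minima after $M_i$ are read from the stack. The induction step is exactly the update rule producing $\stack_{n+1}$ from $\stack_n$ and $u_{n+1}$, which has three cases.
\begin{itemize}
\item[(a)] $u_{n+1}$ continues the current monotone run. Then the last entry is replaced.
\item[(b)] $u_{n+1}$ reverses the run. Then a new extremum is pushed.
\item[(c)] $u_{n+1}$ exceeds a stored $M_j$ (or falls below a stored $m_j$). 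Then all pairs from $j$ onward are popped, which is the wiping step of \cref{prop:wiping}.
\end{itemize}
In each case I check that the right-hand criterion evaluated on $\stack_{n+1}$ agrees with the left-hand one on $u_{0:n+1}$. The key observation for (c) is this: if $u_{n+1}>M_j$, then any witness $t$ using a wiped maximum $M_{j'}$ ($j'\ge j$) can be replaced by the witness $t=n+1$. Conversely, the wiped minima are all $\ge$ the retained ones or irrelevant, because a $\beta$ below them is also below the relevant later minima. Hence deletion does not change the predicate for any $(\alpha,\beta)$.

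The third step is to conclude. Define $F(\alpha,\beta;\stack)$ as the staircase indicator above. It is a finite Boolean combination of half-planes in $(\alpha,\beta)$ whose boundaries depend on the stack entries, so it is jointly Borel in $(\alpha,\beta,\stack)$. With $\mu\in L^1(\T)$, Fubini gives that $\stack\mapsto\iint F\,\mu$ is measurable. The second claim is then immediate: equal stacks give equal integrands pointwise, hence equal outputs for every $\mu$.

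I expect the main obstacle to be case (c) of the induction, namely verifying carefully that wiping removes exactly the information that is irrelevant for \emph{every} threshold pair simultaneously. Ties also need care, for instance $u_t=\alpha$ exactly or equal successive extrema, given the non-strict inequalities in \eqref{eq:relay}. I would first handle strict inequalities and then treat the boundary cases, which affect only a $\mu$-null set of $(\alpha,\beta)$ lines. Even if a strict treatment fails there, the integral identity survives because those lines have Lebesgue measure zero.
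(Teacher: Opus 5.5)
The paper does not prove this proposition. It only introduces it as ``a fundamental algorithmic consequence of the wiping property'', so the intended argument is that \cref{prop:wiping} (itself only cited) erases everything not recorded in $\stack_n$. Your proposal is correct and takes a genuinely different, self-contained route. Step~1 reduces each relay to a last-decisive-event criterion. Step~2 identifies that criterion with a staircase read off the stack, so your case~(c) \emph{proves} wiping rather than invoking it. Step~3 supplies the measurability claim, which the paper never addresses.

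What your route buys is that the hidden hypotheses become explicit. A fixed initial state is needed; negative saturation matches a stack that starts with a maximum. Including $u_n$ as the terminal entry is not just convenient but necessary: $u=(0,1,0.5)$ and $v=(0,1,0.3)$ have the same completed extrema, yet they differ on every relay with $0.3\le\beta<0.5$, $\beta\le\alpha\le 1$. Your remark that wiped minima are $\ge$ the retained ones uses the classical ordering $m_1<m_2<\cdots$ of surviving minima, not the decreasing order written in \cref{def:stack}. Likewise, your update rule (a)--(c) is the classical one rather than \cref{alg:stack} verbatim, which leaves $m_{\mathrm{last}}$ and $M_{\mathrm{last}}$ undefined when nothing is popped. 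Your criterion via $\min(m_i,\dots,m_k,u_n)$ is robust to these discrepancies, but say explicitly which convention you adopt. The paper's one-line route buys only brevity, and it inherits these inconsistencies.

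One refinement is worth making. Set $g(t)=\min_{s\in(t,n]}u_s$ and $g(n)=+\infty$. Then your step~1 says the relay is on iff $(\alpha,\beta)\in\bigcup_{t\le n}\{\alpha\le u_t,\ \beta<g(t)\}$. If $(u_t,g(t))$ is coordinatewise dominated by $(u_{t'},g(t'))$, its quadrant lies inside the other's. Hence the on-set is determined exactly, ties included, by the Pareto-maximal points: the dominant maxima paired with the minimum of $u$ after them, plus $(u_n,+\infty)$. The induction is then needed only to check that your update rule maintains this set, and the boundary cases disappear. This matters because your null-set fallback for ties is valid for $\mu\in L^1(\T)$ as stated, but it does not cover atomic measures such as $\mu_\Delta$ in \cref{prop:pal_preisach}. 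That is exactly where the paper later invokes this proposition (\cref{prop:no_copy}).
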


\begin{algorithm}[t]
\caption{Extremum Stack Update}
\label{alg:stack}
\begin{algorithmic}[1]
\Require Current stack $\stack_n$, new observation $u_{n+1}$,
         previous observation $u_n$
\Ensure Updated stack $\stack_{n+1}$
\If{$u_{n+1} > u_n$} \Comment{Input rises: update maxima}
  \While{$\stack_n \neq \emptyset$ \textbf{and}
         $M_{\mathrm{top}} < u_{n+1}$}
    \State $m_{\mathrm{last}} \leftarrow m_{\mathrm{top}}$
    \Comment{Save last minimum before popping}
    \State $\mathrm{pop}(\stack_n)$ \Comment{Wiping}
  \EndWhile
  \State $\mathrm{push}(\stack_n, (u_{n+1}, m_{\mathrm{last}}))$
  \Comment{New maximum with last surviving minimum}
\ElsIf{$u_{n+1} < u_n$} \Comment{Input falls: update minima}
  \While{$\stack_n \neq \emptyset$ \textbf{and}
         $m_{\mathrm{top}} > u_{n+1}$}
    \State $M_{\mathrm{last}} \leftarrow M_{\mathrm{top}}$
    \Comment{Save last maximum before popping}
    \State $\mathrm{pop}(\stack_n)$ \Comment{Wiping}
  \EndWhile
  \State $\mathrm{push}(\stack_n, (M_{\mathrm{last}}, u_{n+1}))$
  \Comment{New minimum with last surviving maximum}
\EndIf
\State \Return $\stack_{n+1} \leftarrow \stack_n$
\end{algorithmic}
\end{algorithm}

\begin{proposition}[Stack update complexity]
\label{prop:complexity}
\Cref{alg:stack} runs in amortised $O(1)$ time per step —
each element is pushed and popped at most once — and requires
$O(k)$ space where $k \leq n$ is the current stack depth.
The relay state $\relay{\alpha}{\beta}[u](n)$ can be read from
$\stack_n$ in $O(\log k)$ time by binary search.
\end{proposition}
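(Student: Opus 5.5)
The proposition has three parts: an amortised $O(1)$ bound per step, an $O(k)$ space bound, and an $O(\log k)$ relay read-out. I would prove each separately, in that order.

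\textbf{Amortised time.} I would use the aggregate (or potential) method with potential $\Phi_n = |\stack_n|$, the stack depth.
\begin{itemize}
\item In a single call to \cref{alg:stack}, at most one branch executes.
\item That branch runs a while-loop of $p_n \ge 0$ pops and then exactly one push. Each pop and each comparison costs $O(1)$.
\item The actual cost is therefore $c_n = O(1 + p_n)$.
\item The potential changes by $\Delta\Phi = 1 - p_n$, so the amortised cost $c_n + \Delta\Phi$ is $O(1)$.
\end{itemize}
Equivalently, summing over $n$ steps: every pop removes an entry created by an earlier push, at most one push occurs per step, and so the total number of pops is at most $n$. The total work is thus $O(n)$, which is $O(1)$ amortised, matching ``each element is pushed and popped at most once''. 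One degenerate case needs care: $m_{\mathrm{last}}$ (or $M_{\mathrm{last}}$) is undefined when no pop occurs. I would fix the convention that it defaults to the current top's partner value, or to $u_n$ on an empty stack. This is $O(1)$ bookkeeping and does not affect the count.

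\textbf{Space.} The stack holds $k$ pairs of reals, which is $O(k)$ words. Since at most one push happens per step, $k \le n+1$.

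\textbf{Relay read-out.} This is the substantive part. I would first establish an invariant by induction on $n$ using \cref{alg:stack}: the maxima $M_1>\dots>M_k$ and minima $m_1>\dots>m_k$ are strictly decreasing, with interleaving $M_i > m_i$. Then I would characterise the relay state. Let $u_n$ be the current value. If $u_n \ge \alpha$ the output is $1$, and if $u_n \le \beta$ it is $0$; both are $O(1)$ checks. Otherwise $\beta<u_n<\alpha$, and by the wiping property (\cref{prop:wiping}) the state is set by the most recent surviving extremum that left the dead band. Scanning the stack from the top, I would compare the index of the first maximum with $M_i \ge \alpha$ against the index of the first minimum with $m_j \le \beta$, and the more recent of the two determines the state. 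If neither exists, the initial state applies. By monotonicity of the two sequences, each of these indices is a threshold index found by binary search over the stack, stored as an array. This gives two searches of $O(\log k)$ each, and the comparison is $O(1)$.

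The main obstacle is proving the correctness of this characterisation, i.e.\ that the surviving stack entries faithfully encode which extremum last crossed $\alpha$ or $\beta$. I would handle it by induction alongside the update. Popped extrema are dominated by the new value, so any relay switch they caused is reproduced by the new value, or by the later extremum that survives. Hence the stack preserves exactly the switching history relevant to every $(\alpha,\beta)\in\T$.
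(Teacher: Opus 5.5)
Your amortised-time and space arguments are correct and match the paper's own one-line justification (each stack entry is pushed once and popped at most once). The paper gives nothing for the read-out beyond ``by binary search''.

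The gap is in the invariant you plan to prove for the read-out. You propose to show by induction over \cref{alg:stack} that $M_1>\dots>M_k$ \emph{and} $m_1>\dots>m_k$. The falling branch refutes the second chain directly: its loop exits only once $m_{\mathrm{top}}\le u_{n+1}$ (or the stack is empty), and then $u_{n+1}$ is pushed as the new top minimum. So the top minimum is never smaller than the one beneath it. Strictness also fails under your own conventions ($m_{\mathrm{last}}:=u_0$ on the empty stack, $M_{\mathrm{last}}:=M_{\mathrm{top}}$ when nothing is popped): the input $0,10,5$ produces $[(10,0),(10,5)]$. What \cref{alg:stack} actually maintains, and what the wiping property (Madelung's rule) demands, is the nested ordering $M_1\ge\dots\ge M_k\ge u_n\ge m_k\ge\dots\ge m_1$. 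You took your ordering from \cref{def:stack}, but that definition is inconsistent with \cref{alg:stack} and cannot serve as the inductive invariant.

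This matters for the argument itself, not just the bookkeeping. Under nesting, both $\{i: M_i\ge\alpha\}$ and $\{j: m_j\le\beta\}$ are initial segments of the stack, so each binary search looks for the last index of a prefix. Under your ordering the second set would be a final segment, and your rule would collapse to inspecting the top entry. Your correctness sketch also silently needs nesting. In the rising branch the popped minima are \emph{not} dominated by $u_{n+1}$. Their effect on relays with $\alpha>u_{n+1}$ survives only through the retained $m_{\mathrm{last}}$, which is the smallest of them precisely because minima increase towards the top.

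Once the invariant is corrected the proof goes through, and your index comparison is sound despite the repeated values. With your conventions each entry consists of two consecutive dominant extrema $e_j,e_{j+1}$. Hence the last entry with $M_i\ge\alpha$ is the one starting at the last dominant maximum $\ge\alpha$, and likewise for minima, so the two indices never coincide.
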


\subsection{Transformer Attention}
\label{sec:transformer_bg}

For completeness we recall standard definitions.
Given query $q \in \R^d$, keys $K \in \R^{n \times d}$ and
values $V \in \R^{n \times d}$, \emph{scaled dot-product attention}
is:
\begin{equation}
  \Attn(q, K, V) = \mathrm{softmax}
  \!\left(\frac{q K^\top}{\sqrt{d}}\right) V.
  \label{eq:attention}
\end{equation}
\emph{Hard attention} replaces softmax with argmax:
$a_j = \mathbf{1}[j = \argmax_i q \cdot k_i]$.

We use the transformer model of \citet{Perez2021}:
an $L$-layer encoder-decoder with hard attention,
sinusoidal position encoding, layer normalisation,
and residual connections.

\section{Preisach Attention Layer}
\label{sec:pal}

\subsection{Definition}

\begin{definition}[Preisach Attention Layer]
\label{def:pal}
Let $u_{0:n} \in \R^{n+1}$ be a scalar input sequence and
$\mu = \{\mu_{ij}\}_{i \geq j} \in \R^{L \times L}$ a discretised
measure on the grid
$\mathcal{G}_L = \{(\alpha_i, \beta_j) : i \geq j,\,
i,j \in \{1,\ldots,L\}\}$ with $\alpha_i = i\Delta$,
$\beta_j = j\Delta$.
The \emph{Preisach Attention Layer} is:
\begin{equation}
  \PAL_\mu(u_{0:n}) =
  \sum_{i \geq j} \mu_{ij} \cdot
  \relay{\alpha_i}{\beta_j}[u](n) \in \R.
  \label{eq:pal}
\end{equation}
\end{definition}

\begin{definition}[Multi-head PAL]
\label{def:mpal}
Let $x_{0:n} \in \R^{(n+1) \times d}$ be a vector-valued sequence.
\emph{Multi-head PAL} with $H$ heads is:
\begin{equation}
  \MPAL(x_{0:n}) = \sum_{h=1}^{H} W_O^{(h)} \cdot
  \PAL_{\mu^{(h)}}\!\left(W_I^{(h)} x_{0:n}\right)
  \in \R^d,
  \label{eq:mpal}
\end{equation}
where $W_I^{(h)} \in \R^{1 \times d}$ projects to a scalar signal
and $W_O^{(h)} \in \R^d$ projects back to the model dimension.
The scalar projection is a simplifying assumption of the present
work; Section~\ref{sec:conclusion} discusses the vector extension
(vPAL) where $W_I^{(h)} \in \R^{2 \times d}$ and the measure
$\mu$ is defined over the three-dimensional space
$(\alpha, \beta, \theta)$ following \citet{Frydrych2019}.
\end{definition}

\begin{definition}[PAL-Transformer]
\label{def:pal_transformer}
An $L$-layer \emph{PAL-Transformer} is defined by:
\begin{align}
  z_n^{(0)} &= x_n + \mathrm{PE}(n), \\
  z_n^{(l)} &= \mathrm{LN}\!\left(
    z_n^{(l-1)} + \MPAL^{(l)}(z_{0:n}^{(l-1)})\right), \\
  z_n^{(l)} &= \mathrm{LN}\!\left(
    z_n^{(l)} + \mathrm{MLP}^{(l)}(z_n^{(l)})\right),
\end{align}
where $\mathrm{PE}$ is sinusoidal position encoding,
$\mathrm{LN}$ is layer normalisation,
and $\mathrm{MLP}^{(l)}$ is a two-layer network with ReLU.
\end{definition}

\subsection{Connection to Classical Preisach Operator}

\begin{proposition}[PAL as discretised Preisach]
\label{prop:pal_preisach}
$\PAL_\mu(u_{0:n})$ is the discretised Preisach operator
$\Prel_{\mu_\Delta}[u](n)$ where $\mu_\Delta = \sum_{i \geq j}
\mu_{ij} \delta_{(\alpha_i, \beta_j)}$ is an atomic measure
concentrated on the grid $\mathcal{G}_L$.
As $L \to \infty$ and $\Delta \to 0$ with $L\Delta = C$ fixed,
$\PAL_\mu \to \Prel_\mu$ uniformly on $C([0,C])$
by \cref{prop:universal}.
\end{proposition}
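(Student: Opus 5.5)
The proof splits into an exact identity and a limit statement, and I will treat them separately.

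\emph{The identity.} Substitute the atomic measure $\mu_\Delta = \sum_{i \geq j} \mu_{ij}\,\delta_{(\alpha_i,\beta_j)}$ into \cref{def:preisach}, reading the integral as integration against a measure rather than against a density. Integrating the function $(\alpha,\beta)\mapsto \relay{\alpha}{\beta}[u](n)$ against a Dirac mass $\delta_{(\alpha_i,\beta_j)}$ returns its value at that point. Linearity over the finite sum then gives
\begin{equation*}
\Prel_{\mu_\Delta}[u](n) = \sum_{i\ge j}\mu_{ij}\,\relay{\alpha_i}{\beta_j}[u](n),
\end{equation*}
which is exactly \eqref{eq:pal}. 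The only care needed is the diagonal $i=j$, where $\alpha=\beta$ and the dead band is empty. There \cref{def:relay} is still well defined, because the first two cases are checked in order, so no special handling is required.

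\emph{The limit.} I will not appeal to \cref{prop:universal} directly, since density of $\{\Prel_\mu\}$ says nothing about convergence of grid discretisations. Instead I will fix $\mu\in L^1(\T)$ restricted to $\T_C=\T\cap[0,C]^2$ and take $\mu_{ij} = \int_{Q_{ij}}\mu$, the mass of the grid cell $Q_{ij}$ of side $\Delta$ cornered at $(\alpha_i,\beta_j)$. For any input $u$ with values in $[0,C]$, the relays inside one cell agree whenever the cell does not meet the \emph{interface} determined by the extremum stack $\stack_n$ from \cref{prop:sufficiency}. This interface is the staircase boundary separating switched-on from switched-off relays, and the wiping property (\cref{prop:wiping}) guarantees it is a monotone staircase. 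The error is therefore bounded by
\begin{equation*}
\bigl|\PAL_\mu(u_{0:n}) - \Prel_\mu[u](n)\bigr| \le \int_{S_\Delta(u)}|\mu|,
\end{equation*}
where $S_\Delta(u)$ is the union of cells meeting the staircase.

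\emph{Main obstacle.} The difficulty is making this bound uniform in $u$. A monotone staircase in $[0,C]^2$ meets at most $O(C/\Delta)$ cells regardless of how many steps it has, so $|S_\Delta(u)| = O(C\Delta)\to0$ uniformly. Absolute continuity of the integral of $|\mu|$ then gives $\sup_u \int_{S_\Delta(u)}|\mu|\to0$. Continuous inputs $u\in C([0,C])$ are handled the same way, via the standard extension of relays to continuous time. This shows that the convergence is in fact uniform, as claimed.
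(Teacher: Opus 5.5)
Your argument is correct, and it follows a different route from the paper. The paper gives no separate proof: it treats the identity as definitional and attributes the limit to \cref{prop:universal}. As you point out, that citation does not establish the claim. Density of $\{\Prel_\mu\}$ among rate-independent functionals says nothing about whether the grid discretisation of a \emph{fixed} $\mu$ converges to $\Prel_\mu$.

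Your argument supplies what is missing. The cell-mass choice $\mu_{ij}=\int_{Q_{ij}}\mu$, combined with the staircase-interface bound, gives an explicit modulus: $\sup_{u,n}|\PAL_\mu(u_{0:n})-\Prel_\mu[u](n)|\le\sup\{\int_A|\mu| : |A|\le cC\Delta\}$. This is $O(C\Delta\,\|\mu\|_\infty)$ for bounded densities. Your version also makes explicit the choices the statement leaves unstated: how $\mu_{ij}$ is tied to $\mu$, that $\mu$ must be supported in $\T\cap[0,C]^2$, that inputs take values in $[0,C]$, and that the integral in \cref{def:preisach} must be read against a measure for the atomic $\mu_\Delta$ to make sense.

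Two small points to tighten:
\begin{enumerate}
  \item Derive the monotonicity of the interface directly from \cref{def:relay}, not from the wiping property. If $\relay{\alpha}{\beta}[u](n)=1$, then $\relay{\alpha'}{\beta'}[u](n)=1$ for all $(\alpha',\beta')\in\T$ with $\alpha'\le\alpha$ and $\beta'\le\beta$, so the on-set is a down-set. This needs an initial relay configuration that is itself a down-set (e.g.\ all relays off), which \cref{def:relay} leaves unspecified.
  \item Take $S_\Delta(u)$ to be the union of \emph{closed} cells meeting the interface. Otherwise a cell whose sampling corner $(\alpha_i,\beta_j)$ lies on the interface, while its interior does not, is missed. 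The count is still $O(L)$.
\end{enumerate}
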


\subsection{Relationship to Standard Attention}

\begin{proposition}[Attention as continuous relaxation of PAL]
\label{prop:attention_pal}
Standard softmax attention \eqref{eq:attention} is a
continuous relaxation of PAL in which:
\begin{enumerate}
  \item The binary relay $\relay{\alpha}{\beta}[u](n) \in \{0,1\}$
    is replaced by the soft weight
    $a_j = \mathrm{softmax}(q \cdot k_j / \sqrt{d}) \in (0,1)$,
  \item The measure $\mu(\alpha,\beta)$ is replaced by
    the value matrix $V$,
  \item \emph{Rate-independence is broken:} attention depends on
    absolute position through $q \cdot k_j$, while PAL depends
    only on the sequence of extrema.
\end{enumerate}
\end{proposition}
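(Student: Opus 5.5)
The plan is to read \cref{prop:attention_pal} as a structural correspondence. I will exhibit a term-by-term matching between the PAL sum \eqref{eq:pal} and the attention sum \eqref{eq:attention}, and then check each of the three items separately.

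\textbf{Putting the two sums in a common form.} First I rewrite the PAL output as a weighted sum over an index set. The relay states $\relay{\alpha_i}{\beta_j}[u](n)\in\{0,1\}$ act as gating weights, and the coefficients $\mu_{ij}$ act as the values being aggregated. Attention has exactly this shape: $\sum_k a_k v_k$, with gates $a_k$ and values $v_k$. For item~1 I identify the index $k$ with a grid cell $(i,j)$ and the gate $\relay{\alpha_i}{\beta_j}$ with $a_k$.

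To justify calling this a \emph{relaxation} rather than just an analogy, I will show a limiting relation in one direction. Take hard attention (the argmax form recalled in \cref{sec:transformer_bg}) as the $\tau\to 0$ limit of $\mathrm{softmax}(\cdot/\tau)$. This gives $\{0,1\}$-valued selections, and the binary relay is one such selection. So PAL lies in the closure of a family of attention-like maps in which the softmax gates are replaced by $\{0,1\}$-valued gates.

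Item~2 then follows by reading off coefficients: under this matching, $\mu_{ij}$ plays the role of $v_k$, and for scalar output $V$ collapses to the vector $(\mu_{ij})$.

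\textbf{Item 3: rate-independence is broken.} PAL is rate-independent by \cref{prop:pal_preisach} together with \cref{prop:rate_independence}, or directly by \cref{prop:sufficiency}. For attention I will give an explicit counterexample: a monotone reparametrisation $u\circ\phi$ that duplicates a token. It leaves the extremum stack unchanged, but it changes the softmax normalisation, since a repeated key doubles its weight. With sinusoidal PE it also changes $q\cdot k_j$. Hence $\Attn$ is not invariant while $\PAL$ is.

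\textbf{Main obstacle.} The hard step is item~1. A relay's state depends on the whole history through the recursion \eqref{eq:relay}, not on a single query--key score. A faithful "relaxation" therefore needs a smoothing of the hysteretic recursion itself, not just of a pointwise threshold.

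My first attempt will be to replace the recursion by a soft relay $s_n=\sigma\big((u_n-\alpha)/\tau\big)+\big(1-\sigma((u_n-\alpha)/\tau)-\sigma((\beta-u_n)/\tau)\big)s_{n-1}$ and show it tends to $\relay{\alpha}{\beta}$ as $\tau\to 0$. I would then note that the softmax over past positions with keys encoding the last threshold crossing recovers this gate up to normalisation.

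If this cannot be made exact, I will state item~1 as an interpretive correspondence, with the formal content carried by items~2 and~3.
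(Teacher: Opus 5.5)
The paper states \cref{prop:attention_pal} without any proof; it is offered as an interpretive dictionary. So the real question is whether your attempt to make item~1 rigorous holds, and its central step does not. You argue that because hard attention is the $\tau\to0$ limit of $\mathrm{softmax}(\cdot/\tau)$, ``the binary relay is one such selection'' and PAL lies in the closure of attention-like maps.

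Every softmax output lies in the probability simplex. Its limits are therefore one-hot vectors, or uniform vectors on a tie set, and the only $\{0,1\}$-valued points of the simplex are one-hot. The relay vector $\bigl(\relay{\alpha_i}{\beta_j}[u](n)\bigr)_{i\geq j}$ has an input-dependent number of ones. By \eqref{eq:relay}, every relay with $\alpha_i\le u_n$ is on at time $n$, and all relays are off when $u_n<\beta_1$. Relay gates are thus softmax limits only when exactly one relay is active. The missing factor, the number of active relays, depends on $u$, so a fixed $V=(\mu_{ij})$ cannot absorb it.

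As phrased, the closure claim is also circular: a family ``in which the softmax gates are replaced by $\{0,1\}$-valued gates'' contains PAL by definition. The substantive version, that relay gates are limits of $\mathrm{softmax}(q\cdot k_j/\tau)$, is exactly what fails. Relabelling sequence positions as grid cells also leaves no query--key score that produces $\relay{\alpha_i}{\beta_j}[u](n)$ other than the relay state itself. That gate is fixed by the whole history, not by a pair $(q_n,k_j)$.

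Your rescue does not close the gap either. Your soft relay is exactly the relaxation of \cref{rem:backprop}. It tends to $\relay{\alpha}{\beta}$ as $\tau\to0$ provided no $u_r$ lies exactly on a threshold; at $u_n=\alpha>\beta$ the limit is $\tfrac12(1+s_{n-1})$. Unrolling gives $s_n=\sum_{t\le n}w_t$ plus an initial-state term, where
$w_t=\sigma\bigl((u_t-\alpha)/\tau\bigr)\prod_{t<r\le n}\bigl(1-\sigma((u_r-\alpha)/\tau)-\sigma((\beta-u_r)/\tau)\bigr)$.
Each weight depends on the entire segment $(t,n]$. This is a gated linear recurrence, not $\exp(q_n\cdot k_t)$ with $k_t$ computed from $x_t$ and $t$. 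A key cannot encode ``the \emph{last} crossing'', because that is relative to the query time. And since $\sum_t w_t=s_n$ vanishes when the relay is off, ``up to normalisation'' discards exactly the information you want.

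What is true is something different. Take one hard-attention head with key $k_t=\bigl(\mathbf{1}[u_t\notin(\beta,\alpha)],\,t\bigr)$, query $q_n=(n+1,1)$ and value $\mathbf{1}[u_t\geq\alpha]$. It attends to the most recent threshold crossing and outputs $\relay{\alpha}{\beta}[u](n)$ exactly (with initial state $0$), and its softmax version converges to it. There, however, the relay is the head's \emph{output}, not an attention weight, and $\mu$ acts as an output projection over one head per grid cell, not as $V$. So items~1--2 as stated are still not derived. Your fallback (items~1--2 as interpretive, formal content in item~3) is the right outcome, and it matches the status the paper gives the proposition.

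Your duplication argument for item~3 is sound. It is more robust than the paper's only related argument, in the proof of \cref{thm:rate_independence}, which attributes the failure to PE: without PE, attention is permutation-invariant, yet duplication still changes its output. Justify PAL's invariance directly via \eqref{eq:relay} or \cref{alg:stack} (a repeated input changes no relay and no stack entry), or via \cref{prop:sufficiency}. Do not use \cref{prop:rate_independence}: it concerns non-decreasing bijections of $[0,T]$, and on a discrete index set such a map is the identity, so it cannot duplicate a token.
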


\section{Turing Completeness}
\label{sec:tc}

We prove that a single-layer PAL-Transformer is Turing-complete
by showing it can simulate an arbitrary two-stack pushdown automaton
(2-PDA), which is known to be equivalent to a Turing machine
\citep{Hopcroft1979}.

\subsection{Encoding Alphabet Symbols in the Extremum Stack}

\begin{definition}[Cantor-depth encoding]
\label{def:encoding}
Let $\Gamma = \{a_1, \ldots, a_k\}$ be a stack alphabet,
$D_{\max}$ a depth bound, $\Delta > 0$ a resolution parameter,
and $C_{\max} = D_{\max}(k+1)\Delta$.
The \emph{Cantor-depth encoding} is:
\begin{equation}
  \code{a_i}{d} = C_{\max} - [d(k+1) + i]\Delta,
  \quad i \in \{1,\ldots,k\},\; d \in \{0,\ldots,D_{\max}\}.
  \label{eq:encoding}
\end{equation}
\end{definition}

\begin{lemma}[Strict monotonicity]
\label{lem:monotone}
For all $i, i' \in \{1,\ldots,k\}$ and $d \geq 0$:
$\code{a_{i'}}{d+1} < \code{a_i}{d}$.
\end{lemma}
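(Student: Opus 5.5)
The claim follows by computing the difference of the two encodings directly from \eqref{eq:encoding}. The constant $C_{\max}$ cancels, and what remains is
\begin{equation*}
  \code{a_i}{d} - \code{a_{i'}}{d+1}
  = \bigl[(d+1)(k+1) + i'\bigr]\Delta - \bigl[d(k+1) + i\bigr]\Delta
  = \bigl[(k+1) + i' - i\bigr]\Delta .
\end{equation*}
The lemma is therefore equivalent to the positivity of $(k+1) + i' - i$, since $\Delta > 0$ by assumption in \cref{def:encoding}.

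For positivity we use only the index ranges $i, i' \in \{1,\ldots,k\}$. These give $i \leq k$ and $i' \geq 1$, so $i' - i \geq 1 - k$. Hence $(k+1) + i' - i \geq 2$, and the difference is at least $2\Delta > 0$, which gives the strict inequality. Choosing block width $k+1$ rather than $k$ is exactly what leaves this slack of one empty slot between consecutive depth blocks.

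There is no real obstacle; the only point to check is the range of $d$. If both codes must lie in the admissible range of \cref{def:encoding}, we need $d+1 \leq D_{\max}$. The algebra itself holds for every integer $d \geq 0$, so we will note that the inequality is purely arithmetic and does not depend on this bound.

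We will also record the companion fact that within a fixed depth $d$ the map $i \mapsto \code{a_i}{d}$ is strictly decreasing with step $\Delta$. Combined with the lemma, this shows that the codes are strictly decreasing in the lexicographic order on $(d,i)$, which is what the later stack simulation will need.
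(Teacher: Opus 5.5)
Your proof is correct and follows the paper's argument: subtract the two encodings, cancel $C_{\max}$ to get $[(k+1)+i'-i]\Delta$, and bound this below using the index ranges. Your bound of $2\Delta$ is sharper than the paper's stated $\Delta$, which comes from the cruder estimate $i'-i \geq -k$; either bound gives the strict inequality.
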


\begin{proof}
$\code{a_i}{d} - \code{a_{i'}}{d+1}
= [(d+1)(k+1) + i' - d(k+1) - i]\Delta
= [(k+1) + i' - i]\Delta
\geq [(k+1) - k]\Delta = \Delta > 0$.
\end{proof}

\begin{lemma}[Stack operations via signal generation]
\label{lem:stack_ops}
Under encoding \eqref{eq:encoding}, the three stack operations
are realised by the following signal emissions,
without triggering the wiping property erroneously:
\begin{align}
  \mathrm{PUSH}(a_i)\colon\quad
  &u_{n+1} = \code{a_i}{d+1} + \eps,\quad
   u_{n+2} = \code{a_i}{d+1} - \eps, \label{eq:push} \\
  \mathrm{POP}\colon\quad
  &u_{n+1} = M_{\mathrm{top}-1} + 2\eps, \label{eq:pop} \\
  \mathrm{TOP}\colon\quad
  &\text{read } a_i = \Decode(M_{\mathrm{top}}), \quad
   \text{no signal emitted.} \label{eq:top}
\end{align}
\end{lemma}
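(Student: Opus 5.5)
The plan is to set up a representation invariant linking the simulated pushdown stack to the extremum stack $\stack_n$, and then check that each of the three signal emissions preserves it. Everything runs through \cref{alg:stack}.

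\emph{Invariant.} Suppose the simulated stack is $a_{i_1} a_{i_2}\cdots a_{i_d}$, listed bottom to top. We require that the extremum stack $\stack_n$ has exactly $d$ entries. The $r$-th entry from the bottom has maximum $M_r = \code{a_{i_r}}{r} + \eps$ and minimum $m_r = \code{a_{i_r}}{r} - \eps$. The current input $u_n$ equals the last emitted minimum.

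We fix $0<\eps<\Delta/4$. By \cref{lem:monotone}, consecutive codes then differ by at least $\Delta > 4\eps$. Hence the $\eps$-windows around distinct codes are disjoint and strictly ordered by depth, which gives
\[
M_1 > m_1 > M_2 > \cdots > M_d > m_d .
\]
This is the interlacing ordering required by \cref{def:stack}.

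\emph{Decoding.} TOP emits no signal, so the stack is unchanged and the invariant is trivially preserved. It remains to define $\Decode$: round $(C_{\max} - M_{\mathrm{top}})/\Delta$ to the nearest integer $d(k+1)+i$ and read off $i$. Since $\eps<\Delta/2$, the rounding is exact.

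\emph{PUSH.} The current input is $u_n = m_d$. The first emission $u_{n+1} = \code{a_i}{d+1} + \eps$ is strictly below $m_d - \eps$ by \cref{lem:monotone}, so the input falls. The falling branch of \cref{alg:stack} then pops only entries with $m_{\mathrm{top}} > u_{n+1}$. Here is a problem: a literal reading would pop $(M_d, m_d)$ and merge pairs.

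I would resolve it in one of two ways:
\begin{itemize}
\item reinterpret the pair semantics so that a turning point is recorded only on direction reversal, which is the classical Preisach reading; or
\item check directly that the pair $u_{n+1}, u_{n+2}$ creates a new local maximum $u_{n+1}$ and a new local minimum $u_{n+2}$, both below $m_d$.
\end{itemize}
In classical wiping, a new maximum erases only earlier maxima that lie \emph{below} it. Since $u_{n+1} < m_d < M_d$, nothing is erased. Likewise $u_{n+2} < u_{n+1}$ erases only minima above it whose partner maxima are also exceeded, and there are none. The result is a new top entry $(\code{a_i}{d+1}+\eps,\ \code{a_i}{d+1}-\eps)$ at depth $d+1$, so the invariant holds.

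\emph{POP.} We emit $u_{n+1} = M_{d-1} + 2\eps$; I read the statement's $M_{\mathrm{top}-1}$ as this $M_{d-1}$. This value exceeds both $M_d$ and $M_{d-1}$, so by \cref{prop:wiping} it wipes the pairs at depths $d$ and $d-1$. It stays below $m_{d-2}$, since $M_{d-1} + 2\eps < m_{d-2}$ by the $\Delta>4\eps$ separation, so nothing deeper is touched. That overshoot destroys the entry at depth $d-1$. I would therefore either reinterpret the emission as $u_{n+1} = M_d + 2\eps$, which wipes only the top pair and leaves the stack at depth $d-1$, or append a re-emission of the depth-$(d-1)$ symbol, which is a PUSH at depth $d-2$ and is legal because $\Decode(M_{d-1})$ is known. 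Either way, the invariant is restored after the operation.

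\emph{Main obstacle.} The hardest step is reconciling the literal pop and push semantics of \cref{alg:stack} with the classical wiping rule. In particular, one must show that the first $\eps$-offset emission of a PUSH does not pop the existing top pair, and that the POP overshoot wipes exactly one level. The fix I would try first is to prove the invariant against the classical alternating-extrema semantics via \cref{prop:wiping}, using the margin $\Delta > 4\eps$ from \cref{lem:monotone} as the sole quantitative ingredient.
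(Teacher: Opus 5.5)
Your POP analysis is correct, and it catches a real error in the paper's proof. The paper asserts that $u_{n+1}=M_{\mathrm{top}-1}+2\eps$ removes only the top pair, even though $u_{n+1}>M_{\mathrm{top}-1}$; the appendix argument for \cref{lem:pop_one} contradicts itself on exactly this inequality. Your rounding decoder is also the right one: the paper's floor is off by one when $M_{\mathrm{top}}=\code{a_i}{d}+\eps$. The paper's PUSH argument only checks $u_{n+1}<M_{\mathrm{top}}$, i.e.\ that the rising branch does not fire, so the obstacle you flag is genuine.

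The gap is in your resolution of that obstacle. Your claim that $u_{n+1}=\code{a_i}{d+1}+\eps$ is a new local maximum is false. Your invariant has $u_n=m_d$, and \cref{lem:monotone} gives $m_d>u_{n+1}>u_{n+2}$, so PUSH simply continues a descent. There is no direction reversal, so the ``classical'' reading records no new pair either.

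This cannot be patched by a more careful verification. Successive PUSHes emit the strictly decreasing signal $\code{a_{i_1}}{1}+\eps>\code{a_{i_1}}{1}-\eps>\code{a_{i_2}}{2}+\eps>\cdots>\code{a_{i_d}}{d}-\eps$. By \cref{def:relay}, the state of a relay after a monotone segment depends only on its state before the segment and on the segment's first and last values. Hence every PAL head is blind to $a_{i_2},\ldots,a_{i_{d-1}}$, and no POP can recover the symbol beneath the top. Under the literal falling branch of \cref{alg:stack} it is worse still: granting your invariant, every pair is popped, since all stored minima exceed $u_{n+1}$, and the stack collapses to $(M_1,u_{n+1})$.

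The root cause is that your invariant $M_1>m_1>M_2>\cdots>M_d>m_d$ cannot be realised by any signal, because a retained maximum following the retained minimum $m_d$ must lie above $m_d$. Your repaired POP $u_{n+1}=M_d+2\eps$ also fails. The rising branch pops $(M_d,m_d)$ but then pushes $(M_d+2\eps,m_d)$, so the depth does not drop.

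What is missing is an encoding that respects the nested geometry of retained extrema: $M_1>M_2>\cdots$, $m_1<m_2<\cdots$, with every minimum below every maximum.
\begin{itemize}
\item Keep the Cantor-depth codes for the maxima; \cref{lem:monotone} already makes them decreasing in depth.
\item Draw the minima from a separate increasing ladder lying below all codes.
\item PUSH is then a rise to $\code{a_i}{d+1}$ followed by a fall to $m_{d+1}$.
\item POP is a rise to a value strictly between $M_{d+1}$ and $M_d$, followed by a return to $m_d$. By return-point memory, this restores exactly the relay configuration that preceded the PUSH.
\end{itemize}
This changes the emissions \eqref{eq:push} and \eqref{eq:pop}. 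What you would prove is a corrected lemma, not the one stated.
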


\begin{proof}
\textbf{PUSH:} By \cref{lem:monotone},
$\code{a_i}{d+1} < \code{a_j}{d} = M_{\mathrm{top}}$
for all $j$, so $u_{n+1} = \code{a_i}{d+1} + \eps < M_{\mathrm{top}}$.
The wiping condition ($u_{n+1} > M_{\mathrm{top}}$) is not triggered;
a new pair $(M_{d+1}, m_{d+1})$ is added to the stack top.

\textbf{POP:} Since $M_{\mathrm{top}-1} > M_{\mathrm{top}}$
(strict monotonicity of stack), $u_{n+1} > M_{\mathrm{top}}$
triggers wiping, removing $(M_{\mathrm{top}}, m_{\mathrm{top}})$.
The signal $u_{n+1}$ itself satisfies
$u_{n+1} = M_{\mathrm{top}-1} + 2\eps < M_{\mathrm{top}-2}$
(if depth $\geq 3$), so only one pair is removed.

\textbf{TOP:} $\Decode(M_{\mathrm{top}}) = a_i$ where
$i = \lfloor(C_{\max} - M_{\mathrm{top}})/\Delta\rfloor \mod (k+1)$,
which is a deterministic function of $M_{\mathrm{top}}$
computable by an MLP.
\end{proof}

\subsection{Main Theorem}

\begin{definition}[Autoregressive PAL-Transformer]
\label{def:autoregressive}
The Turing-completeness result of \cref{thm:tc} is stated for the
\emph{autoregressive} (closed-loop) operational regime of the
PAL-Transformer, distinct from the parallel encoder of
\cref{def:pal_transformer}.
In the autoregressive regime:
\begin{enumerate}
  \item At step $n$, the PAL-Transformer receives the current
    input token $x_n$ (which may itself be a function of previous
    outputs) and produces output $z_n$.
  \item A component of $z_n$ — specifically, the signal channels
    $u^{(1:4)}_{n+1}$ — is appended as the next input token
    $x_{n+1}$, creating a closed generative loop.
  \item The extremum stacks $\stack^{(1:4)}_n$ persist across steps
    as the sole recurrent state; no other hidden state is maintained.
\end{enumerate}
This regime corresponds to standard autoregressive language model
inference. The result is a \emph{generation} theorem (unbounded
computation traces can be produced), not merely a
\emph{verification} theorem (pre-encoded traces classified).
The parallel encoder definition (\cref{def:pal_transformer}) is
used for training and for the expressiveness results of
\cref{sec:separation}, where the full input sequence is available.
\end{definition}

\begin{lemma}[Multi-head PAL decodes the extremum stack]
\label{lem:decode_stack}
For any extremum stack $\stack_n$ of depth at most $k$ over
a discretised alphabet of size $K = (k+1)|\Gamma|$,
there exist measures $\mu^{(1)},\ldots,\mu^{(H)}$ with
$H = \lceil \log_2 K \rceil$ heads such that the map
\begin{equation}
  \phi: \stack_n \;\mapsto\;
  \bigl(\PAL_{\mu^{(1)}}(u_{0:n}),\ldots,
         \PAL_{\mu^{(H)}}(u_{0:n})\bigr)
  \in \{0,1\}^H
\end{equation}
is injective.
In particular, $M_{\mathrm{top}}$ (the top-of-stack element) is
a deterministic function of the $H$-dimensional MPAL output,
computable by a two-layer MLP.
\end{lemma}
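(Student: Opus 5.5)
The plan is to build each head $\mu^{(h)}$ so that $\PAL_{\mu^{(h)}}$ outputs the $h$-th binary digit of an integer label $\iota(\stack_n)\in\{0,\ldots,K-1\}$. Injectivity of $\phi$ then reduces to injectivity of the label $\iota$.

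\textbf{Domain and counting step.} First I fix the domain of $\phi$ as the statement intends it, because the first thing to check is a count. The codomain $\{0,1\}^H$ has $2^H\ge K$ points, so $\phi$ can only be injective on a family of at most $K$ stacks. I therefore read ``an extremum stack of depth at most $k$ over a discretised alphabet of size $K=(k+1)|\Gamma|$'' as the family $\mathcal S$ of stacks reachable under the emissions of \cref{lem:stack_ops}. Each stack in $\mathcal S$ is indexed by its admissible top level $\code{a_i}{d}$, with $K$ such levels. The argument proves injectivity on exactly this family. An arbitrary stack of depth at most $k$ is out of reach: those number up to $|\Gamma|^k\gg 2^H$, so no $H$-bit map separates them.

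\textbf{Step 1 (label).} Define $\iota(\stack_n)$ as the rank of $M_{\mathrm{top}}$ among the $K$ grid levels $c_0<\cdots<c_{K-1}$. By \cref{lem:monotone} these levels are distinct and separated by at least $\Delta$. I then verify that $\iota$ is injective on $\mathcal S$. Within the reachable family, the top level together with the deterministic PUSH/POP discipline of \cref{lem:stack_ops} is exactly what indexes the admissible configurations. This is the step where the precise meaning of the domain has to be pinned down.

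\textbf{Step 2 (threshold relays).} For each level $c_j$, take the relay $\relay{\alpha}{\beta}$ with $\alpha=c_j$ and $\beta=c_j-\Delta/2$. After every emission of \cref{lem:stack_ops}, the current input sits within $\eps\ll\Delta$ below $M_{\mathrm{top}}$. I will check that under this condition the relay's state equals $\mathbf 1[M_{\mathrm{top}}\ge c_j]$, by tracing the last exit of $u$ from the dead band using \cref{alg:stack} and the wiping property (\cref{prop:wiping}). This is the main obstacle. The POP emission $M_{\mathrm{top}-1}+2\eps$ overshoots the new top, so the relays at the intermediate levels must be shown to reset correctly. Concretely, I need that the subsequent descent to just below the new top crosses every $\beta$ between the old and new tops. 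If it does not, I will widen the dead band to $\beta=c_{j-1}+\eps$.

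\textbf{Step 3 (binary digits as signed measures).} Write $b_h(j)$ for the $h$-th bit of $j$. Telescoping gives
\[
b_h(\iota)=\sum_{j=1}^{K-1}\bigl(b_h(j)-b_h(j-1)\bigr)\mathbf 1[\iota\ge j],
\]
using $b_h(0)=0$. Place the atom $\mu^{(h)}$ with weight $b_h(j)-b_h(j-1)$ on the relay of Step 2 for level $c_j$. These relays lie on the grid $\mathcal G_L$ once the grid spacing divides $\Delta/2$, so this is a legitimate discretised measure. By \cref{def:pal}, $\PAL_{\mu^{(h)}}(u_{0:n})=b_h(\iota(\stack_n))\in\{0,1\}$.

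\textbf{Step 4 (conclusion).} Since binary expansion is injective, $\phi=(b_1(\iota),\ldots,b_H(\iota))$ is injective on $\mathcal S$. Moreover, $M_{\mathrm{top}}=c_{\sum_h 2^{h-1}b_h}$ is a fixed function of the $H$ output bits. This lookup on $\{0,1\}^H$ is realised by a two-layer ReLU MLP: the first layer detects each bit pattern by thresholding, and the second layer takes a linear combination of those indicators.
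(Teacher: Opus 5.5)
Your counting remark is right, and it is the crux. The repair in Step~1, however, fails. The stacks reachable under the emissions of \cref{lem:stack_ops} are \emph{all} symbol strings up to the depth bound, not one stack per top level. For example, $[a_1,a_1,a_1]$ and $[a_1,a_2,a_1]$ (bottom to top) have the same depth and top symbol, hence the same $M_{\mathrm{top}}$. Your own Step~2 shows that every relay you use reads only $\mathbf 1[M_{\mathrm{top}}\ge c_j]$, so $\phi$ sends both stacks to the same bit vector. The counting objection you raised against arbitrary stacks applies verbatim to the reachable family. So $\iota$ is not injective on $\mathcal S$, and the verification you postpone to Step~1 cannot be carried out.

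Nor is this an artefact of your choice of measures. Both values emitted by a PUSH lie strictly below the preceding input (by \cref{lem:monotone}, once $2\eps<\Delta$), so a push-only history is monotonically decreasing. For such an input, every relay's state at time $n$ is fixed by $u_0$ and $u_n$ alone. The two stacks above share both values, so \emph{no} measures $\mu^{(h)}$ separate them. (A smaller slip: $2^H\ge K$ bounds the size of an injectivity domain by $2^H<2K$, not by $K$.) What can actually be proved, and what your Steps~2--4 do prove, is that $\phi(\stack)=\phi(\stack')$ if and only if the two tops coincide, together with the ``in particular'' clause.

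That weaker statement is also all the paper's proof delivers. It argues, loosely, that the $H$ bits identify $M_{\mathrm{top}}$. It then appeals to repeated popping with a mask to read deeper entries, which is a sequential procedure that changes the input, not a property of the one-shot map $\phi$. For that part your Steps~2--4 take a different and sounder route. The paper puts weight $b_h(\mathrm{index}(i,j))$ on every grid relay and asserts that the head returns the $h$-th bit of the top code ``when the measure is concentrated on the relay active at the stack top.'' This overlooks that many relays are active at once, so its sum need not even lie in $\{0,1\}$. In your construction the active set of nested threshold relays is always the initial segment $\{j\le\iota\}$. Combined with the telescoping signed weights $b_h(j)-b_h(j-1)$, this makes each head output exactly $b_h(\iota)$.

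The POP case you flag as the main obstacle is in fact easy. For $\eps\le\Delta/6$, the single POP value lies in $[c',c'+3\eps]$, where $c'$ is the new top level. It is therefore $\ge c_j$ for every level $c_j\le c'$ and $\le c_j-\Delta/2$ for every level $c_j>c'$. So that one emission sets all relays correctly, and no subsequent descent is needed.
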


\begin{proof}
Under the Cantor-depth encoding (\cref{def:encoding}),
each stack element $a_i$ at depth $d$ has a unique scalar value
$\code{a_i}{d} = C_{\max} - [d(k+1)+i]\Delta$.
The total number of distinct values is $K = (D_{\max}+1)(k+1)$,
each separated by $\Delta > 0$.

Construct $H = \lceil \log_2 K \rceil$ PAL heads with
indicator measures:
$\mu^{(h)}_{ij} = b_h(\mathrm{index}(i,j))$,
where $\mathrm{index}(i,j)$ maps threshold pair $(\alpha_i,\beta_j)$
to the corresponding Cantor-depth code, and $b_h(\cdot)$ extracts
the $h$-th bit of the binary representation of that code.

Each head $h$ outputs $\PAL_{\mu^{(h)}} = \sum_{ij}
\mu^{(h)}_{ij} \relay{\alpha_i}{\beta_j}[u](n)$,
which equals the $h$-th bit of the binary encoding of the
top-of-stack Cantor code when the measure is concentrated on
the relay active at the stack top.

The $H$-bit vector uniquely identifies $M_{\mathrm{top}}$ via
binary decoding — a linear operation implementable by a single
MLP layer.
The full stack is decoded by repeating this procedure with
a mask that deactivates the top relay after reading it
(POP operation via the signal generation of \cref{lem:stack_ops}).
\end{proof}

\begin{theorem}[Turing completeness of PAL-Transformer]
\label{thm:tc}
For any two-stack pushdown automaton
$M = (Q, \Sigma, \Gamma, \delta, q_0, Z_0, F)$,
there exists a single-layer PAL-Transformer $\mathcal{T}$
with $H = 4$ MPAL heads, one MLP layer,
and arithmetic precision $O(\log(n \cdot |\Gamma|))$ bits
that simulates $M$ step-by-step on inputs of length $n$.
\end{theorem}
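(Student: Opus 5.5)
The plan is a direct simulation argument in the autoregressive regime of \cref{def:autoregressive}. The two PDA stacks of $M$ are stored in the extremum stacks of two scalar signal channels, say $u^{(1)}$ and $u^{(2)}$. Each channel is fed by its own head $W_I^{(h)}$ of the single MPAL layer, and its extremum stack $\stack^{(h)}_n$ is the recurrent state. Two further channels, $u^{(3)}$ and $u^{(4)}$, serve as auxiliary state: one carries the finite control state $q\in Q$ (one plateau value per state), and the other carries a depth/phase counter. The counter is needed because a PUSH in \cref{lem:stack_ops} uses two signal steps. This explains $H=4$.

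The invariant to establish is the following. After the $t$-th simulated transition, for $s=1,2$, the stack $\stack^{(s)}$ contains exactly the pairs whose maxima are $\code{\gamma_j}{j}$ (plus $\eps$ offsets) for the current contents $\gamma_1\cdots\gamma_{d_s}$ of PDA stack $s$, ordered from bottom to top. The other two channels also hold the current state and depths. The base case comes from initialising with $Z_0$ at depth $0$ via a single PUSH.

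For the inductive step, I decode. By \cref{lem:decode_stack}, the MPAL output determines $M_{\mathrm{top}}$ of each stack, and the TOP rule of \cref{lem:stack_ops} gives the top symbol through $\Decode$. The MLP reads these, together with the current input symbol, the state channel and the depth channel. It then evaluates the finite transition table $\delta$; any finite function on finitely many separated inputs is computable by a two-layer ReLU network. Its output is the next signal values on the four channels, chosen according to \eqref{eq:push} or \eqref{eq:pop}. A push of a string is sequenced as several single pushes. Lengthened steps are harmless by rate-independence (\cref{prop:rate_independence}), since idle steps that repeat the last value leave every extremum stack unchanged. \Cref{lem:monotone} and the PUSH/POP analysis of \cref{lem:stack_ops} then guarantee that exactly the intended pair is added or wiped, which restores the invariant.

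For precision, depth is at most $n$ steps of the simulation, so codes lie in a grid of $O(n|\Gamma|)$ values spaced by $\Delta$. This requires $O(\log(n|\Gamma|))$ bits, with $\eps<\Delta/4$ chosen to keep neighbouring codes distinguishable.

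The main obstacle is the decoding step. \Cref{lem:decode_stack} reads the top only when the relevant relays are active, and a fixed finite grid $\mathcal{G}_L$ with fixed $L$ cannot resolve unboundedly many depths. I would therefore let the grid resolution scale with the precision, so that $L=O(n|\Gamma|)$ relays is consistent with the stated bit bound. I would also have the MLP extract the top symbol only modulo $k+1$, via the TOP formula, rather than the full depth. A second delicate point is the POP formula near depth $\le 2$, where $M_{\mathrm{top}-2}$ may not exist. I would handle it with sentinel bottom pairs at depth $0$ with very large values, so the ``only one pair wiped'' argument always applies.
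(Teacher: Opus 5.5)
Your route is essentially the paper's: Cantor-depth encoding, writes via \cref{lem:stack_ops}, reads via \cref{lem:decode_stack}, and $\delta$ tabulated by the MLP. But the step that carries your whole invariant fails, and the paper's proof relies on the same lemma. That step is ``\cref{lem:monotone} and the PUSH/POP analysis of \cref{lem:stack_ops} guarantee that exactly the intended pair is added.'' After a push at depth $d$ the channel sits at $\code{a_j}{d}-\eps$. By \cref{lem:monotone}, the next push emits $\code{a_i}{d+1}\pm\eps \le \code{a_j}{d}-\Delta\pm\eps$, which is strictly \emph{below} the current value once $\eps<\Delta/2$. So consecutive pushes produce a strictly decreasing signal. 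The emitted ``maximum'' is never a local maximum, and nothing is added to the extremum stack. On a strictly decreasing run, each relay ends in (its state at the start of the run) $\wedge\ \mathbf{1}[u_{\mathrm{end}}>\beta]$. Hence $\mathrm{PUSH}(a)\,\mathrm{PUSH}(b)$ and $\mathrm{PUSH}(c)\,\mathrm{PUSH}(b)$, started from the same configuration, leave identical relay states and the same value on that channel. The map from PDA stacks to relay states is therefore not injective, and no choice of measures or MLP can recover the symbol under $b$ once $b$ is popped. Under \cref{alg:stack} the same collapse appears as the falling branch popping every stored code pair, since all stored minima $\code{\cdot}{d'}-\eps$, $d'\le d$, exceed the new value. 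Your sentinels and idle steps do not affect this. The decreasing minima of \cref{def:stack} are already inconsistent with Preisach memory: once the input after $M_2$ falls below $m_1$, the pair $(m_1,M_2)$ is wiped. The missing idea is that hysteretic memory is \emph{nested}: dominant maxima decrease while dominant minima increase. A push must therefore rise from the current minimum to a new maximum below the previous one, then fall to a new minimum above the previous one. For example, put maxima in an upper band, decreasing with depth and carrying the symbol, and minima in a lower band, increasing with depth and carrying only the depth.

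The other two operations also need repair. Your POP value $M_{\mathrm{top}-1}+2\eps$ exceeds $M_{\mathrm{top}-1}$, not just $M_{\mathrm{top}}$, so it wipes both top pairs at every depth, not only near depth $2$. Emitting it also requires the symbol \emph{under} the top, which your read step never decodes. In a nested encoding you can instead rise to a depth-determined value strictly between the bands of depths $d+1$ and $d$, then fall back exactly to the depth-determined minimum $m_d$. Return-point memory then restores exactly the relay state that followed the $d$-th push. For TOP, \cref{lem:decode_stack} needs $\lceil\log_2 K\rceil$ heads with $K$ growing with the depth bound, which is incompatible with $H=4$. Its measure ``concentrated on the relay active at the stack top'' is also not a fixed measure. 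Letting $L$ grow with $n$ fixes neither point. With depth-determined minima, the relay state at the innermost minimum is a union of strips $\{m_{j-1}\le\beta<m_j,\ \alpha\le M_j\}$. Any fixed measure therefore returns $\sum_{j\le d}G_j(M_j)$, and comparing depths $1$ and $2$ shows that no single measure yields an injective function of the top symbol alone. One workable fix:
\begin{itemize}
\item let the $j$-th pushed maximum encode the pair $(s_{j-1},s_j)$, which is possible because the MLP knows the old top when it pushes;
\item place mass in strip $j$ so that it contributes $h(s_j)-h(s_{j-1})$.
\end{itemize}
The head output then telescopes to $h(s_d)-h(s_0)$. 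Finally, you replaced the paper's input-tape channel with a counter. You still need to say how the current input symbol reaches the MLP in the closed loop of \cref{def:autoregressive}, for example by preloading the input word onto one of the stacks.
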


\begin{proof}
We construct $\mathcal{T}$ with four independent signal channels
$u^{(1)}, u^{(2)}, u^{(3)}, u^{(4)}$, each processed by a
dedicated MPAL head.

\paragraph{Channel 1 — Machine state.}
$u^{(1)}_n = \mathrm{code}_Q(q_n)$ encodes the current state
$q_n \in Q$ as a scalar. Updated directly at each step by the
MLP after computing $\delta$.

\paragraph{Channel 2 — Stack 1.}
$u^{(2)}_{0:n}$ encodes the contents of stack~1 via
\cref{lem:stack_ops}. The extremum stack $\stack^{(2)}_n$
is a bijective encoding of the stack-1 contents.

\paragraph{Channel 3 — Stack 2.}
$u^{(3)}_{0:n}$ encodes stack~2 analogously.

\paragraph{Channel 4 — Input tape.}
$u^{(4)}_{0:n}$ encodes the input word; position encoding
allows reading symbol $x_t$ at step $t$ via an MPAL head
with $W_I^{(4)}$ tuned to isolate position $t$
(cf.\ \citet{Perez2021}, Lemma~4).

\paragraph{Simulation step $n \to n+1$.}
At each step the PAL-Transformer performs:
\begin{enumerate}
  \item \emph{Read:}
    $q_n \leftarrow \Decode(\mathrm{head}_1)$,
    $a_n \leftarrow \mathrm{head}_4$,
    $s^{(1)}_n \leftarrow \mathrm{TOP}(\stack^{(2)}_n)$,
    $s^{(2)}_n \leftarrow \mathrm{TOP}(\stack^{(3)}_n)$.
  \item \emph{Transition:}
    $(q_{n+1}, \mathrm{ops}_1, \mathrm{ops}_2)
    \leftarrow \mathrm{MLP}_\delta(q_n, a_n, s^{(1)}_n, s^{(2)}_n)$.
    The MLP implements $\delta$ by tabulation
    (finite domain, bounded width by Cybenko~\citeyear{Cybenko1989}).
  \item \emph{Write:}
    Emit signals $u^{(2)}_{n+1}$ and $u^{(3)}_{n+1}$ per
    $\mathrm{ops}_1$ and $\mathrm{ops}_2$ via \cref{lem:stack_ops}.
  \item \emph{Accept:}
    $\mathrm{MLP}_F$ checks $q_{n+1} \in F$,
    $\stack^{(2)} = Z_0$, $\stack^{(3)} = Z_0$.
\end{enumerate}

\paragraph{Correctness.}
By induction: at step $n$, stacks $\stack^{(2)}_n$ and
$\stack^{(3)}_n$ faithfully encode the 2-PDA configuration
$(q_n, s^{(1)}_{1:m_1}, s^{(2)}_{1:m_2})$.
The base case is the initial configuration $q_0, Z_0, Z_0$.
The inductive step follows from \cref{lem:stack_ops,lem:monotone}.

Since 2-PDA $\equiv$ Turing machine \citep{Hopcroft1979},
$\mathcal{T}$ is Turing-complete.

\paragraph{Depth.}
The construction uses $L = 1$ MPAL layer and $L = 1$ MLP layer
— total depth $O(1)$, independent of $n$.

\paragraph{Precision.}
Encoding depth $d \leq n$ and alphabet size $k = |\Gamma|$
requires distinguishing values spaced $\Delta$ apart up to
$C_{\max} = n(k+1)\Delta$, needing $O(\log(nk))$ bits.
\end{proof}

\begin{corollary}[Depth separation from Transformer]
\label{cor:depth}
There exists a language $L$ recognisable by a 1-layer
PAL-Transformer that requires $\Omega(\log n)$ layers
for any transformer with $O(1)$ heads (under the circuit
complexity lower bounds of \citet{Furst1984}).
\end{corollary}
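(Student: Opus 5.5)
The plan is to take the witnessing language to be $\mathrm{PARITY} = \{w \in \{0,1\}^* : |w|_1 \text{ is odd}\}$, or more generally any regular language whose syntactic monoid is not aperiodic. The proof then has an upper-bound half and a lower-bound half.

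\textbf{Upper bound.} A deterministic finite automaton is a degenerate two-stack pushdown automaton whose stacks are never touched. So \cref{thm:tc} applied to the two-state parity automaton gives a single-layer PAL-Transformer with $H=4$ heads that recognises $\mathrm{PARITY}$ step-by-step.
\begin{itemize}
\item In the autoregressive regime of \cref{def:autoregressive}, channel~1 carries $\mathrm{code}_Q(q_n)$ and channel~4 supplies $x_n$.
\item $\mathrm{MLP}_\delta$ tabulates the XOR transition, and the stack channels stay at $Z_0$.
\item Acceptance is read off by $\mathrm{MLP}_F$ at step $n$.
\end{itemize}
Only $O(1)$ bits are needed for the state, so the precision bound of \cref{thm:tc} is met trivially. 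The depth is $1$, independent of $n$.

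\textbf{Lower bound.} I would use the model of \citet{Perez2021}: hard attention, $O(1)$ heads, and precision polynomial in $n$ (so $O(\log n)$ bits per activation). Each layer unrolls into a Boolean circuit of constant depth and polynomial size.
\begin{itemize}
\item Arithmetic on $O(\log n)$-bit numbers, the MLP, and layer normalisation with fixed rounding are each $AC^0$ functions of their inputs.
\item Hard attention is an argmax over $n$ scores followed by a selection. Argmax over polynomially many $O(\log n)$-bit numbers is computable in constant depth with unbounded fan-in: for each $j$, an AND over $i$ of the comparisons ``$s_j \ge s_i$'', together with a tie-break rule.
\item Hence an $L$-layer transformer yields a circuit of depth $c\cdot L$ for an absolute constant $c$, and size $\mathrm{poly}(n)$.
\end{itemize}
By \citet{Furst1984}, and quantitatively by Håstad's switching-lemma refinement, any depth-$D$ circuit computing $\mathrm{PARITY}$ on $n$ bits has size at least $\exp(\Omega(n^{1/(D-1)}))$. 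Setting $\mathrm{poly}(n) \ge \exp(\Omega(n^{1/(cL-1)}))$ gives $n^{1/(cL-1)} = O(\log n)$, and therefore $L = \Omega(\log n/\log\log n)$.

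\textbf{Main obstacle.} The difficulty is closing the gap between this $\Omega(\log n/\log\log n)$ bound and the claimed $\Omega(\log n)$. The bound also silently assumes that the transformer is non-autoregressive, since otherwise chain-of-thought steps would let a constant-depth transformer compute parity.
\begin{itemize}
\item To handle the second issue, I would state the comparison explicitly for a single forward pass on the length-$n$ input.
\item For the first issue, I would first try to tighten the simulation. Restricting attention to $O(1)$ heads with bounded fan-in selection might yield $NC^1$-style circuits where each layer costs only $O(1)$ depth of bounded fan-in. A bounded fan-in lower bound for $\mathrm{PARITY}$ is useless, though, since parity is in $NC^1$ at depth $\Theta(\log n)$, which matches rather than exceeds $\log n$.
\item If neither route works, I would weaken the statement to ``$\omega(1)$ layers, indeed $\Omega(\log n/\log\log n)$''. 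That is exactly what the $AC^0$ lower bound of \citet{Furst1984} supports.
\end{itemize}
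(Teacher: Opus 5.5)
The paper states this corollary without proof; it is asserted directly after \cref{thm:tc}. Your plan is the natural way to formalise what is intended: PARITY as the witness, \cref{thm:tc} for the upper bound, and an $\mathrm{AC}^0$ simulation plus \citet{Furst1984} for the lower bound. You correctly spotted both obstacles. However, the proposal does not prove the statement, and neither obstacle can be repaired along this route.

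\textbf{The quantitative gap.} $\Omega(\log n/\log\log n)$ is not slack in your calculation; it is the ceiling of the method. PARITY has unbounded fan-in circuits of depth $D$ and size $2^{O(n^{1/(D-1)})}$, hence polynomial-size circuits of depth $O(\log n/\log\log n)$. So no argument of the form ``$L$ layers $\Rightarrow$ polynomial-size circuit of depth $cL$'', combined with Furst--Saxe--Sipser or H{\aa}stad, can reach $\Omega(\log n)$. Your reason for discarding the bounded fan-in route is also off. The trivial fan-in-$2$ depth bound $\log_2 n$ for PARITY would give exactly $\Omega(\log n)$ if one layer cost $O(1)$ fan-in-$2$ depth. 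The route actually fails because one hard-attention argmax at position $n$ depends on all $n$ scores, so a single layer already has fan-in-$2$ depth $\Omega(\log n)$. Reaching $\Omega(\log n)$ would need an inexpressibility argument specific to hard attention, which the hypothesis ``under the lower bounds of \citet{Furst1984}'' does not supply.

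\textbf{The regime problem.} Restricting only the transformer to a single pass does not fix it. Your upper bound exists only in the closed loop of \cref{def:autoregressive}. There the current token carries $q_n$ and $x_n$, and $\mathrm{MLP}_\delta$ does all the work. The PAL heads are not needed, and a one-layer ordinary transformer fed back the same way recognises PARITY identically. Indeed, the construction of \citet{Perez2021} already uses a constant number of layers, not $O(\log n)$. Even in that loop, $x_n$ must be injected from outside, because \cref{thm:tc} reads the tape with an MPAL head isolating position $t$, which \cref{prop:no_copy} forbids. Comparing $n$ feedback steps against one pass is therefore not a depth separation.

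If both models get a single pass, \cref{thm:tc} is unavailable and the upper bound itself fails. A fixed architecture has $O(1)$ relays (fixed grid $\mathcal{G}_L$, $H$ heads), and each input $u_t$ depends only on $(x_t,t)$. Unrolling \eqref{eq:relay} gives
\[
  \relay{\alpha}{\beta}[u](n)
  = \bigvee_{t\le n}\Bigl([u_t\ge\alpha]\wedge
    \bigwedge_{t<t'\le n}[\beta<u_{t'}<\alpha]\Bigr)
\]
up to an initial-state term, where every bracket is a literal or a constant in the single token $x_t$. The output at position $n$ is a function of $x_n$ and these $O(1)$ bits, i.e.\ a depth-$4$ circuit of size $O(n^2)$. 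So a single-pass one-layer PAL-Transformer does not recognise PARITY. Moreover, every language it does recognise lies in $\mathrm{AC}^0$, where the transformer-to-circuit route yields no superconstant depth bound at all.

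What survives is essentially your weakened statement, and only in the mixed regime. There it is a fact about the feedback loop, not about PAL.
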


\begin{corollary}[Vector PAL is TC with a single head]
\label{cor:vpal_tc}
A single-layer \emph{vector} PAL-Transformer (vPAL) with
projection $W_I \in \R^{2 \times d}$, a \textbf{single} head
($H = 1$), one MLP layer, and arithmetic precision
$O(\log(n \cdot |\Gamma|))$ bits is Turing-complete.
\end{corollary}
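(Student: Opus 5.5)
The plan is to compress the four scalar channels of \cref{thm:tc} into a single two-dimensional signal $(u_n, v_n) = W_I x_n \in \R^2$ handled by one vPAL head. I would then rerun the simulation argument of \cref{thm:tc} unchanged, with \cref{lem:stack_ops,lem:monotone} as the stack primitives. Following the vector Preisach construction of \citet{Frydrych2019}, the head carries a measure $\mu(\alpha,\beta,\theta)$ on relays indexed by a direction $\theta$. Each such relay acts as a scalar relay $\relay{\alpha}{\beta}$ applied to the projection $\langle (u_n,v_n), e_\theta\rangle$.

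\textbf{Step 1: two stacks in orthogonal directions.} I would place the support of $\mu$ on two directions only, $\theta = 0$ and $\theta = \pi/2$. Under this choice the vPAL output splits into the sum of two scalar PAL operators: one acts on $u$ and carries stack~1, the other acts on $v$ and carries stack~2. Each coordinate is driven by the Cantor-depth encoding (\cref{def:encoding}) and the PUSH/POP emissions of \cref{lem:stack_ops}. Because the two coordinates are independent, an operation on one stack never triggers wiping in the other. This makes the two extremum stacks $\stack^{u}_n,\stack^{v}_n$ independent recurrent state.

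\textbf{Step 2: recovering both tops from one scalar head output.} A single head returns one real number, so the two top-of-stack codes must be jointly recoverable from it. I would assign the two directions weights at separated scales. The $\theta = 0$ measure takes values in $\{0,1,\ldots,K-1\}$ and the $\theta=\pi/2$ measure takes values in multiples of $K$. These are the bit-indicator measures of \cref{lem:decode_stack}, collapsed into an integer code in each case. The head output is then $c_1 + K c_2$, and a two-layer MLP recovers $(c_1,c_2)$ by floor and mod operations. This is exact on the finite, $\Delta$-separated set of codes, so the MLP only needs to tabulate a finite function, as in step~2 of the proof of \cref{thm:tc}.

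\textbf{Step 3: state and input tape.} The finite control state and the current input symbol do not need hysteretic memory. I would carry them in the residual stream: $x_n$ already contains the previously emitted state code and, through $\mathrm{PE}(n)$, the position. The MLP then reads them directly, replacing channels~1 and~4. I would also reduce to the case of a 2-PDA that first copies its input onto a stack, so that only two stacks plus a finite state are needed. The transition, write and accept steps then follow \cref{thm:tc} verbatim. The precision count is unchanged at $O(\log(nK))$, since the combined output is bounded by $K^2$ and uses grid spacing $\Delta$.

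\textbf{Main obstacle.} I expect Step 2 to be the hardest part. \cref{lem:decode_stack} produces the top code through $H=\lceil\log_2 K\rceil$ separate bits, and packing these into one scalar requires that the weighted relay sum genuinely equals the integer code of the top element. It must not equal the sum over all active relays, which would include the deeper stack levels. My first attempt would use telescoping weights: set $\mu$ on the relay at level $d$ to $\mathrm{code}(d) - \mathrm{code}(d-1)$. Under the nested monotone structure guaranteed by \cref{lem:monotone}, the sum of active relays then collapses to the top code. If that fails, I would fall back to encoding the whole stack injectively and let the MLP extract the top element, which increases precision but stays finite per step.
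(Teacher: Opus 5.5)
Your two-stack layout matches the paper's: one signal coordinate per stack, Cantor-depth emissions, and $\delta$ tabulated as in \cref{thm:tc}. Carrying $q_n$ and the tape in the residual stream differs from the paper's ``angular sectors'' of $\mu$, and it is the more defensible choice, since $\mu$ is a fixed parameter and cannot hold a time-varying state. But Step~2, which you leave open, carries all the content, and it cannot be closed along the lines you sketch. The paper does not close it either; it simply asserts that the MLP reads both tops from the single head.

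First, note what the head actually has to supply. After a PUSH, the top code is determined by the last emitted value, which is already in the token $x_n$. So the head is needed only for the element exposed by a POP. With the emissions of \cref{lem:stack_ops} and $\eps<\Delta/2$, \cref{lem:monotone} makes every run of PUSHes a non-increasing stretch of the signal. On such a stretch from $s$ to $n$, every relay satisfies $\relay{\alpha}{\beta}[u](n)=\relay{\alpha}{\beta}[u](s)\cdot\mathbf{1}[u_n>\beta]$. (A relay that is off at $s$ has $u_s<\alpha$, so, as $u_t\le u_s$, it never switches on later.) Hence the relay configuration after pushing $a_{i_1},\dots,a_{i_d}$ depends only on the configuration before the run and on the last value emitted; the lower levels leave no trace. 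The ``nested monotone structure'' you want to telescope over is exactly what makes the Preisach memory forget. No measure (telescoping, bit-indicator or otherwise) can return the symbol exposed by a POP. Even with a genuinely nested encoding, the head output is the $\mu$-mass of the whole region of active relays, not one term per stack level. A telescoping collapse would have to be arranged over that region, which you have not done.

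Second, independently of the encoding, the packing $c_1+Kc_2$ presupposes a fixed $K$, i.e.\ an a priori depth bound, which a simulation of arbitrary 2-PDA runs does not have. Moreover, with fixed parameters on the grid $\mathcal{G}_L$ of \cref{def:pal} (or any discretised two-direction analogue), the head output is a fixed weighted sum of $\{0,1\}$-valued relays. It therefore ranges over one finite set for all $n$ and cannot carry the pairwise distinct codes $\code{a_i}{d}$ of unboundedly deep stacks. Put differently, a fixed grid makes the relay configuration a finite-state memory, so all unbounded storage would have to sit in the $O(\log(n|\Gamma|))$-bit token.

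Your fallback violates that bound. An injective encoding of a depth-$d$ stack needs $\Omega(d\log|\Gamma|)$ bits. And since the depth is bounded by the number of simulated steps rather than by $n$, even a single top code can exceed $O(\log(n|\Gamma|))$ bits.

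A working argument would need three things:
\begin{enumerate}
  \item a different stack encoding in which PUSH creates a genuinely nested extremum pair (a rise and a fall staying strictly inside the current innermost interval $(m_{\mathrm{top}},M_{\mathrm{top}})$, so the pair survives wiping);
  \item relays at unboundedly many threshold levels, which means leaving the fixed-grid PAL;
  \item precision growing with the length of the run.
\end{enumerate}
Neither your proposal nor the paper's proof supplies these, so as it stands the argument does not establish the corollary.
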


\begin{proof}
The two-dimensional signal $\mathbf{u}_n = (u_n^x, u_n^y)
\in \R^2$ admits two independent extremum stacks:
\begin{equation}
  \stack^x_n = \mathrm{UpdateStack}(\stack^x_{n-1},\, u_n^x),
  \qquad
  \stack^y_n = \mathrm{UpdateStack}(\stack^y_{n-1},\, u_n^y).
  \label{eq:two_stacks}
\end{equation}
We assign $u_n^x$ to encode stack~1 of the 2-PDA and $u_n^y$
to encode stack~2, both using the Cantor-depth encoding of
\cref{def:encoding}.
The machine state $q_n \in Q$ and input tape are encoded in
the angular structure of the vPAL measure
$\mu(\alpha_x, \beta_x, \alpha_y, \beta_y, \theta)$:
distinct sectors $\theta \in [0, 2\pi)$ correspond to distinct
states $q \in Q$, following the superposition of Preisach
half-planes introduced by \citet{Frydrych2019} for
two-axis fluxgate sensors (Chapter~4.2.2 of \citealt{Frydrych2019}).
The MLP reads $\mathrm{TOP}(\stack^x_n)$ and
$\mathrm{TOP}(\stack^y_n)$ simultaneously from the single
vPAL head and computes the transition $\delta$ exactly as
in \cref{thm:tc}.
Correctness and depth $O(1)$ follow identically.
Since $H=1$ head carries both stacks via the two signal
dimensions, the head count is reduced from $H=4$ to $H=1$
at the cost of replacing the scalar projection
$W_I \in \R^{1 \times d}$ with the vector projection
$W_I \in \R^{2 \times d}$.
\end{proof}

\begin{remark}[Differentiability and practical training]
\label{rem:backprop}
The binary relay $\relay{\alpha}{\beta}$ is discontinuous in both
the input $u_n$ and the thresholds $(\alpha,\beta)$, making
exact backpropagation undefined at switching points.
Practical implementations require a smooth relaxation,
e.g.\ the stateful sigmoid relaxation
$s_{n+1} = s_n[1-\sigma_\tau(\beta-u_n)]
           + (1-s_n)\sigma_\tau(u_n-\alpha)$
with temperature $\tau \to 0$ during training
(straight-through estimator or curriculum annealing).
The theoretical results of this paper hold for the exact binary
relay; the relaxed version is studied empirically in
companion work.
\end{remark}

\begin{remark}[Position encoding and rate-independence]
\label{rem:pe}
Definition~\ref{def:pal_transformer} includes sinusoidal position
encoding $\mathrm{PE}(n)$, which may appear to contradict
rate-independence.
The tension is deliberate: $\mathrm{PE}$ is used by the
\emph{MLP} layers (to implement positional logic, e.g.\ reading
the input tape in the TC proof) but is \emph{not} passed through
the MPAL heads.
The MPAL output itself — the integral over active relays —
remains rate-independent.
Rate-independence therefore applies to the PAL component of the
architecture, not to the full PAL-Transformer.
This is consistent with \cref{thm:rate_independence}, which
characterises the function class of the \emph{PAL heads},
and with the expressiveness separation, which constructs functions
PAL \emph{heads} cannot compute.
\end{remark}

\begin{remark}[Heads vs.\ signal dimensions as exchangeable resources]
\label{rem:heads_dims}
\cref{cor:vpal_tc} formalises the intuition that in PAL,
the number of heads $H$ and the signal dimension $d_u$
are \emph{exchangeable}: $H$ scalar heads can be replaced by
a single head with $H$-dimensional projection.
This exchangeability does not hold for standard multi-head
attention, where each head learns an independent linear
projection without the hysteretic coupling that links
dimensions in vPAL.
Specifically, $H$ scalar PAL heads require $H \cdot L^2$
parameters for the measure; a single $H$-dimensional vPAL
head requires $L^{2H}$ parameters (the $H$-fold Cartesian
product of Preisach half-planes), which grows exponentially.
The minimal TC architecture is therefore scalar PAL at
$H=2$ (two stacks, two heads) or vPAL at $H=1$
(two stacks, one head with $d_u = 2$).
\end{remark}

\section{Expressiveness Separation}
\label{sec:separation}

We prove the central incomparability result.

\begin{theorem}[Expressiveness incomparability]
\label{thm:separation}
Let $\F_{\PAL}$ and $\F_{\mathrm{Tr}}$ be the function classes
computable by bounded-depth PAL-Transformer and Transformer
respectively. Then:
\begin{equation}
  \F_{\PAL} \setminus \F_{\mathrm{Tr}} \neq \emptyset
  \qquad \text{and} \qquad
  \F_{\mathrm{Tr}} \setminus \F_{\PAL} \neq \emptyset.
\end{equation}
\end{theorem}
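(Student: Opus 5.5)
The proof splits into two halves, each built on a witness function and an invariance argument that keeps the weaker model from computing it.

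\textbf{First half ($\F_{\PAL}\setminus\F_{\mathrm{Tr}}\neq\emptyset$).} I take the witness to be a historical range statistic, the running range
\begin{equation}
  R(u_{0:n}) = \max_{t\le n} u_t - \min_{t\le n} u_t,
\end{equation}
or, more sharply, the indicator that this range exceeds a fixed threshold.

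\emph{Membership in $\F_{\PAL}$.} The running maximum and minimum are exactly the bottom pair $(M_1,m_1)$ of $\stack_n$ from \cref{def:stack}. I would realise them with two PAL heads. One head uses $\mu$ concentrated on the relays $\relay{\alpha_i}{\beta_{\min}}$, and its output, a sum of indicators that $\max_t u_t\ge\alpha_i$, is a thermometer code of $M_1$. The second head uses the symmetric construction for $m_1$. A single MLP then subtracts the two codes and thresholds the result. By \cref{prop:pal_preisach} this is an exact depth-one computation, and it does not depend on $n$.

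\emph{Non-membership in $\F_{\mathrm{Tr}}$.} Here I invoke the circuit-complexity route of \cref{cor:depth}. A constant-depth hard-attention transformer with $O(1)$ heads and $O(\log n)$ precision lies in a constant-depth circuit class. Computing the range of $n$ values (or even the threshold bit of the maximum) at bounded depth would require unbounded fan-in comparisons nested over the history, and I would reduce a function known to lie outside that class to the range threshold via \citet{Furst1984}. PARITY or an iterated-comparison variant are the candidates, with the bits of a word encoded as signal amplitudes.

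\textbf{Second half ($\F_{\mathrm{Tr}}\setminus\F_{\PAL}\neq\emptyset$).} I take as witness the random-access retrieval $f(u_{0:n}) = u_{j(n)}$ with, for example, $j(n)=\lfloor n/2\rfloor$, or the map that returns the token at a queried position. A one-layer hard-attention transformer computes this directly by matching a query built from $\mathrm{PE}(n)$ against the keys $\mathrm{PE}(j)$, as in \citet{Perez2021}, Lemma~4.

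To rule it out for $\F_{\PAL}$, I use \cref{prop:sufficiency} together with \cref{rem:pe}. Every PAL head output is a function of $\stack_n$, and $\stack_n$ is invariant under rate changes (\cref{prop:rate_independence}) and under wiping (\cref{prop:wiping}). I would exhibit two inputs $u,v$ with identical extremum stacks but $u_{j}\neq v_{j}$. One option is to insert a small excursion that is later wiped by a larger extremum; another is to repeat a token, which shifts the positions of all later tokens. Every PAL head then returns the same value on $u$ and $v$, while the retrieval function differs.

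\textbf{Main obstacle.} The hard step is this last one, because the PAL-Transformer's MLP sees $\mathrm{PE}(n)$ and the current token. I must argue that the MLP receives only $n$, $x_n$ and $\stack_n$-determined quantities, and that no bounded-depth composition can recover a wiped value. I would first fix $n$ and the last token to be identical in $u$ and $v$, which neutralises the positional input. The argument then needs induction over layers, showing that each layer's signal at position $n$ is a function of $(n,x_n,\stack_n)$ and hence agrees on $u$ and $v$. Sharpening the lower bound in the first half beyond an appeal to \cref{cor:depth} is the secondary difficulty.
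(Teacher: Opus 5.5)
Your decomposition is the paper's own: range via \cref{prop:range,prop:range_hard}, and retrieval via \cref{prop:copy,prop:no_copy}. However, both lower-bound steps fail, and the paper's versions fail for the same reasons.

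\emph{First half.} Your witness lies in $\F_{\mathrm{Tr}}$. In the hard-attention model you adopt, take a head with constant query, key $k_j=u_j$ and value $u_j$. It attends to $\argmax_j u_j$ and returns $\max_{t\le n}u_t$. A second head with key $-u_j$ returns the minimum, and the output MLP subtracts (and thresholds, for the indicator version). At the circuit level, the maximum of $n$ numbers with polynomially many bits is in $\mathrm{AC}^0$: $x_i$ is maximal iff $\bigwedge_j[x_i\ge x_j]$, and comparison and subtraction are in $\mathrm{AC}^0$. The indicator $[\max_t u_t\ge c]$ is a single OR of comparisons. So no reduction from PARITY via \citet{Furst1984} can exist. \cref{prop:range_hard} rests on the false claim that MAX is not in $\mathrm{AC}^0$, and \cref{cor:depth} is stated without proof. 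This half therefore needs a different witness with a genuine lower bound, which neither your sketch nor the paper supplies. (Separately, on a fixed grid $\mathcal{G}_L$ a head has finitely many relay configurations, so your membership claim is exact only for inputs quantised to the grid; \cref{prop:pal_preisach} is only a limit statement.)

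\emph{Second half.} Your witnesses are fine, but the invariant you plan to prove by induction is false at depth $\ge 2$. As a result, no fooling pair chosen independently of the model can work. A layer-$\ell$ head reads the whole sequence $W_I z^{(\ell-1)}_{0:n}$, not just position $n$. Each $z^{(\ell-1)}_j$ depends on $x_j$, $\mathrm{PE}(j)$ and $\stack_j$, and these differ between $u$ and $v$ at intermediate $j$ even when $\stack_n$ agrees; the altered value is the provisional top of $\stack_p$. Concretely, a layer-1 MLP can use $\mathrm{PE}(j)$ to emit the channel $\mathbf{1}[j=p]\,(u_j+B)+B_0$. This is exact on the finitely many positions of a given pair, since $\mathrm{PE}(0),\ldots,\mathrm{PE}(n)$ are distinct. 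A layer-2 head with a thermometer measure on relays $\relay{\alpha_i}{\beta_1}$, $\beta_1<B_0$, then returns a code of that channel's running maximum $u_p+B$. Hence every fixed pair of distinct equal-length inputs is separated by some depth-2 PAL-Transformer. Even with $\mathrm{PE}$ removed, a non-monotone pointwise $h(u_j)$ followed by a running maximum separates pairs whose visited values differ, such as your wiped-excursion pairs. Only repeated-token pairs survive, and only in a model with no $\mathrm{PE}$ anywhere.

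This is also the gap in \cref{prop:no_copy}. The closure argument of \cref{thm:rate_independence} preserves invariance under time reparametrisation, not dependence on $\stack_n$ alone, and these pairs differ in values rather than in timing. Moreover, under \cref{def:pal_transformer} $\mathrm{PE}$ enters the first-layer heads, contrary to \cref{rem:pe}.

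What works is uniformity in $n$. Fix $L$, the depth $D$ and the head count $H$. Each head's relay vector on $\mathcal{G}_L$ updates as $r_n=T(r_{n-1},w_n)$ and takes at most $2^{|\mathcal{G}_L|}$ values. So the output at $n$ is a function of $(x_n,n,\tau_{n-1})$, where $\tau_{n-1}$ collects all relay vectors at step $n-1$ and ranges over at most $2^{DH|\mathcal{G}_L|}$ values. If the query $p$ is carried by $x_n$, pigeonhole on the $2^n$ binary prefixes with $n>DH|\mathcal{G}_L|$ gives two prefixes with equal $\tau_{n-1}$ that differ at some $p$. For $j(n)=\lfloor n/2\rfloor$, fixing the suffix shows that $\tau_t$ would have to determine the roughly $t/2$ bits $(u_j)_{t/2<j\le t}$. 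Neither argument depends on $\mathrm{PE}$ or on the MLPs.
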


\subsection{Functions PAL computes but Transformer cannot
  (at bounded depth)}

\begin{proposition}[Historical range in $O(1)$ layers]
\label{prop:range}
The function $f_{\mathrm{range}}(u_{0:n})
= \max_{i \leq n} u_i - \min_{i \leq n} u_i$
is computable by a 1-layer PAL in $O(1)$ time.
\end{proposition}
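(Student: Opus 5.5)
The plan is to read the historical maximum and minimum directly off the extremum stack, and to realise the difference as a single $\PAL_\mu$ with a suitably chosen measure. The key observation is that the stack bottom holds the global extrema. By the wiping property (\cref{prop:wiping}) and the update rule of \cref{alg:stack}, an element is popped only when it is exceeded by a new value. Consequently the bottom pair $(M_1,m_1)$ of $\stack_n$ always records $\max_{i\le n}u_i$ and $\min_{i\le n}u_i$. I would check this by induction on $n$ along \cref{alg:stack}. The bottom maximum is replaced exactly when $u_{n+1}>M_1$, which is exactly when the running maximum increases; the same argument applies to $m_1$ when $u_{n+1}<m_1$. Hence $f_{\mathrm{range}}(u_{0:n})=M_1-m_1$, which by \cref{prop:sufficiency} is a function of $\stack_n$ alone and is rate-independent.

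To produce a single PAL output, I would express $M_1$ and $m_1$ as linear functionals of relay states on the grid $\mathcal{G}_L$. Assume inputs are quantised to the grid $\{\Delta,\ldots,L\Delta\}$, as implicit in \cref{def:pal}, and use threshold pairs at the edges of $\T$. Consider the relay $\relay{\alpha_i}{\beta_1}$, whose deactivation threshold lies at the bottom of the range. Once any input has reached $\alpha_i$ it stays on, so it is on at time $n$ iff $\max_{t\le n}u_t\ge\alpha_i$. Therefore $\sum_i\Delta\,\relay{\alpha_i}{\beta_1}[u](n)=\max_t u_t$, taking $\Delta\cdot\#\{i:\alpha_i\le\max u\}$ with the grid origin fixed. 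Symmetrically, $\relay{\alpha_L}{\beta_j}$ is off iff $\min_{t\le n}u_t\le\beta_j$, provided the signal never reached the top. The sum $\sum_j\Delta\,(1-\relay{\alpha_L}{\beta_j}[u](n))$ then counts the grid levels at or above the minimum, which yields $C-\min_t u_t$. Combining these, $\mu_{i1}=\Delta$ and $\mu_{Lj}=\Delta$, plus a constant bias absorbed by the output projection $W_O$ or a dummy always-on relay, gives $\PAL_\mu(u_{0:n})=\max-\min$. The output is exactly the discretised Preisach operator of \cref{prop:pal_preisach}.

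For the complexity claim, the PAL output is read from the maintained stack rather than by summing over the grid. Each step costs amortised $O(1)$ for the update, by \cref{prop:complexity}, plus $O(1)$ to read the bottom pair. This gives an $O(1)$-time, one-layer computation; no MLP is needed beyond the linear $W_O$.

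The main obstacle is the boundary bookkeeping at the grid ends. The relay at $\beta_1$ must never be switched off, the relay at $\alpha_L$ must never be switched on, and the initial relay states must be handled. I would first try to resolve this by shifting the input range strictly inside $(\beta_1,\alpha_L)$, placing the extreme thresholds outside the data range. I would fix the initial relay state at $n=0$ by convention so that both sums are initialised correctly by $u_0$. If the convention causes trouble, a fallback is to define the relays' initial state as determined by $u_0$ via \eqref{eq:relay}, treating $u_{-1}$ as in the dead band.
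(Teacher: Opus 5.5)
Your route differs from the paper's. The paper just reads $M_1$ and $m_k$ off $\stack_n$ and never writes down a measure; it takes the global minimum to be the last stored minimum, following the ordering in Definition~\ref{def:stack}, rather than your bottom entry $m_1$. You instead correctly insist that a PAL output is a fixed linear functional of relay states, and that is exactly where your argument breaks.

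\textbf{The gap.} Your claim that $\relay{\alpha_L}{\beta_j}$ is off iff $\min_{t\le n}u_t\le\beta_j$ holds only if that relay starts in state $1$. With the natural start $0$, which your max family needs, a relay whose activation threshold is never reached is identically $0$, so the min half carries no information. This is not boundary bookkeeping.
\begin{itemize}
\item With all relays starting off, the inputs $(2\Delta,5\Delta)$ and $(3\Delta,5\Delta)$ leave exactly the same relays on (those with $\alpha_i\le 5\Delta$). Their ranges are $3\Delta$ and $2\Delta$, so no measure at all computes $f_{\mathrm{range}}$.
\item More generally, suppose the initial configuration has the staircase property of a Preisach state: whenever a relay is on, so is every grid relay with smaller or equal $\alpha$ and $\beta$. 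Then for interior grid values $m<M$, either $(M,m)$ and $(M+\Delta,m)$, or $(m,M)$ and $(m-\Delta,M)$, end in the same configuration with different ranges.
\end{itemize}
So your fallback cannot work. The relays whose dead band contains $u_0$ are precisely $\relay{\alpha_i}{\beta_1}$ with $\alpha_i>u_0$ and $\relay{\alpha_L}{\beta_j}$ with $\beta_j<u_0$. For these, equation~\eqref{eq:relay} does not fix the state at $n=0$, and any initialisation generated by an imagined earlier input is a staircase.

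\textbf{The repair.} The missing ingredient is an explicit non-staircase initialisation, which Definition~\ref{def:relay} allows because each relay's initial value is free. Start $\relay{\alpha_i}{\beta_1}$ at $0$ for $2\le i\le L-1$, and $\relay{\alpha_L}{\beta_j}$ at $1$ for $2\le j\le L$. Three smaller corrections are also needed:
\begin{itemize}
\item The minimum enters through $1-\relay{\alpha_L}{\beta_j}$, so $\mu_{Lj}=-\Delta$ rather than $+\Delta$.
\item $W_O$ is multiplicative (Definition~\ref{def:mpal}), so the additive constant cannot go there; put it on the always-on relay $\relay{\alpha_1}{\beta_1}$.
\item The relay $\relay{\alpha_L}{\beta_1}$ sits in both families with conflicting initial requirements, but it never switches, so give it weight $0$.
\end{itemize}
For inputs in $\{2\Delta,\ldots,(L-1)\Delta\}$, the two sums below equal $\max_{t\le n}u_t-\Delta$ and $\min_{t\le n}u_t-2\Delta$ respectively, giving
\[
  \max_{t\le n}u_t-\min_{t\le n}u_t
  = -\Delta\,\relay{\alpha_1}{\beta_1}[u](n)
  + \sum_{i=2}^{L-1}\Delta\,\relay{\alpha_i}{\beta_1}[u](n)
  - \sum_{j=2}^{L}\Delta\,\relay{\alpha_L}{\beta_j}[u](n).
\]
With these repairs your proof is complete, and the stack is needed only for the $O(1)$ cost claim.
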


\begin{proof}
$M_1 = \max_{i \leq n} u_i$ and $m_k = \min_{i \leq n} u_i$
are directly readable from the extremum stack $\stack_n$
as the first maximum and last minimum.
$f_{\mathrm{range}} = M_1 - m_k$.
\end{proof}

\begin{proposition}[Transformer requires $\Omega(\log n)$ layers
  for range]
\label{prop:range_hard}
Any transformer with $O(1)$ layers and $O(1)$ heads cannot
compute $f_{\mathrm{range}}$ exactly on sequences of length $n$.
\end{proposition}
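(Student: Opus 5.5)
The plan is to derive the bound from a circuit lower bound, following the pointer to \citet{Furst1984} in \cref{cor:depth}. There are two steps. First, place bounded-depth transformers in a small circuit class. Second, show that exact computation of $f_{\mathrm{range}}$ on length-$n$ inputs would let that class decide a problem known to lie outside it.

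\emph{Step 1 (transformer $\to$ circuit).} Fix the model of \cref{sec:transformer_bg}: $L=O(1)$ layers, $H=O(1)$ hard-attention heads, and $O(\log n)$-bit precision for all activations. This is the same precision regime as \cref{thm:tc}, so the comparison is like for like. I would then show, layer by layer, that each quantity computed at a position is produced by a constant-depth, polynomial-size circuit with unbounded fan-in.
\begin{enumerate}
  \item Query/key dot products and MLPs on $O(\log n)$-bit numbers are functions of $O(\log n)$ bits, so each is a polynomial-size DNF.
  \item The hard-attention $\argmax$ over $n$ keys is an unbounded fan-in OR of ``key $j$ beats all others''. This costs constant depth.
  \item Layer normalisation and residual additions are again fixed functions of $O(\log n)$ bits.
\end{enumerate}
Composing $L$ layers keeps the depth $O(L)=O(1)$. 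Hence every function computed by such a transformer lies in non-uniform $\mathrm{AC}^0$.

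\emph{Step 2 (embedding a hard function into range).} Here I would reduce $\mathrm{PARITY}$ to exact $f_{\mathrm{range}}$; by \citet{Furst1984}, $\mathrm{PARITY}\notin\mathrm{AC}^0$. The idea is to give each token a real value $u_i$ that encodes its bit together with a position-dependent offset, so that the exact value of $\max_i u_i-\min_i u_i$, read at full precision, determines $\bigoplus_i x_i$. The transformer's output is then an $\mathrm{AC}^0$ function of the bits, which is a contradiction. To make ``exact'' meaningful, the output precision must be allowed to be $O(\log n)$ bits while the inputs carry the encoding. The final step decodes parity from the output bits with a constant-size post-processing map, and that map does not leave $\mathrm{AC}^0$.

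\emph{Main obstacle.} Step 2 is where I expect trouble. On $O(\log n)$-bit inputs, max and min are themselves in $\mathrm{AC}^0$: compare all pairs, then take an OR of ANDs. So no per-token encoding with $O(\log n)$-bit values can make range hard by itself. I would try two escapes, in this order.
\begin{enumerate}
  \item Exploit that the transformer only sees $x_n+\mathrm{PE}(n)$. This forces the extremum to be recovered through the sinusoidal encoding, and one can try to show that the resulting comparisons need more than constant depth.
  \item Failing that, allow arbitrary precision for the inputs but bounded precision in the attention scores. Then use a random-restriction argument directly on the attention pattern: after restricting most positions, each hard-attention head attends to a position that depends on only $O(1)$ free inputs, while $f_{\mathrm{range}}$ remains sensitive to $\Omega(n)$ of them.
\end{enumerate}
The second escape is the sensitivity-style argument I would push hardest on. I would make its precision assumption explicit, since without some such assumption Step 1's inclusion in $\mathrm{AC}^0$ does not preclude computing range.
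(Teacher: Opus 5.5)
Your ``main obstacle'' is not something to route around. It is fatal, because the proposition is false.

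As you observe, for $O(\log n)$-bit (indeed polynomial-bit) values, $\max_i u_i$ and $\min_i u_i$ lie in $\mathrm{AC}^0$. So your Step~1 inclusion is fully consistent with computing $f_{\mathrm{range}}$. Step~2 cannot exist either: an $\mathrm{AC}^0$ encoding of $\mathrm{PARITY}$ into range, followed by an $\mathrm{AC}^0$ decoding, would place $\mathrm{PARITY}$ in $\mathrm{AC}^0$, contradicting \citet{Furst1984}. The paper's own proof breaks at exactly this point. It asserts that $\mathrm{MAX}$ over $n$ values is not in $\mathrm{AC}^0$, citing \citet{Hastad1987}, whose lower bounds concern functions such as parity, not $\mathrm{MAX}$.

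More directly, a one-layer, two-head hard-attention transformer computes $f_{\mathrm{range}}$ exactly:
\begin{enumerate}
  \item Put $u_j$ and $\mathrm{PE}(j)$ in disjoint coordinates.
  \item Head~1 uses a constant query $q=1$ (from a bias or a constant coordinate), key $k_j=u_j$ and value $v_j=u_j$. Then $\argmax_j q\cdot k_j$ is a position of the maximum, and the head returns $\max_j u_j$.
  \item Head~2 uses $q=-1$ and returns $\min_j u_j$.
  \item $W_O$ and the MLP take the difference.
\end{enumerate}
Every tied position carries the same value, so the output is exact under any tie-breaking or averaging rule. It needs no precision beyond that of the inputs and their difference. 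The construction lives in the hard-attention model of \cref{sec:transformer_bg}, and in the model of \citet{Hahn2020} that the paper's $\mathrm{AC}^0$ claim relies on.

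Neither escape can succeed, and each fails for a concrete reason.
\begin{enumerate}
  \item The first fails because nothing forces the content through the sinusoidal encoding: $W_K$ and $W_V$ can read the content coordinate directly, as above.
  \item The second is internally inconsistent.
  \begin{enumerate}
    \item The restriction argument of \citet{Hahn2020} needs a finite alphabet. That way each head's scores take finitely many values, and fixing a few inputs pins down where the head attends.
    \item But over a finite alphabet, fixing one input to the largest symbol and one to the smallest makes $f_{\mathrm{range}}$ constant. So after restriction it is not sensitive to $\Omega(n)$ free inputs.
    \item Over real-valued inputs, $f_{\mathrm{range}}$ is sensitive to every free input. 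But then your claim that each head attends to a position depending on only $O(1)$ free inputs is false, as the max-head above shows.
    \item Rounding attention scores to bounded precision would give a lower bound only for a handicapped model, whose attention cannot compare inputs at the precision in which they are given. That is an artifact of the assumption, not a property of the transformers the statement quantifies over.
  \end{enumerate}
\end{enumerate}
The correct conclusion of your analysis is a counterexample, not a proof. With it, $f_{\mathrm{range}}$ no longer witnesses $\F_{\PAL}\setminus\F_{\mathrm{Tr}}\neq\emptyset$ in \cref{thm:separation}.
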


\begin{proof}
$\mathrm{MAX}$ over $n$ values is not in $\mathrm{AC}^0$
\citep{Hastad1987} — it cannot be computed by constant-depth
Boolean circuits of polynomial size. A constant-depth transformer
with $O(1)$ heads is equivalent to an $\mathrm{AC}^0$ circuit
(cf.\ \citet{Hahn2020}), so it cannot compute $\mathrm{MAX}$
exactly. Since $f_{\mathrm{range}}$ requires $\mathrm{MAX}$
as a subcomputation, the claim follows.
\end{proof}

\subsection{Functions Transformer computes but PAL cannot}

\begin{proposition}[Random-access retrieval]
\label{prop:copy}
The function $f_{\mathrm{copy}}(x_{0:n}, p) = x_p$
(retrieve token at position $p$) is computable by a 1-layer
hard-attention Transformer in $O(1)$ layers.
\end{proposition}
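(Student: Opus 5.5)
The construction is a single hard-attention layer in the \citet{Perez2021} model in which the query issued at the final (query) position carries the target index $p$ and the keys carry the positions $j$. The attention score is designed to be uniquely maximised at $j=p$, so the argmax selects exactly the token $x_p$ and the value projection copies it to the output. Concretely, the input is the sequence $x_{0:n}$ followed by a query token encoding $p$ (equivalently, $p$ is supplied as an extra coordinate of the last token), and every position $j$ carries its positional information through $\mathrm{PE}(j)$, as in \cref{sec:transformer_bg}.

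The steps, in order, are as follows.
\begin{enumerate}
  \item \emph{Key embedding.} Use the embedding and first linear maps so that the key at position $j$ is
  \begin{equation}
    k_j = (j,\; -j^2,\; 1).
  \end{equation}
  This is either read off the positional channel (as in \citet{Perez2021}, Lemma~4, where $j$ and $j^2$ are recovered from an encoding of position) or, more simply, taken to be a positional feature fed in directly.
  \item \emph{Query construction.} Let the query at the retrieval position be
  \begin{equation}
    q = (2p,\; 1,\; -p^2).
  \end{equation}
  Then
  \begin{equation}
    q\cdot k_j = 2pj - j^2 - p^2 = -(j-p)^2,
  \end{equation}
  which is uniquely maximised at $j=p$ with a margin of at least $1$ over every other position. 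The argmax rule of \cref{sec:transformer_bg} therefore returns exactly $a_j = \mathbf{1}[j=p]$.
  \item \emph{Value and output.} Set $v_j = x_j$, i.e.\ take the value projection to be the identity on the token channel. The attention output is then $x_p$. A residual connection and a trivial MLP (identity, which ReLU networks realise as $x=\mathrm{ReLU}(x)-\mathrm{ReLU}(-x)$) pass it through to the output, giving depth $1$.
\end{enumerate}

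The main obstacle is step~1: making the quadratic score available using only linear query and key maps on top of sinusoidal encodings, since $q\cdot k_j$ is bilinear and so the $j^2$ term has to live in the key itself. I would handle this by taking the \citet{Perez2021} convention that the positional encoding contains $(j, j^2)$, or $(j, 1/j, 1/j^2)$, with the analogous score $-(1/j-1/p)^2$ being uniquely maximised at $j=p$. Alternatively, the sinusoidal identity $\cos(\omega(j-p)) = \cos\omega j\cos\omega p + \sin\omega j\sin\omega p$ gives a bilinear score maximised at $j=p$ when $\omega < 2\pi/(n+1)$, at the cost of $O(\log n)$ bits of precision to separate the maximum from its neighbours. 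Either route keeps the construction to a single layer and a single head.
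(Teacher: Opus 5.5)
Your proposal is correct and follows the same route as the paper. In both, one hard-attention head has a query encoding $p$ and keys encoding $j$, the argmax lands uniquely on $j=p$, and the value projection copies $x_p$. The paper simply takes $q_n=\mathrm{PE}(p)$ and $k_j=\mathrm{PE}(j)$, which is your sinusoidal fallback, whereas your quadratic score $-(j-p)^2$ makes the unique maximiser and a unit margin explicit at the price of a positional encoding containing $j^2$ (the $-p^2$ coordinate of $q$ is constant in $j$ and can be dropped, so the query stays linear in $(p,1)$).
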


\begin{proof}
Set $q_n = \mathrm{PE}(p)$ and $k_j = \mathrm{PE}(j)$.
Hard attention selects $j^* = \argmax_j q_n \cdot k_j = p$.
The output is $v_{p} = x_p$.
\end{proof}

\begin{proposition}[PAL cannot perform exact random access]
\label{prop:no_copy}
No PAL-Transformer (of any depth or head count) can compute
$f_{\mathrm{copy}}(x_{0:n}, p) = x_p$ for all sequences and
positions.
\end{proposition}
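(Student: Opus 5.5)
The plan is to show that the PAL heads, and hence the whole PAL-Transformer at position $n$, cannot tell apart two inputs that differ only at the queried position $p$. We build a pair of sequences $x_{0:n}$ and $x'_{0:n}$ that agree everywhere except at position $p$, with $x_p \neq x'_p$, and that share the same query $p$ and the same final token $x_n$. We then show that every layer produces the same output at position $n$ on both, so $f_{\mathrm{copy}}$ is violated on one of them.

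The mechanism is the wiping property (\cref{prop:wiping}), in the finite-grid form given by \cref{def:pal}. Every relay $\relay{\alpha_i}{\beta_j}$ has its thresholds in $[\Delta, L\Delta]$. So if the projected scalar signal $W_I^{(h)}x$ rises to at least $L\Delta$ and afterwards falls to at most $\Delta$, then every relay is off after the fall, whatever came before. By \cref{eq:relay}, from that moment on the relay states, and hence $\PAL_{\mu^{(h)}}$, are functions of the subsequent signal alone.

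\textbf{First layer.} Immediately after position $p$ we insert a ``saturating excursion'' block: a token $x_{p+1}$ whose projection satisfies $W_I^{(h)}x_{p+1} \geq L\Delta$, followed by $x_{p+2}$ with $W_I^{(h)}x_{p+2} \leq \Delta$, for every head $h$ at once. We then append identical suffixes to both sequences.
\begin{itemize}
\item A head with $W_I^{(h)}=0$ receives a constant signal, so its output is independent of $x_p$ anyway.
\item For the remaining finitely many heads, we pick $x_{p+1}$ and $x_{p+2}$ along a generic direction $v$ with $W_I^{(h)}v\neq 0$ for all such $h$, scaled by $\pm B$ with $B$ large. Heads whose sign is reversed see a fall-then-rise excursion instead.
\end{itemize}
To handle both orders we use a block of four tokens $+Bv, -Bv, +Bv, -Bv$. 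After this block every relay of every head is in a state fixed by the block alone. Consequently $\MPAL^{(1)}$ at every position $j \geq p+4$ is identical on $x$ and $x'$. The residual, LN and MLP at position $j$ see only $z^{(0)}_j = x_j+\mathrm{PE}(j)$ and that MPAL output, so $z^{(1)}_j$ agrees on both inputs for all $j \ge p+4$, and in particular at $j=n$.

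\textbf{Deeper layers (main obstacle).} We induct on the layer index. The difficulty is that at layer $l$ the heads act on $z^{(l-1)}$, which is layer-normalised and need not be steerable to $\geq L\Delta$ by scaling $x$. We would try the following strategy first:
\begin{itemize}
\item Repeat the excursion block once per layer, placing block $l$ after block $l-1$, so that by the induction hypothesis the layer-$(l-1)$ representations on block $l$ already coincide for $x$ and $x'$.
\item It is then enough that, for each head at layer $l$, the signal $W_I^{(h,l)}z^{(l-1)}_j$ on block $l$ reaches both saturating levels.
\item Any head for which no input can make this happen has a signal whose range stays strictly inside a bounded set. For such a head we fall back on \cref{prop:sufficiency}: we choose $x_p, x'_p$ so that the two layer-$(l-1)$ signals yield equal extremum stacks. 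One concrete way is to take $x_p$ and $x'_p$ that both project inside the dead band of every relay adjacent to the preceding values. There are only finitely many grid cells, so by pigeonhole two distinct admissible values of $x_p$ land in the same cell pattern.
\end{itemize}

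Either route makes the stacks, and therefore the outputs, identical at every layer. The result is $\mathcal{T}(x)_n=\mathcal{T}(x')_n$ while $x_p\neq x'_p$, contradicting exact retrieval.
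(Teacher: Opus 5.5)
Your layer-1 argument is correct. In fact a single token whose projection is $\geq L\Delta$ already forces every relay of $\mathcal{G}_L$ to $1$, whatever the history. So for a one-layer PAL-Transformer you are done. The statement, however, covers arbitrary depth, and there you offer only a strategy you ``would try''; the fallback for heads that cannot be saturated does not go through. It cites \cref{prop:sufficiency}, but putting the two layer-$l$ signals at position $p$ in the same grid cell does not make the extremum stacks equal: the stack records exact values, and $u_p$ is pushed at time $p$. The invariant you actually need, and which a same-cell choice can give, is equality of relay states on $\mathcal{G}_L$. More seriously, the fallback only controls position $p$. In your scheme the saturable heads of layer $l-1$ may disagree until block $l-1$ resets them. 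So $z^{(l-1)}_j$ can differ between $x$ and $x'$ at every position from $p$ up to that reset, and a non-saturable head at layer $l$, which is never reset, can store those later differences permanently in its relays. You also never show that the conditions different layers impose on $(x_p,x'_p)$ can hold jointly. Finally, whether a layer-$l$ head can be saturated depends on the position (through $\mathrm{PE}$) and on the lower-layer relay context, so the saturable/non-saturable dichotomy is not well defined as stated.

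The missing idea is to apply your pigeonhole globally rather than as a patch; the excursion blocks then become unnecessary. Since $\mathcal{G}_L$ is finite, the joint configuration $R_j$ of all relays of all heads in all layers at time $j$ ranges over a finite set. By \eqref{eq:relay}, causality, and the fact that residual, LN and MLP act position-wise, $R_p$ together with $x_{p+1},\ldots,x_n$ and the positions determines $z^{(l)}_j$ for every $l$ and every $j>p$. Fix a prefix, a suffix and the query, and let $x_p$ range over infinitely many values. Pigeonhole then gives $x_p\neq x'_p$ with $R_p(x)=R_p(x')$, so the outputs at $n>p$ coincide, for any depth and head count. This differs from the paper's route. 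The paper places $p$ inside a strictly monotone segment, so that $x_p$ is not an extremum, and then invokes \cref{thm:rate_independence} and \cref{prop:sufficiency}. It handles depth only by asserting rate-independence of the whole network, a claim \cref{rem:pe} itself restricts to the PAL heads. So you should not expect to borrow that step; the finite-state pigeonhole handles depth without it.
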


\begin{proof}
We give a structural impossibility proof based on rate-independence
(\cref{prop:rate_independence,thm:rate_independence}).

\paragraph{Construction.}
Fix any target position $p \in \{1,\ldots,n-1\}$ and any value
$v \neq v'$.
Define two sequences that differ only at position $p$:
\begin{align}
  \mathbf{x} &= (x_0,\ldots,x_{p-1},\, v,\, x_{p+1},\ldots,x_n), \\
  \mathbf{x}' &= (x_0,\ldots,x_{p-1},\, v',\, x_{p+1},\ldots,x_n),
\end{align}
where the surrounding values $x_{p-1}$ and $x_{p+1}$ satisfy
$x_{p-1} < \min(v,v') < \max(v,v') < x_{p+1}$
(i.e.\ $p$ is in a strictly monotone ascending segment of both
sequences).

\paragraph{Extremum stacks are identical.}
Since position $p$ lies in a monotone ascending segment of both
sequences, it is \emph{not} a local extremum of either $\mathbf{x}$
or $\mathbf{x}'$.
Therefore $\stack_n(\mathbf{x}) = \stack_n(\mathbf{x}')$:
the extremum stacks are identical, because only positions $j$
where $u_j$ is a local extremum enter the stack
(\cref{alg:stack}), and those positions are the same in both
sequences.

\paragraph{Rate-independence forces identical outputs.}
By \cref{thm:rate_independence}, every PAL-Transformer computes
a rate-independent function, hence a function of $\stack_n$ alone
(\cref{prop:sufficiency}).
Since $\stack_n(\mathbf{x}) = \stack_n(\mathbf{x}')$, any
PAL-Transformer outputs the same value on $\mathbf{x}$ and
$\mathbf{x}'$.
But $f_{\mathrm{copy}}(\mathbf{x},p) = v \neq v' =
f_{\mathrm{copy}}(\mathbf{x}',p)$.
Therefore no PAL-Transformer computes $f_{\mathrm{copy}}$.
\end{proof}

\begin{remark}[Why the probabilistic argument is incorrect]
\label{rem:prob_fix}
An earlier version of this proof argued that position $p$ is a
local extremum with probability $\Theta(1/n)$ for a random
sequence, making $f_{\mathrm{copy}}$ inaccessible with high
probability.
This is incorrect: for i.i.d.\ sequences from a continuous
distribution, an interior position is a local extremum
(maximum or minimum) with probability $2/3$ by the symmetry of
the six orderings of $(u_{p-1}, u_p, u_{p+1})$.
The correct argument is the structural one above, which applies
universally and does not rely on any probabilistic assumption
about the input.
\end{remark}

\subsection{The separating property: rate-independence}

\begin{theorem}[Rate-independence as separating property]
\label{thm:rate_independence}
A function $f: \R^{n+1} \to \R$ is computable by a PAL-Transformer
only if it is rate-independent in the sense of
\cref{prop:rate_independence}. Standard attention computes
functions that are \emph{not} rate-independent. Hence
rate-independence is a necessary condition for membership
in $\F_{\PAL}$ and a sufficient condition for
$f \notin \F_{\mathrm{Tr}} \setminus \F_{\PAL}$.
\end{theorem}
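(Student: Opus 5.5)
The plan is to prove the three assertions of \cref{thm:rate_independence} separately. The first is necessity of rate-independence for $\F_{\PAL}$. The second is a witness that standard attention breaks it. The third is the logical consequence concerning $\F_{\mathrm{Tr}} \setminus \F_{\PAL}$.

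Throughout, I work with a discrete version of \cref{prop:rate_independence}. Call an operator $A:\R^\N\to\R^\N$ rate-independent if $A[u\circ\phi]=A[u]\circ\phi$ for every non-decreasing surjection $\phi:\N\to\N$. Such a $\phi$ repeats or deletes repeated samples. I also require invariance under inserting samples strictly between $u_{n}$ and $u_{n+1}$ that move monotonically from one to the other. As \cref{rem:pe} makes explicit, ``computable by a PAL-Transformer'' must be read for the PAL component, with $\mathrm{PE}$ not passed through the heads. Otherwise $\mathrm{PE}(n)$ destroys invariance immediately. I will state this restriction at the outset.

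\textbf{Step 1 (single relay).} From \eqref{eq:relay}, repeating a sample $u_n$ leaves $\relay{\alpha}{\beta}$ unchanged, because the relay is idempotent on a constant input. Now insert monotone intermediate points between $u_n$ and $u_{n+1}$, say for a rise. The relay can only switch on, and it does so exactly when the path reaches $\alpha$. The path reaches $\alpha$ if and only if $u_{n+1}\ge\alpha$, so the state at the original time $n+1$ is unchanged. A fall is handled symmetrically with $\beta$.

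\textbf{Step 2 (closure).} $\PAL_\mu$ in \eqref{eq:pal} is a finite linear combination of relays, so it inherits the commutation property. Then $\MPAL$ in \eqref{eq:mpal} applies a fixed linear projection $W_I^{(h)}$ pointwise before the relays, and pointwise maps commute with reparametrisation. Its output is a linear combination of head outputs. The residual connection, $\mathrm{LN}$ and $\mathrm{MLP}$ are pointwise in $n$, so they also commute with $\phi$. A composition of operators that commute with $\phi$ again commutes with $\phi$, which gives the result for any depth by induction on $l$. Reading off the final coordinate yields a rate-independent functional. By \cref{prop:sufficiency}, this functional factors through $\stack_n$.

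\textbf{Step 3 (attention is not rate-independent).} Take one hard- or soft-attention head with constant queries and keys and values $v_j=x_j$. Its output is the running mean. For $u=(0,1)$ the output is $1/2$. For the stuttered sequence $(0,1,1)$, obtained by a non-decreasing $\phi$, the output is $2/3$. Alternatively, $f_{\mathrm{copy}}$ from \cref{prop:copy} already witnesses the failure. In \cref{prop:no_copy}, the two sequences $\mathbf{x},\mathbf{x}'$ are related by monotone interior insertions, and attention distinguishes them.

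\textbf{Step 4 and the main obstacle.} The necessity clause gives the contrapositive: if $f$ is not rate-independent, then $f\notin\F_{\PAL}$. The final clause, that a rate-independent $f$ is not in $\F_{\mathrm{Tr}}\setminus\F_{\PAL}$, needs the converse: every rate-independent $f$ that a transformer computes must lie in $\F_{\PAL}$. I would attempt this as follows. Such an $f$ factors through $\stack_n$, and $\stack_n$ is recoverable from PAL head outputs by \cref{lem:decode_stack}. An MLP then computes $f$ from the decoded stack. The obstacle is that \cref{lem:decode_stack} and \cref{prop:universal} give exact decoding only on a discretised, bounded-depth alphabet, and only density otherwise. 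So I expect this clause to hold exactly on discretised inputs and only in the approximation sense in general. I would state that restriction explicitly rather than claim exactness.
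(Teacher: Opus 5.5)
Your witness for the second clause is fine (the running mean under stuttering is cleaner than the paper's appeal to $\mathrm{PE}$). For the first clause you follow the paper's closure-under-composition argument.

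\textbf{Gap in Step~2.} Your definition bundles two invariances, and Step~2 propagates only one of them. Commutation with non-decreasing surjections $\phi$ is preserved by ($n$-independent) pointwise maps, sums and compositions. Invariance under monotone insertion is not. On an inserted segment a first-layer relay can switch at an intermediate time. The next layer's read-in mixes $u_t$ (through the residual) with that relay's output, so it need not be monotone there.

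Concretely:
\begin{itemize}
\item let $r_t$ be the relay with $(\alpha,\beta)=(2,\tfrac12)$ driven by $u$;
\item feed $w_t=u_t-\tfrac94 r_t$ to a second relay with $(\alpha,\beta)=(1,\tfrac12)$ (all thresholds lie on the grid $\Delta=\tfrac14$);
\item compare $A=(0.1,\,3)$ with $B=(0.1,\,1.5,\,3)$.
\end{itemize}
Then $w^A=(0.1,\,0.75)$ and $w^B=(0.1,\,1.5,\,0.75)$, so the second relay ends in state $0$ on $A$ and in state $1$ on $B$. The sequence $B$ is a monotone insertion into $A$. Both are monotone with equal endpoints, hence they have the same extremum stack.

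So at depth two, both your insertion invariance and your claim that the output factors through $\stack_n$ fail. \Cref{prop:sufficiency} is about one Preisach operator of the raw input, not about compositions.

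\textbf{Repair.} Define rate-independence as commutation with non-decreasing surjections only; your Step~2 does prove this at every depth. This requires that $\mathrm{PE}$ be removed from the residual stream altogether, not just from the heads. Under the reading of \cref{rem:pe}, the MLP still sees $x_n+\mathrm{PE}(n)$, which does not commute with $\phi$. Then either drop stack factorisation, or restrict it, together with insertion invariance, to depth one.

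\textbf{Step~4.} You are right that the third clause does not follow from the first two; the paper only asserts it with ``Hence''. However, your route to the converse fails at its first step, not merely at exact-versus-approximate decoding. A rate-independent functional need not factor through $\stack_n$, because wiping is a property of Preisach operators, not of rate-independence.

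For example, the total variation $\sum_{t<n}|u_{t+1}-u_t|$ is invariant under stuttering and monotone insertion. Yet $(-10,2,1,3)$ and $(-10,3)$ have the same extremum stack and total variations $15$ and $13$. So \cref{lem:decode_stack} cannot help, even on discretised inputs.

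There is also reason to doubt the clause itself. The property ``the prefix contains a turning point'' is rate-independent and appears exactly testable by hard attention comparing neighbouring values. A PAL-Transformer with finitely many grid thresholds and continuous MLP and LN cannot see a sufficiently small dip lying inside a single grid cell. The defensible conclusion is that this clause is unproved (and probably false as stated), not that it holds approximately.
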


\begin{proof}
PAL is a composition of rate-independent operators
(relay + linear combination) and rate-independent nonlinearities
(ReLU is rate-independent). By closure under composition,
any function computable by a PAL-Transformer is rate-independent.

Standard attention is not rate-independent: the attention weight
$a_j = \mathrm{softmax}(q_n \cdot k_j / \sqrt{d})$ depends
explicitly on position through $k_j = W_K(x_j + \mathrm{PE}(j))$,
so inserting a time-rescaling $\phi$ changes the output.
\end{proof}

\section{Logical Characterisation}
\label{sec:characterisation}

\citet{Barcelo2020} showed that soft-attention transformers
correspond to $\mathrm{FO}$ with aggregate functions — a fragment
of first-order logic over sequences. We give an analogous
characterisation for PAL.

\begin{definition}[Extremum First-Order Logic, $\mathrm{EFO}$]
\label{def:efo}
$\mathrm{EFO}$ is the extension of first-order logic over sequences
with:
\begin{enumerate}
  \item Quantification over \emph{extremal positions}:
    $\exists^{\mathrm{ext}} i\, \phi(i)$ asserts that there exists
    a local extremum position $i$ satisfying $\phi$.
  \item An \emph{extremum aggregate} operator:
    $\mathrm{ExtAgg}_{i}[f(i)]$ sums $f(i)$ over all extremal
    positions weighted by $\mu$.
  \item No quantification over arbitrary positions.
\end{enumerate}
\end{definition}

\begin{theorem}[PAL corresponds to EFO]
\label{thm:efo}
A function $f: \Sigma^* \to \R$ is computable by a bounded-depth
PAL-Transformer if and only if it is definable in $\mathrm{EFO}$.
\end{theorem}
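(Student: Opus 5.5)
The plan is to prove the two directions separately, each by structural induction: on the depth of the PAL-Transformer (and the layer structure of \cref{def:pal_transformer}) for soundness, and on the syntax of $\mathrm{EFO}$ formulas and terms (\cref{def:efo}) for completeness. Throughout, the bridge between the two sides is the extremum stack $\stack_n$. By \cref{prop:sufficiency} every PAL head output is a function of $\stack_n$ alone. Conversely, $\stack_n$ is exactly the information visible to $\mathrm{EFO}$, whose only quantifier $\exists^{\mathrm{ext}}$ and only aggregate $\mathrm{ExtAgg}$ range over local-extremum positions.

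\textbf{PAL $\Rightarrow$ EFO.} First I will show that a single head $\PAL_\mu(u_{0:n})$ is $\mathrm{EFO}$-definable. By \eqref{eq:pal} it is $\sum_{i\geq j}\mu_{ij}\relay{\alpha_i}{\beta_j}[u](n)$. Using the wiping property (\cref{prop:wiping}) and the stack description (\cref{def:stack}), the relay $\relay{\alpha_i}{\beta_j}$ is on at time $n$ if and only if there is an extremal position $t$ surviving in $\stack_n$ with $u_t\geq\alpha_i$, and no later surviving extremal position $t'>t$ with $u_{t'}\leq\beta_j$. This is a formula using only $\exists^{\mathrm{ext}}$ and order comparisons. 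The weighted sum over the finite grid $\mathcal{G}_L$ is then an instance of $\mathrm{ExtAgg}$ with weights $\mu$. For the inductive step:
\begin{itemize}
\item the linear maps $W_I^{(h)},W_O^{(h)}$, the residual connection, $\mathrm{LN}$ and the ReLU MLP are pointwise arithmetic operations, so they are absorbed as term-forming operations applied to already definable terms;
\item a PAL head applied to a layer-$(l-1)$ signal is handled by relativising the base case to that signal, since its extrema are again definable by $\exists^{\mathrm{ext}}$-formulas over the defining terms.
\end{itemize}
Bounded depth then yields bounded quantifier and aggregate nesting.

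\textbf{EFO $\Rightarrow$ PAL.} Conversely, I will compile formulas bottom-up. An atomic comparison at an extremal position corresponds to a relay with thresholds at the compared value, and $\exists^{\mathrm{ext}} i\,\phi(i)$ becomes a PAL head whose measure $\mu$ is the indicator of the threshold region encoding $\phi$. I will use the binary-readout idea of \cref{lem:decode_stack} to make this region finite on $\mathcal{G}_L$. $\mathrm{ExtAgg}_i[f(i)]$ is directly a head with measure $\mu$ placed on the relevant relays. Boolean connectives and the final real-valued combination are done by MLPs. Each nesting level of quantifiers costs one PAL layer, which keeps the depth bounded.

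\textbf{Main obstacle.} I expect the hard part to be the completeness direction for nested quantifiers whose inner formula refers to the bound extremal variable. A PAL head only sees the relay states at time $n$, not at the intermediate time $i$, so such a formula is not obviously expressible. My first attempt will be to restrict to the normal form in which every subformula refers only to values $u_i$ at surviving stack entries. This is justified by rate-independence (\cref{thm:rate_independence}): extrema wiped out before $n$ cannot influence the output. In that normal form each quantifier layer reduces to a threshold query on the current stack. A second, related difficulty is the sinusoidal position encoding read by the MLP (\cref{rem:pe}). I will state the theorem for the PAL heads together with position-free MLPs, so that the position information, which $\mathrm{EFO}$ cannot express, does not enter.
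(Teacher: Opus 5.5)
Your outline is the same induction as the paper's proof in \cref{app:efo_proof}: relay state as an $\exists^{\mathrm{ext}}/\forall^{\mathrm{ext}}$ formula, weighted sums as $\mathrm{ExtAgg}$, MLPs as glue, one layer per nesting level. The two obstacles you flag are exactly where that proof is thin, but the repairs you propose do not work. In the direction $\mathrm{EFO}\Rightarrow\PAL$ the normal form is not available. By \cref{def:efo_semantics}, $\exists^{\mathrm{ext}}$ and $\mathrm{ExtAgg}$ range over all of $\mathcal{E}(\mathbf{x})$, including extrema already wiped from $\stack_n$. The formula $\exists^{\mathrm{ext}} i\,(u_i\geq 5\wedge u_i\leq 6)$ holds on $u=(0,5.5,0,10,0)$ and fails on $u=(0,4,0,10,0)$. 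Yet both inputs have the same surviving extrema and, from a common initial state, identical relay states $\relay{\alpha}{\beta}[u](4)$ for every $(\alpha,\beta)\in\T$. So the formula has no equivalent in your normal form and is not a function of the final relay field. Rate-independence cannot license the reduction, because this formula is itself invariant under time reparametrisation, which preserves the entire sequence of extremal values, wiped or not. What discards wiped extrema is the wiping property (\cref{prop:wiping}). That is a property of PAL outputs, so using it to rewrite EFO formulas presupposes the conclusion. In fact no repair of this direction exists for EFO as defined. $\mathrm{ExtAgg}_i[\mathbf{1},\top]$, the number of extremal positions (the aggregate of \cref{lem:exists_ext} with $\phi=\top$), takes $\Omega(n)$ distinct values on inputs of fixed length $n$ ending in the same token; take norm profiles $(0\,1)^a(0\,1\,2\,1)^b\,0$ with $2a+4b$ fixed. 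In a fixed model of depth $D$ with $H$ heads per layer, however, each head output ranges over at most $2^{|\mathcal{G}_L|}$ values. The output at position $n$ is a function of $x_n+\mathrm{PE}(n)$ and the $HD$ head outputs at $n$, so it takes at most $2^{|\mathcal{G}_L|HD}$ values once $n$ and $x_n$ are fixed. The paper's \cref{lem:extAgg} does not supply the missing step either. The active relays fill a whole region bounded by a staircase whose corners are given by the stack entries, not points near those entries. Its nested case also lets the fixed measure depend on the input through $r_{ij}$.

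In the direction $\PAL\Rightarrow\mathrm{EFO}$, your single-relay formula is fine, but relativising it fails at depth $\geq 2$. A layer-$l$ head is driven by $W_I z^{(l-1)}_t$, which depends on $x_t$ through the residual path and a per-position MLP. The local extrema of this derived signal sit at positions that need not belong to $\mathcal{E}(\mathbf{x})$, and EFO has no way to refer to such positions. Concretely, give layer~1 the zero measure and an MLP that raises a flag coordinate exactly when $x_t$ is a fixed symbol $b$. Let a layer-2 relay whose $\beta$ lies below the flag's baseline latch onto it. The model then computes ``$b$ occurs somewhere'' and separates inputs with norm profiles $(1,5.5,10)$ and $(1,4,10)$, where $\|b\|_2=5.5$. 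These have the same extremal positions carrying the same tokens, so they agree on every EFO formula and aggregate. The example uses no positional encoding, so dropping $\mathrm{PE}$ does not help. Moreover, that change already alters the statement, as does treating MLPs as term-forming operations (EFO's grammar has none). Under \cref{def:pal_transformer}, $x_n+\mathrm{PE}(n)$ reaches both the heads and the MLP, so even depth-one models compute length-dependent functions. At best your approach supports a restricted result, e.g.\ depth-one PAL without $\mathrm{PE}$ against EFO with quantifiers and aggregates confined to the surviving stack entries at the resolution of $\mathcal{G}_L$, not the theorem as formulated.
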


\begin{proof}
[Proof sketch.]
($\Rightarrow$) Each MPAL head computes
$\sum_{i: u_i \text{ is extremum}} \mu_{ij} \relay{\alpha_i}{\beta_j}$
— an extremum aggregate.
Composition through MLP layers corresponds to Boolean combinations
of $\mathrm{EFO}$ formulae.

($\Leftarrow$) Each $\mathrm{EFO}$ formula can be implemented
by appropriate choice of $\mu^{(h)}$ and $W_I^{(h)}$ in MPAL.
The extremal quantifier is implemented by the relay's dead-band
— only extremal positions cause relay state changes, so only
extremal positions contribute to the sum.
Full proof by structural induction on $\mathrm{EFO}$ formula
complexity; details in \cref{app:efo_proof}.
\end{proof}

\begin{remark}[Comparison with FO + Aggregate]
$\mathrm{EFO} \subsetneq \mathrm{FO} + \mathrm{Aggregate}$:
$\mathrm{EFO}$ cannot quantify over arbitrary positions
(no random access), while $\mathrm{FO} + \mathrm{Aggregate}$
can. Conversely, $\mathrm{EFO}$ can express $f_{\mathrm{range}}$
directly as $\mathrm{ExtAgg}[\mathbf{1}[i = \mathrm{global\ max}]
\cdot u_i] - \mathrm{ExtAgg}[\mathbf{1}[i = \mathrm{global\ min}]
\cdot u_i]$, which has no bounded-depth $\mathrm{FO}$ definition.
\end{remark}

\section{Computational Complexity}
\label{sec:complexity}

\begin{theorem}[Complexity of PAL inference]
\label{thm:complexity}
For a single-layer $H$-head PAL-Transformer processing
a sequence of length $n$:
\begin{enumerate}
  \item \textbf{Time per token:} $O(H \cdot \log k \cdot d)$
    where $k \leq n$ is extremum stack depth, $d$ model dimension.
  \item \textbf{Total time for sequence of length $n$:}
    $O(H \cdot n \log n \cdot d)$.
  \item \textbf{Memory:} $O(H \cdot k \cdot d)$.
\end{enumerate}
Compare with standard attention:
time $O(n^2 d)$, memory $O(nd)$.
\end{theorem}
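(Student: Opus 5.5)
The argument is a per-step cost accounting for the single layer of \cref{def:pal_transformer}, run in the autoregressive regime of \cref{def:autoregressive}, followed by a summation over $n$ steps.

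\textbf{Cost of one token.} When token $x_n$ arrives, each head $h$ does three things:
\begin{enumerate}
  \item It forms its scalar signal $u^{(h)}_n = W_I^{(h)} z^{(0)}_n$, an inner product in $\R^d$ costing $O(d)$.
  \item It updates its extremum stack $\stack^{(h)}_n$ with \cref{alg:stack}, which takes amortised $O(1)$ time by \cref{prop:complexity}.
  \item It evaluates $\PAL_{\mu^{(h)}}(u_{0:n})$ and projects the result back with $W_O^{(h)}$, at cost $O(d)$.
\end{enumerate}
The layer normalisations, residual connections and the MLP cost $O(d)$ per token if the hidden width is $O(d)$, which I will state as a standing assumption. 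Summing over the $H$ heads gives $O(H(d + T_{\mathrm{eval}}))$ per token, where $T_{\mathrm{eval}}$ is the cost of evaluating the head output from the stack.

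\textbf{Evaluating the head output.} I expect this to be the main obstacle. \Cref{prop:complexity} says a single relay $\relay{\alpha}{\beta}[u](n)$ can be read from $\stack_n$ in $O(\log k)$ time by binary search. The naive sum over the grid $\mathcal{G}_L$, however, has $L^2$ terms.

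To avoid this, I will use the standard staircase representation. The set of active relays is the region of $\T$ bounded by the staircase whose corners are the stack pairs $(M_i, m_i)$. Precompute two-dimensional prefix sums of $\mu^{(h)}$ over $\mathcal{G}_L$. Then the output equals the sum of the prefix-sum contributions of the staircase rectangles, so it can be updated incrementally:
\begin{itemize}
  \item each push adds one rectangle's contribution;
  \item each pop subtracts one rectangle's contribution.
\end{itemize}
Pushes and pops are amortised $O(1)$ per step. Locating the grid indices of the new corner by binary search costs $O(\log k)$ (or $O(\log L)$). This gives $T_{\mathrm{eval}} = O(\log k)$, and the per-token bound $O(H \log k \cdot d)$ follows, since $H(d + \log k) \le H d \log k$ once $\log k \ge 1$. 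I will note explicitly that the stated bound is therefore slightly loose, and that the uniform cost model with arbitrary-precision arithmetic is assumed.

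\textbf{Total time and memory.} For the total time, I sum the per-token bound over $n$ tokens and use $k \le n$. Pops are charged amortisedly against their pushes, so the total is $O(H n \log n \cdot d)$.

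For memory, each head stores:
\begin{itemize}
  \item its stack of $k$ pairs;
  \item the running prefix-sum values attached to those pairs, $O(k)$ in total;
  \item the projections $W_I^{(h)}, W_O^{(h)}$, which take $O(d)$.
\end{itemize}
This gives $O(H(k + d)) \subseteq O(H k d)$. The prefix-sum table of the measure is a fixed parameter, independent of $n$, and is excluded as is customary.

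\textbf{Comparison with standard attention.} Each new query attends to up to $n$ keys at $O(d)$ each, which is $O(nd)$ per token and $O(n^2 d)$ in total. The KV cache holding all keys and values needs $O(nd)$ memory. I will simply cite these standard bounds.
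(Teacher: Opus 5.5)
The paper states this theorem without proof; its only support is \cref{prop:complexity}, which gives an amortised $O(1)$ stack update and an $O(\log k)$ binary-search read of a \emph{single} relay, implicitly multiplied by $H$ heads and $O(d)$ for the projections. Your route is different and more complete. You notice that a head's output is a $\mu$-weighted sum over all $O(L^2)$ grid relays, so reading relays one at a time would cost $O(L^2\log k)$ per token. You remove that cost with the prefix-sum (Everett-function) staircase decomposition and push/pop bookkeeping of a running total. This gives an honest bound of amortised $O(1)$ per head evaluation after $O(L^2)$ preprocessing, and shows the $\log k$ factor is slack; the paper's implicit argument reaches $\log k$ only by ignoring the sum.

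Two small corrections to that part. First, on the uniform grid $\alpha_i=i\Delta$ the index of a new corner is $\lfloor u/\Delta\rfloor$, an $O(1)$ operation. A binary search would run over the $L$ thresholds and cost $O(\log L)$, not $O(\log k)$. Second, your per-token bound is only amortised, since a single token can wipe $\Theta(k)$ entries. If item~1 is meant worst-case, store with each stack entry the running total of contributions up to that entry. Then binary-search the monotone maxima (resp.\ minima) for the wiping cut in $O(\log k)$, and truncate in $O(1)$ by reading the stored total at the cut. This is where a $\log k$ genuinely arises.

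One step is actually wrong: the MLP accounting. A two-layer ReLU network $\R^d\to\R^{d_{\mathrm{ff}}}\to\R^d$ costs $\Theta(d\,d_{\mathrm{ff}})$ per token, i.e.\ $\Theta(d^2)$ for hidden width $d_{\mathrm{ff}}=\Theta(d)$, not $O(d)$. With the MLP included, the full layer of \cref{def:pal_transformer} costs $O(H(d+\log k)+d\,d_{\mathrm{ff}})$ per token. That is not $O(H\log k\cdot d)$ once $d\gg H\log k$, so the statement cannot hold for the whole layer with a standard-width MLP. You should either state the result for the MPAL sublayer (with layer norm and residuals) or assume $d_{\mathrm{ff}}=O(H\log k)$. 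The first option matches the convention behind the quoted $O(n^2 d)$ for attention, which likewise omits the $O(nd^2)$ projection and feed-forward costs shared by both architectures.
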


\begin{table}[t]
\centering
\caption{Complexity and expressiveness comparison.
$n$ = sequence length, $d$ = model dimension,
$k \leq n$ = extremum stack depth, $d_u$ = signal dimension.}
\label{tab:complexity}
\begin{tabular}{lcccl}
\toprule
Architecture & Time (total) & Memory & Depth for TC & Heads for TC \\
\midrule
Transformer (softmax) & $O(n^2 d)$ & $O(nd)$ & not TC & --- \\
Transformer (hard) & $O(n^2 d)$ & $O(nd)$ & $O(\log n)$ & $O(1)$ \\
Mamba / SSM & $O(nd)$ & $O(d)$ & open & --- \\
RWKV & $O(nd)$ & $O(d)$ & open & --- \\
\textbf{PAL} (scalar, $d_u=1$) &
  $\mathbf{O(n \log n \cdot d)}$ & $\mathbf{O(kd)}$ &
  $\mathbf{O(1)}$ & $H=4$ \\
\textbf{vPAL} (vector, $d_u=2$) &
  $\mathbf{O(n \log n \cdot d)}$ & $\mathbf{O(kd)}$ &
  $\mathbf{O(1)}$ & $\mathbf{H=1}$ \\
\bottomrule
\end{tabular}
\end{table}

\paragraph{Predicted task advantages.}
By \cref{thm:separation,thm:rate_independence},
PAL is predicted to outperform standard attention on tasks
that are rate-independent and require long episodic memory:
\begin{enumerate}
  \item Tracking entity states across long documents
    (who did what to whom).
  \item Detecting anomalies in time series
    (historical range, running extrema).
  \item Reasoning over ordered events without positional
    sensitivity (logical puzzles, symbolic reasoning).
  \item Energy market dispatch where decision thresholds
    rather than price histories determine behaviour
    \citep{Frydrych2014}.
\end{enumerate}

\section{Related Work}
\label{sec:related}

\paragraph{Expressiveness of transformers.}
\citet{Perez2021} proved Turing completeness of hard-attention
transformers at depth $O(\log n)$.
\citet{Hahn2020} and \citet{Barcelo2020} characterised
soft-attention transformers as $\mathrm{FO}$ with aggregate
functions. \citet{Merrill2023} established circuit complexity
bounds separating transformer classes.
PAL is a new point in this space, incomparable to existing architectures.

\paragraph{Alternative memory mechanisms.}
Mamba \citep{Gu2022Mamba} uses selective SSMs with linear
recurrence and input-dependent forgetting.
RWKV \citep{Peng2023} replaces attention with
time-decay weighted recurrence.
Titans \citep{Google2025Titans} learns long-term memory
through online gradient descent.
All use temporal (recency-based) forgetting;
PAL uses \emph{significance-based} forgetting through wiping.

\paragraph{Hysteresis in machine learning.}
\citet{Frydrych2014, Frydrych2019} applied Preisach-type
models to financial market modelling.
\citet{Barroso2015} used Preisach operators for battery
electrochemical modelling.
To our knowledge, this is the first work to use the Preisach
operator as a sequence modelling layer in neural networks.

\paragraph{Sparse and efficient attention.}
Longformer \citep{Beltagy2020} and BigBird \citep{Zaheer2020}
implement sparse attention patterns.
PAL achieves a different kind of sparsity — \emph{extremal sparsity}
— where only local maxima and minima contribute to the output,
as opposed to sliding windows or random patterns.

\section{Connection to the Random-Field Ising Model}
\label{sec:rfim}

\subsection{The Preisach--RFIM Equivalence}

The connection between the Preisach operator and the
Random-Field Ising Model (RFIM) has been established in the
physics literature \citep{Sethna2006, Dahmen1993},
and provides a bridge between PAL and a broad class of
combinatorial and statistical problems.

The mean-field RFIM at $T=0$ consists of $N$ Ising spins
$\sigma_i \in \{-1,+1\}$ with Hamiltonian:
\begin{equation}
  \mathcal{H} = -\frac{J}{2N}\sum_{i \neq j} \sigma_i \sigma_j
  - \sum_i (h_i + H)\sigma_i,
  \label{eq:rfim}
\end{equation}
where $J > 0$, $H$ is a uniform external field,
and $\{h_i\}$ are independent random fields drawn from
distribution $\mathcal{P}(h)$.

At $T=0$, each spin satisfies the single-spin stability condition:
$\sigma_i = \sign(Jm + h_i + H)$,
where $m = N^{-1}\sum_i \sigma_i$ is the mean magnetisation.
Spin $\sigma_i$ flips from $-1$ to $+1$ when $H$ crosses the
activation threshold $\alpha_i = -h_i - Jm$ from below,
and flips back when $H$ crosses the deactivation threshold
$\beta_i = -h_i + Jm$ from above.

\begin{proposition}[Preisach--RFIM equivalence]
\label{prop:preisach_rfim}
The mean-field RFIM at $T=0$, driven quasi-statically by
external field $H$, is equivalent to a Preisach operator
$\Prel_\mu[H]$ with measure:
\begin{equation}
  \mu(\alpha,\beta) =
  \mathcal{P}\!\left(\frac{\alpha+\beta}{2}\right) \cdot
  \delta\!\left(\alpha - \beta - 2Jm\right),
  \label{eq:rfim_measure}
\end{equation}
where the measure is supported on the line $\alpha - \beta = 2Jm$
within the Preisach half-plane $\mathcal{T}$.
The magnetisation $m(H) = \Prel_\mu[H]$ satisfies the
self-consistency equation $m = \Prel_\mu[H](t)$.
\end{proposition}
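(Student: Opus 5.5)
The plan is to treat each spin as an elementary relay $\relay{\alpha_i}{\beta_i}$ and identify the thresholds at which it switches, then aggregate. At $T=0$ the single-spin stability condition $\sigma_i = \sign(Jm + h_i + H)$ is the only constraint. The step "a spin flips up when the local field crosses zero from below and down when it crosses zero from above" requires care. The mean-field term $Jm$ enters with opposite sign in the two directions, because the spin's own contribution to $m$ reinforces its current state. This is the usual self-consistent-field argument: the spin is held at $+1$ by $+Jm$ and at $-1$ by $-Jm$.

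I will take the switching thresholds as given in the text, $\alpha_i = -h_i - Jm$ and $\beta_i = -h_i + Jm$. The first check is that $(\alpha_i,\beta_i)$ actually lies in $\T$, i.e.\ $\alpha_i \ge \beta_i$. This requires the appropriate sign convention for $m$ on each branch. I would handle it by working with $|m|$, or with the branch value, so that the width $\alpha_i - \beta_i = 2J|m|$ is nonnegative. The quasi-static driving then makes $\sigma_i(t) = 2\relay{\alpha_i}{\beta_i}[H](t) - 1$ by comparison with \cref{def:relay}, since between thresholds the spin keeps its previous state.

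Second, I will aggregate. Because the $h_i$ are i.i.d.\ with density $\mathcal{P}$, the law of large numbers gives
\[
m = \frac1N\sum_i \sigma_i \;\to\; \int \bigl(2\relay{\alpha(h)}{\beta(h)}[H]-1\bigr)\mathcal{P}(h)\,dh .
\]
I then change variables $h \mapsto (\alpha,\beta)$. Every pair satisfies $(\alpha+\beta)/2 = -h$ and $\alpha-\beta = 2Jm$, so the pushforward of $\mathcal{P}(h)\,dh$ is a measure concentrated on the line $\alpha-\beta=2Jm$ with density $\mathcal{P}(\pm(\alpha+\beta)/2)$ along it. This is \eqref{eq:rfim_measure}, up to the reflection $h\mapsto -h$, which is harmless for symmetric $\mathcal{P}$. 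It is also up to the affine map $2\gamma-1$ versus $\gamma$, which I absorb into the normalisation of $\Prel_\mu$. Plugging this into \cref{def:preisach} gives $m = \Prel_\mu[H]$.

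The main obstacle is the self-consistency. The measure depends on $m$, which is itself the output. Strictly, then, the system is a Preisach operator with a feedback term rather than a fixed Preisach operator. I would first try to resolve this by a fixed-point formulation. For each admissible branch value $m$, $\Prel_{\mu_m}$ is a genuine Preisach operator. The physical magnetisation is a solution of $m = \Prel_{\mu_m}[H](t)$, which is exactly the self-consistency equation stated.

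Existence of a solution would follow from monotonicity of $m\mapsto\Prel_{\mu_m}[H]$ together with a continuity or intermediate-value argument. Uniqueness on avalanche-free segments would require $J$ to be below the critical disorder value. At avalanches the equivalence holds only in the sense of the selected branch. I would state this restriction explicitly rather than claim a global equivalence.
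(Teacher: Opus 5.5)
Your route is the paper's: one relay per spin, $\sigma_i = 2\relay{\alpha_i}{\beta_i}[H]-1$, the law of large numbers for the empirical threshold measure, and a pushforward onto a line parallel to the diagonal. The checks you add are correct, and they catch slips the paper's proof passes over:
the text's thresholds give $\alpha_i-\beta_i=-2Jm$, not $2Jm$;
the midpoint is $-h_i$, so the density along the line is $\mathcal{P}(-(\alpha+\beta)/2)$;
and the paper's own computation ends at $m=2\Prel_\mu[H]-1$, not $m=\Prel_\mu[H]$.

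The genuine gap is the step everything rests on: the relay identification itself. You take it from the text, and the paper only asserts it. With self-coupling excluded ($i\neq j$), the local field on spin $i$ is $\frac{J}{N}\sum_{j\neq i}\sigma_j+h_i+H$, which does not contain $\sigma_i$. It differs from $Jm+h_i+H$ only by $-J\sigma_i/N$, which vanishes as $N\to\infty$ and in any case opposes rather than reinforces the current state. So at a given $m$ the spin flips up and down at the same field $H=-h_i-Jm$. A down spin in a state with $m>0$ feels $+Jm$, not $-Jm$; your sentence that the spin is held at $+1$ by $+Jm$ and at $-1$ by $-Jm$ conflates the global magnetisation with the spin's own state. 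Each spin is therefore a degenerate relay with $\alpha=\beta$, and ``between thresholds the spin keeps its previous state'' is false. Any effective dead band comes from $m$ taking different values at the up-flip and the down-flip, i.e.\ from the history of $H$, which no fixed pair $(\alpha_i,\beta_i)$ can encode.

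The fixed-point repair does not close this. Applying $\Prel_{\mu_m}$ with the current $m$ to the whole history $H_{0:t}$ computes relay states for thresholds that were never in force, since they moved with $m(s)$. Concretely, take $\mathcal{P}$ everywhere positive (e.g.\ Gaussian) with $2J\sup\mathcal{P}<1$. Then $m\mapsto\int\mathcal{P}(h)\sign(H+Jm+h)\,dh$ has slope below $1$, so the mean-field equation has a unique root $m^*(H)$. In the $N\to\infty$ limit the RFIM then has no hysteresis: sweeping up from negative saturation and sweeping down from positive saturation to the same $H_0$ give the same $m^*$. But if $m^*\neq0$, a Preisach operator with positive density along $\alpha-\beta=2J|m^*|$ gives different outputs on these two histories. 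They differ by the mass of the relays with $\beta<H_0<\alpha$ (off after the ascending sweep, on after the descending one). So $m^*=\Prel_{\mu_{m^*}}[H](t)$ cannot hold on both, in any normalisation.

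Your remark that the system is really a Preisach operator with feedback is the right instinct, but the feedback belongs in the input, not in the measure. What the argument actually proves is $m(t)=2\Prel_\nu[H+Jm](t)-1$, with $\nu$ the image of $\mathcal{P}$ under $h\mapsto(-h,-h)$ on the diagonal of $\T$. Up to normalisation, this is the equation $m=\int\mathcal{P}(h)\mathbf{1}[h>-Jm-H]\,dh$ that the paper uses later. That operator is memoryless. The RFIM's hysteresis comes entirely from which root the quasi-static dynamics selects, and several roots exist only when $2J\sup\mathcal{P}\geq 1$. The representation by a measure on the line $\alpha-\beta=2Jm$ fails in the regime above, so neither your argument nor the paper's can establish the statement as written.
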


\begin{proof}
Each spin $i$ contributes a relay $\relay{\alpha_i}{\beta_i}$
with $\alpha_i - \beta_i = 2Jm$ (the coupling gap).
The output $\sigma_i = 2\relay{\alpha_i}{\beta_i}[H] - 1$
and the aggregate magnetisation:
\begin{equation*}
  m = \frac{1}{N}\sum_i \sigma_i
  = 2 \cdot \frac{1}{N}\sum_i \relay{\alpha_i}{\beta_i}[H] - 1
  = 2\Prel_\mu[H] - 1,
\end{equation*}
where $\mu$ is the empirical measure of threshold pairs,
converging to \eqref{eq:rfim_measure} as $N\to\infty$
by the law of large numbers applied to i.i.d.\ $h_i \sim \mathcal{P}$.
\end{proof}

\begin{remark}[Structural consequences]
\cref{prop:preisach_rfim} implies that the hysteresis loop,
subloop structure, and wiping property of the RFIM are
exact consequences of the Preisach operator structure.
In particular, the \emph{return-point memory} of the RFIM
\citep{Sethna2006} is precisely the wiping property
(\cref{prop:wiping}), and the extremum stack
(\cref{def:stack}) is the minimal sufficient statistic
of the RFIM's history under quasi-static driving.
\end{remark}

\subsection{PAL as a Learned, Sequential RFIM}

Under the Preisach--RFIM equivalence, PAL is a
\emph{learned, sequential, non-equilibrium} generalisation
of the RFIM:

\begin{center}
\begin{tabular}{lll}
  \toprule
  Dimension & RFIM & PAL \\
  \midrule
  Spins & fixed binary $\sigma_i \in \{-1,+1\}$ &
    relay states $r_k \in \{0,1\}$, learned \\
  Driving signal & quasi-static field $H$ &
    arbitrary sequence $u_{0:n}$ \\
  Disorder & quenched $\{h_i\} \sim \mathcal{P}$ &
    learned measure $\mu(\alpha,\beta)$ \\
  Coupling & mean-field $J/N$ &
    implicit through self-consistency of $\mu$ \\
  Dynamics & equilibrium (lowest energy) &
    causal, rate-independent \\
  Memory & return-point memory (history of $H$) &
    extremum stack $\stack_n$ \\
  Wiping & return-point memory property &
    \cref{prop:wiping} \\
  \bottomrule
\end{tabular}
\end{center}

The key conceptual shift: the RFIM asks
\emph{"what is the equilibrium configuration at field $H$?"}
while PAL asks
\emph{"what is the causal output of the sequence $u_{0:n}$?"}
This shift from equilibrium to sequential opens PAL
to a broad class of problems where Ising-type models
apply but the data arrives as a stream.

\subsection{Problems where PAL Inherits Ising Expressiveness}

We identify three problem classes where the RFIM--Preisach
equivalence suggests PAL has structural advantages.

\subsubsection{Sequential Binary Optimisation}

The Ising model underlies a broad class of NP-hard
combinatorial problems: MAX-CUT, graph colouring,
satisfiability, and the Hopfield associative memory
\citep{Lucas2014}.
Their energy function has the form:
\begin{equation}
  E(\boldsymbol{\sigma}) =
  -\sum_{i<j} J_{ij} \sigma_i \sigma_j
  - \sum_i h_i \sigma_i,
  \label{eq:ising_energy}
\end{equation}
and the goal is $\boldsymbol{\sigma}^* = \argmin E$.

\begin{proposition}[PAL tracks mean-field Ising energy for streaming interactions]
\label{prop:ising_streaming}
Let interactions $J_1, J_2, \ldots, J_n \in \mathbb{R}$ arrive
sequentially as a stream, drawn i.i.d.\ from a distribution
$\mathcal{P}_J$ with mean $\bar{J}$ and variance $\sigma_J^2$.
Define the scalar signal $u_t = J_t$.
Then the PAL output with measure
$\mu^*(\alpha,\beta) = \alpha \cdot \mathbf{1}[\alpha = -\beta]$
satisfies:
\begin{equation}
  \PAL_{\mu^*}(u_{0:n})
  \;=\; \sum_{t=0}^{n} J_t \cdot \relay{J_t}{-J_t}[u](t)
  \;=\; \bar{E}_n + O(n^{1/2}),
  \label{eq:pal_ising_energy}
\end{equation}
where $\bar{E}_n = -\frac{1}{N}\sum_{i<j} J_{ij} m_i m_j$
is the mean-field Ising energy for the empirical mean
magnetisation $m = \mathbb{E}_{t \leq n}[\sigma_t]$,
and the $O(n^{1/2})$ error follows from a standard
concentration bound on the empirical mean.
\end{proposition}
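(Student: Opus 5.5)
The plan is to treat the first equality in \eqref{eq:pal_ising_energy} as a direct evaluation of \cref{def:pal} under the atomic measure $\mu^*$, and the second as a concentration statement about i.i.d.\ sums. Concretely, I will first make the measure precise. $\mu^*$ is supported on the antidiagonal $\alpha=-\beta$ of $\T$, which forces $\alpha\ge 0$. I read $\relay{J_t}{-J_t}$ as the relay on the grid point $(\alpha,\beta)=(|J_t|,-|J_t|)$ closest to the antidiagonal, weighted by its $\alpha$-coordinate, so that \cref{prop:pal_preisach} applies with $\mu_\Delta=\sum_t \alpha_t\,\delta_{(\alpha_t,-\alpha_t)}$.

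The first step is the evaluation at time $t$, using \eqref{eq:relay}. If $J_t\ge 0$, then $u_t=J_t\ge\alpha$, so the relay is switched on and contributes $J_t$. If $J_t<0$, then $u_t=-|J_t|\le\beta$, so the relay is switched off and contributes $0$. Thus the middle expression equals $S_n:=\sum_{t\le n} J_t^+$. No history enters, because each relay is evaluated at the very step whose input defines its thresholds. This is the main simplification, and it makes the dead-band and wiping structure (\cref{prop:wiping}) irrelevant here.

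The second step is concentration. Since the $J_t$ are i.i.d.\ with variance $\sigma_J^2<\infty$, the variables $J_t^+$ have finite variance, and Chebyshev's inequality (or the CLT) gives $S_n=(n+1)\,\mathbb{E}[J^+]+O_P(\sqrt n)$. I would state the error as $O_P(n^{1/2})$. A deterministic $O(n^{1/2})$ bound is false for unbounded $J$, so the statement has to be read in probability, or almost surely with an extra $\sqrt{\log\log n}$ factor.

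The third step, and the main obstacle, is identifying the deterministic centre $(n+1)\mathbb{E}[J^+]$ with $\bar E_n$. I would set $\sigma_t=\sign(J_t)$, so that $m=\mathbb{E}_{t\le n}[\sigma_t]$, and substitute $m_i=m$ into $\bar E_n$. This gives $\bar E_n=-\tfrac{m^2}{N}\sum_{i<j}J_{ij}$. The index sets, the normalisation $N$, and the sign of $\bar E_n$ must then be matched to those of $S_n$. That matching does not follow from anything earlier in the paper. I would first try to choose $N$ and the correspondence $t\leftrightarrow(i,j)$ so that both centres are linear in $n$ with the same constant, and then check the sign convention. If no admissible choice exists, the honest conclusion is the weaker statement $\PAL_{\mu^*}(u_{0:n})=(n+1)\mathbb{E}[J^+]+O_P(n^{1/2})$, which gives the streaming claim only up to an explicit affine recalibration of the mean-field energy.
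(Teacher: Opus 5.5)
\textbf{Step three is a genuine gap.} It is a search rather than an argument, and your first two steps already show that the search fails. You should prove that negative instead of leaving it conditional.

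With $m_i=m$ one has $\bar E_n=-\frac{m^2}{N}\sum_{i<j}J_{ij}$. Two cases cover the natural readings when $\bar J>0$.
\begin{itemize}
\item If the couplings are built from the stream, then $\sum_{i<j}J_{ij}=\sum_t J_t=n\bar J+O_P(\sqrt n)$, so $\bar E_n\le O_P(\sqrt n)$.
\item Under the paper's own specialisation $J_{ij}=J/N$ with $J>0$, $\bar E_n=-Jm^2(N-1)/(2N)$ is bounded and non-positive.
\end{itemize}
Your centre, by contrast, is $(n+1)\mathbb{E}[J_t^+]\ge(n+1)\bar J>0$. So for every choice of $N$ and of the correspondence $t\leftrightarrow(i,j)$, the middle sum minus $\bar E_n$ is of order $n$, and the claimed $O(n^{1/2})$ error is false. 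The same happens under the paper's literal reading. There the relay is on ``trivially'' at arrival because $J_t\ge J_t$, so the middle sum is $\sum_t J_t$ with centre $(n+1)\bar J$.

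\textbf{The paper's proof offers no valid bridge for this step.} It specialises to $J_{ij}=J/N$ and asserts, from the self-consistency equation of \cref{prop:preisach_rfim} and the law of large numbers, that $\PAL_{\mu^*}(u_{0:n})=Jm^2/2+O(n^{-1/2})$. It then uses Hoeffding for the $O(n^{1/2})$ term. But the middle sum, which the statement identifies with $\PAL_{\mu^*}(u_{0:n})$, is a sum of $n+1$ i.i.d.\ terms with positive mean, so it cannot be $O(1)$. Also, $Jm^2/2$ has the opposite sign to the energy $-Jm^2/2$ it is said to recover.

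Your step two is the more careful of the two. Hoeffding needs bounded summands, and $J_t\,\relay{J_t}{-J_t}[u](t)$ is bounded only if $J_t$ is, which finite variance does not give. So $O_P(n^{1/2})$ is the right form of the error term.

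\textbf{Step one equates two different quantities.} By \cref{def:pal}, $\PAL_{\mu^*}(u_{0:n})$ reads every relay at time $n$. The middle expression reads the $t$-th relay at time $t$. Your remark that ``no history enters'' is true only of the latter.
\begin{itemize}
\item Even with your data-dependent atoms at $(|J_t|,-|J_t|)$ the two differ. Take $u_0=0$, $J_1=1$, $J_2=-2$. The middle expression equals $1$, but $u_2=-2$ switches both the $(1,-1)$ and $(2,-2)$ relays off, so $\PAL=0$ at $n=2$.
\item A measure whose atoms move with the data is also not a fixed learned $\mu$ as in \cref{def:pal}. Read instead as the fixed weight $\alpha$ along the antidiagonal, $\mu^*$ gives $\int_0^\infty\alpha\,\relay{\alpha}{-\alpha}[u](n)\,d\alpha$. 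For $u_0=0$, $J_1=1$ (relays initially off) this equals $1/2$, not $1$.
\end{itemize}

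The paper's proof does read relays at time $n$. However, its claim that $\relay{J_t}{-J_t}$ stays on iff no later $J_{t'}>J_t$ arrives misreads \eqref{eq:relay}. A larger input keeps the relay on; only a later input $\le -J_t$ switches it off. So the paper supplies no correct version of this step either. The first equality has to be checked rather than taken as a direct evaluation, and as stated it fails.
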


\begin{proof}
Each interaction $J_t$ contributes to the Ising energy when its
relay $\relay{J_t}{-J_t}[u](t) = 1$, i.e.\ when $J_t \geq J_t$ —
which holds trivially at the moment of arrival.
The relay then tracks whether $J_t$ remains the dominant interaction.
By the wiping property, relay $\relay{J_t}{-J_t}$ remains active
at time $n$ iff no subsequent $J_{t'} > J_t$ has arrived.
The PAL aggregate sums $J_t$ over all non-dominated interactions,
giving a running estimate of the effective coupling.

For the mean-field case $J_{ij} = J/N$ for all pairs, the
self-consistency equation $m = \int \mathcal{P}(h)\,
\mathbf{1}[h > -Jm - H]\,dh$ from \cref{prop:preisach_rfim}
holds exactly: $\PAL_{\mu^*}(u_{0:n}) = Jm^2/2 + O(n^{-1/2})$
by the law of large numbers applied to $m$ over $n$ observations,
recovering the mean-field energy $-Jm^2/2$.
The error bound $O(n^{1/2})$ follows from Hoeffding's inequality
applied to the sum of bounded relay outputs.
\end{proof}

\begin{remark}[Relation to MAX-CUT]
\label{rem:maxcut}
The mean-field Ising energy in \cref{prop:ising_streaming} is
related to the MAX-CUT value by $\mathrm{CUT}(S) = (E_{\mathrm{max}}
- E(\boldsymbol{\sigma}))/2$ for the optimal cut $S$.
However, PAL does not solve MAX-CUT — it tracks the running
mean-field energy, which provides a lower bound on the cut value.
The advantage is computational: PAL updates this estimate in
$O(\log k)$ per new interaction, without re-solving from scratch.
\end{remark}

\subsubsection{Associative Memory with Structured Forgetting}

The Hopfield network \citep{Hopfield1982} stores $P$ patterns
$\{\boldsymbol{\xi}^\mu\}_{\mu=1}^P$ as attractors of Ising
dynamics with $J_{ij} = N^{-1}\sum_\mu \xi_i^\mu \xi_j^\mu$.
Retrieval capacity is limited to $P < 0.14N$ patterns before
catastrophic interference \citep{Amit1985}.

PAL offers a different memory model where forgetting is
determined by \emph{significance} (extremality) rather than
capacity:

\begin{proposition}[PAL as hysteretic associative memory with capacity bound]
\label{prop:hopfield_pal}
Let patterns $\boldsymbol{\xi}^1, \ldots, \boldsymbol{\xi}^P \in \mathbb{R}^d$
arrive sequentially with activation strengths
$u_t = \|\boldsymbol{\xi}^t\|_2$.
Let $k$ be the depth of the extremum stack $\stack_n$ after
all $P$ patterns have been processed.

\begin{enumerate}
  \item \textbf{Storage:} The extremum stack $\stack_n$ retains
    exactly those $k$ patterns whose activation strength constitutes
    a local extremum of the stream $u_1, \ldots, u_P$.
    All other patterns are wiped by the wiping property
    (\cref{prop:wiping}).
  \item \textbf{Non-overlapping threshold support:}
    The $k$ retained patterns correspond to disjoint threshold
    pairs $(M_s, m_s)$ with $M_s \neq M_{s'}$ for $s \neq s'$
    (strict monotonicity of the stack, \cref{lem:monotone}),
    so their relays have disjoint support on $\mathcal{T}$
    and their contributions to $\PAL_\mu$ are orthogonal
    in measure space.
    Note that wiping \emph{is} a form of interference between
    patterns: a stronger subsequent pattern erases a prior one.
    The claim is that the $k$ \emph{surviving} patterns do not
    mutually interfere in the PAL output.
  \item \textbf{Capacity:} The maximum number of
    simultaneously retrievable patterns is exactly $k$,
    bounded by the number of local extrema in
    $u_{1:P}$:
    \begin{equation}
      \mathrm{Capacity}(\mathrm{PAL}) = k
      \;\leq\; n_{\mathrm{ext}}(u_{1:P}),
      \label{eq:pal_capacity}
    \end{equation}
    where $n_{\mathrm{ext}}(u_{1:P})$ is the number of local extrema
    in the activation strength sequence.
    This differs fundamentally from Hopfield capacity
    $\approx 0.14N$ \citep{Amit1985}: PAL capacity is
    limited by stack depth, not weight-matrix rank.
\end{enumerate}
\end{proposition}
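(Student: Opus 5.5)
The plan is to prove the three items in order, since each one uses the previous. Throughout I read the statement as a claim about the extremum stack $\stack_n$ built by \cref{alg:stack} from the scalar stream $u_t=\|\boldsymbol{\xi}^t\|_2$, with each stored extremum tagged by the pattern $\boldsymbol{\xi}^t$ that produced it. Tagging costs nothing: the algorithm only compares and stores scalar values, so I can carry a pointer to $\boldsymbol{\xi}^t$ along with each pushed $M$ or $m$.

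\textbf{Storage.} I would argue by induction on $t$, with the invariant that the entries of $\stack_t$ are exactly the values $u_s$, $s\le t$, that were pushed and have not yet been popped. Only two things change the stack. A rise $u_{t+1}>u_t$ pops every pair whose maximum is exceeded. A fall pops every pair whose minimum is undercut. In both cases the popped entries are precisely the extrema erased in the sense of \cref{prop:wiping}. A value is pushed only at a direction reversal or when it becomes the new running extreme, and either way it is a local extremum of $u_{1:t+1}$. So the surviving entries are the non-wiped local extrema. Every other pattern is either never pushed, because it lies in the interior of a monotone run, or it is popped. \Cref{prop:sufficiency} then shows that nothing outside $\stack_n$ affects $\PAL_\mu$, which justifies calling the rest ``wiped.''

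\textbf{Non-overlapping support.} Within $\stack_n$ the maxima are strictly decreasing and the minima strictly decreasing (\cref{def:stack}), so the corner points $(M_s,m_s)$ are distinct points of $\T$. I would then use the staircase description. The region of $\T$ where relays are on is bounded by a staircase with corners at the points $(M_s,m_s)$. The $s$-th corner adds or removes a rectangle $R_s=\{(\alpha,\beta): m_s<\beta\le M_s,\ \ldots\}$, bounded by consecutive stack values. Consecutive rectangles meet only on boundary lines. For a measure $\mu$ that is absolutely continuous, or atomic on $\mathcal{G}_L$ with $\Delta$ below the gap between stack values, these boundaries have measure zero. Hence $\PAL_\mu=\sum_s \pm\mu(R_s)$ plus a baseline, and each term depends only on its own pattern's pair. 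That is the orthogonality claim. In the write-up I would cite the monotonicity of the stack itself rather than \cref{lem:monotone}, since the latter concerns only the encoding.

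\textbf{Capacity.} By the storage item, the retrievable patterns are in bijection with the entries of $\stack_n$. Their number $k$ is at most the number of pushes, and each push is a local extremum of $u_{1:P}$, which gives \eqref{eq:pal_capacity}. The comparison with Hopfield capacity is interpretive and needs no proof.

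The main obstacle is the disjointness step. The stack stores pairs $(M_s,m_s)$, while the algorithm re-pairs a new value with $m_{\mathrm{last}}$ or $M_{\mathrm{last}}$. So I must check carefully that the rectangles attached to different stack entries genuinely tile the staircase without overlap. The first thing I would try is to prove by induction that the staircase corners after each update of \cref{alg:stack} are exactly the stored pairs.
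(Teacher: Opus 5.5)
You are right that \cref{lem:monotone} concerns only the encoding, and the first sentence of your item 2 (distinct $M_s$ give distinct points $(M_s,m_s)\in\T$) is the whole of the paper's argument for item 2. The paper simply equates distinct relay locations with disjoint support. The genuine gap is in the staircase argument you add on top of that. Write $Q_s=\{(\alpha,\beta)\in\T:\alpha\le M_s,\ \beta<m_s\}$.

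A staircase has every stored pair as a corner only if no pair dominates another coordinatewise. That means maxima decreasing and minima \emph{increasing}: nested intervals $m_1<m_2<\cdots<M_2<M_1$, the classical Preisach memory. With the ordering you cite from \cref{def:stack}, both strictly decreasing, $Q_s\subset Q_1$ for every $s$, and only $(M_1,m_1)$ can be a corner. So the induction you plan (corners after each update of \cref{alg:stack} equal the stored pairs) cannot succeed for the paper's objects. \Cref{alg:stack} also pushes at every strict step and reuses a stale $m_{\mathrm{last}}$ or $M_{\mathrm{last}}$ when its loop does not fire. You would first have to replace $\stack_n$ by the standard alternating sequence of dominant extrema.

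Even then, the claim ``each term depends only on its own pattern's pair'' is false. The on-region is $\bigcup_s Q_s$, and its disjoint tiles are $\{\alpha\le M_s,\ m_{s-1}\le\beta<m_s\}$ (with $m_0=-\infty$). Tile $s$ therefore depends on $m_{s-1}$, exactly as your own phrase ``bounded by consecutive stack values'' says. The signs $\pm$ are also incompatible with ``meet only on boundary lines.'' An add/remove bookkeeping necessarily overlaps, since each fall removes part of what the preceding rise added.

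What your route can deliver is an unsigned partition of the on-region into disjoint half-open tiles, one per stack entry. That is a real strengthening of the paper's distinct-points remark, but it is not per-pattern independence. Making the tiles half-open to match the $u\ge\alpha$ / $u\le\beta$ switching rule also removes the need for your measure-zero-boundary step. That step is false for atomic $\mu$ whenever a stored extremum equals a grid value $i\Delta$.

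There are two smaller slips. In item 3, the chain $k\le(\text{number of pushes})\le n_{\mathrm{ext}}(u_{1:P})$ fails, because every new running extreme is pushed. For example, the stream $1,2,3,4,5$ produces four pushes but has no interior extrema. Count survivors instead. A value survives the step after its push only if the input reverses there, so every surviving entry is a turning point of the whole stream $u_{1:P}$, except the current value $u_P$, which always ends the memory. Hence the bound needs endpoints counted as extrema.

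In item 1, say explicitly that you prove ``survivors $\subseteq$ local extrema''; the paper's proof also gives only this. The literal ``exactly those patterns that are local extrema'' is false. For $u=(1,3,2,4)$, the extrema $3$ and $2$ are wiped by $4$.
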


\begin{proof}
\textbf{Storage (part 1):} Direct from \cref{prop:wiping}.
Pattern $\boldsymbol{\xi}^t$ is wiped when a subsequent pattern
$\boldsymbol{\xi}^{t'}$ with $\|\boldsymbol{\xi}^{t'}\|_2 >
\|\boldsymbol{\xi}^t\|_2$ arrives, since the new maximum overwrites
the old in the extremum stack.
Only patterns at local extrema of $u_{1:P}$ survive.

\textbf{Zero interference (part 2):}
The PAL output is $\PAL_\mu(u_{0:n}) = \sum_{i \geq j}
\mu_{ij} \relay{\alpha_i}{\beta_j}[u](n)$.
Each retained pattern $\boldsymbol{\xi}^{t_s}$ corresponds to a
unique pair $(M_s, m_s)$ in $\stack_n$ with $M_s \neq M_{s'}$
for $s \neq s'$ (strict monotonicity of the stack,
\cref{lem:monotone}).
The relays $\relay{M_s}{m_s}$ and $\relay{M_{s'}}{m_{s'}}$
have disjoint support on the Preisach half-plane $\mathcal{T}$
when $M_s \neq M_{s'}$, so their contributions to $\PAL_\mu$
are orthogonal — no cross-pattern interference.

\textbf{Capacity (part 3):}
The stack depth $k \leq n_{\mathrm{ext}}(u_{1:P})$ is bounded by
the number of local extrema of the activation sequence,
since each extremum corresponds to at most one stack entry
by the push/pop mechanics of \cref{alg:stack}.
Each entry encodes one pattern without interference.
The Hopfield bound $0.14N$ follows from the rank of the
Hebbian weight matrix $W = N^{-1}\sum_\mu
\boldsymbol{\xi}^\mu (\boldsymbol{\xi}^\mu)^\top$, which is
limited by the number of stored patterns relative to $N$.
PAL has no weight matrix — its capacity is limited by
stack depth, not by the dimension of the weight space.
\end{proof}

\begin{remark}[Retrieval mechanism]
To retrieve pattern $\boldsymbol{\xi}^{t_s}$ from the PAL stack,
a query signal $q = \|\boldsymbol{\xi}^{t_s}\|_2$ is presented.
The relay $\relay{M_s + \eps}{M_s - \eps}[q]$ activates uniquely
for the pattern at stack position $s$, and the associated
value $\mu_{M_s, m_s} \cdot \boldsymbol{\xi}^{t_s}$ is returned.
Retrieval time is $O(\log k)$ by binary search on the ordered stack.
\end{remark}

\subsubsection{Sequential Belief Propagation in Markov Random Fields}

Markov Random Fields (MRF) with pairwise binary interactions are
equivalent to Ising models with site-dependent external fields.
Standard belief propagation (BP) on MRFs is a static algorithm
requiring the full graph to be available upfront.

\begin{proposition}[PAL as exact causal BP on tree-structured MRF]
\label{prop:bp_pal}
Let $\mathcal{G} = (V, E)$ be a tree-structured MRF with binary
variables $\sigma_v \in \{-1,+1\}$ and pairwise potentials
$\psi_{uv}(\sigma_u, \sigma_v) = e^{J_{uv}\sigma_u\sigma_v}$.
Observations arrive sequentially in leaf-to-root order.
Define the signal $u_t = \tanh^{-1}(m_t)$ where
$m_t = \mathbb{E}[\sigma_t \mid x_{1:t}]$ is the marginal belief
at step $t$ under belief propagation.
Then the PAL operator with measure:
\begin{equation}
  \mu^*(\alpha,\beta) =
  \tfrac{1}{2}\log\tfrac{1+\tanh(\alpha)}{1-\tanh(\alpha)}
  \cdot \mathbf{1}[\alpha = -\beta]
  \label{eq:bp_measure}
\end{equation}
satisfies $\PAL_{\mu^*}(u_{0:n}) = m_n$ \emph{exactly}
for all tree-structured MRFs, where $m_n$ is the exact
marginal belief at the root.
For loopy graphs, the PAL output approximates the loopy BP fixed point.
\end{proposition}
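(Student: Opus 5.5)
The plan is to reduce the claim to a statement about what the extremum stack $\stack_n$ determines, and then to check whether the proposed measure actually extracts it. First I would simplify the measure \eqref{eq:bp_measure}. Since $\tfrac12\log\tfrac{1+\tanh\alpha}{1-\tanh\alpha}=\tanh^{-1}(\tanh\alpha)=\alpha$, the density is $\mu^*(\alpha,\beta)=\alpha\,\mathbf 1[\alpha=-\beta]$. This is exactly the measure of \cref{prop:ising_streaming}. Hence $\PAL_{\mu^*}(u_{0:n})$ is a weighted sum over the symmetric relays $\relay{\alpha}{-\alpha}$ on the antidiagonal of $\mathcal T$. By \cref{prop:sufficiency} it is a function of $\stack_n$ alone, so the first step is to compute this function explicitly from the stack.

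Second, I would compute that function. For a symmetric relay with $\alpha>0$, its state at time $n$ is decided by the most recent time the input left the band $(-\alpha,\alpha)$. Reading \cref{alg:stack} backwards through the nested pairs $(M_i,m_i)$ gives a staircase: relays with $\alpha\le|u_n|$ are set by the sign of $u_n$. Relays with larger $\alpha$ are set by successive stack entries, which by the wiping property (\cref{prop:wiping}) are ordered by magnitude. On the discrete grid $\mathcal G_L$ this gives
\[
\PAL_{\mu^*}(u_{0:n})=\sum_{s}\pm\sum_{\alpha_i\in I_s}\alpha_i ,
\]
with intervals $I_s$ delimited by $|u_n|$ and the $|M_s|,|m_s|$. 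On the BP side, I would use the standard leaf-to-root recursion on a tree. The cavity field at the root is $h=\sum_{c}\tanh^{-1}\!\bigl(\tanh J_{rc}\tanh h_c\bigr)$, so $m_n=\tanh h$. By the definition of the signal, $m_n=\tanh(u_n)$, which means the target quantity is a fixed function of the \emph{current} input $u_n$.

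Third, I would compare the two expressions, and I expect this step to be the main obstacle. The right-hand side $\tanh(u_n)$ is bounded in $(-1,1)$ and depends only on $u_n$. The left-hand side scales quadratically in the signal amplitude (roughly $\int_0^{|u_n|}\alpha\,d\alpha$ plus terms from older extrema) and depends on past extrema. A two-point history with different earlier extrema but the same $u_n$ would give different PAL outputs but the same $m_n$. So the exact identity cannot hold for a single PAL head with $\mu^*$ as stated. I would first try to rescue it by observing that $u_n$ is always the top entry of $\stack_n$, because \cref{alg:stack} pushes the current value. With a measure concentrated on an infinitesimal strip near the diagonal $\alpha=\beta$, or via the bit-decoding heads of \cref{lem:decode_stack}, the value $u_n$ is recoverable, and the following MLP can then apply $\tanh$. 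This would prove the corrected statement that a one-layer PAL-Transformer (PAL heads followed by an MLP) outputs $m_n$ exactly on trees. The corrected statement holds trivially because the BP computation is already encoded in the input, not performed by PAL.

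For the loopy-graph sentence, the same reduction shows that the output is $\tanh(u_n)$, whatever belief sequence is fed in. Consequently the claim that the output approximates the loopy BP fixed point holds exactly when the input beliefs converge to that fixed point, and it carries no further content.
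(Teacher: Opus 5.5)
Your refutation is correct, and it necessarily departs from the paper, which claims to prove the identity. Both of your key observations hold. First, $\tfrac12\log\frac{1+\tanh\alpha}{1-\tanh\alpha}=\alpha$, so $\mu^*=\alpha\,\mathbf{1}[\alpha=-\beta]$, which is the measure of \cref{prop:ising_streaming}. Second, $m_n=\tanh(u_n)$ by the very definition of the signal, whereas $\PAL_{\mu^*}(u_{0:n})$ depends on earlier extrema.

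The paper's argument never computes $\PAL_{\mu^*}$. It recalls the tree BP recursion and identifies the root field with a ``Preisach self-consistency equation $m=\tanh(Jm+H)$''. That equation does not occur in \cref{prop:preisach_rfim}, where the self-consistency is $m=\Prel_\mu[H]$, a threshold-indicator integral. The paper then asserts without justification that $\PAL_{\mu^*}$ ``tracks $h^{\mathrm{eff}}$ causally''. It concludes $\tanh(\PAL_{\mu^*}(u_{0:n}))=m_n$ ``by induction on tree depth'', but gives no induction. That conclusion is not even the stated identity, and your histories defeat it as well. Since $\tanh$ is injective, two histories with equal $u_n$ but different $\PAL_{\mu^*}$ cannot both satisfy it. The unsupported ``tracks $h^{\mathrm{eff}}$'' step is exactly where the paper's reasoning breaks, so there is no valid proof for you to reproduce.

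You can make the refutation airtight with less machinery.
\begin{itemize}
\item Relays are $\{0,1\}$-valued, so your staircase should carry weights in $\{0,1\}$, not $\pm$. On $\T$ the antidiagonal forces $\alpha\ge 0$, so $\PAL_{\mu^*}\ge 0$ always. A one-node tree with a negative local field has $m_n<0$, which already settles the matter.
\item On the literal grid $\mathcal{G}_L$ of \cref{def:pal}, all thresholds are positive. So $\alpha_i=-\beta_j$ never holds and $\PAL_{\mu^*}\equiv 0$.
\item If you keep the history-dependence argument, make it independent of the relays' initial states by comparing $(2,-2,0.1)$ with $(-2,2,0.1)$. The relays with $\alpha\in(0.1,2]$ end in states $0$ and $1$ respectively, so the outputs differ by $\int_{0.1}^{2}\alpha\,d\alpha$, while $m_n=\tanh(0.1)$ in both.
\item Say explicitly that such sequences are admissible: local fields along a leaf-to-root path can make each successive belief any value in $(-1,1)$.
\end{itemize}

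Finally, temper ``exactly'' in your salvage. With a finite grid, the heads recover $u_n$ only to resolution $\Delta$, and a ReLU MLP represents $\tanh$ only approximately on a continuum. Moreover, the residual connection of \cref{def:pal_transformer} already passes $x_n$ to the MLP. That reinforces your point that PAL performs no part of the BP computation.
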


\begin{proof}
On a tree, belief propagation computes exact marginals by the
Bethe-Peierls equations \citep{Mezard2009}.
In leaf-to-root order, each message depends only on previously
received messages, making the computation causal.
The effective field at the root satisfies
$h^{\mathrm{eff}} = \sum_{u \in \partial v}
\tanh^{-1}(m_{u \to v})$, which is the fixed-point of the
Preisach self-consistency equation $m = \tanh(Jm + H)$
from \cref{prop:preisach_rfim}.
With measure $\mu^*$ from \eqref{eq:bp_measure},
$\PAL_{\mu^*}(u_{0:n})$ tracks $h^{\mathrm{eff}}$ causally,
and $\tanh(\PAL_{\mu^*}(u_{0:n})) = m_n$ exactly on trees
by induction on tree depth.
\end{proof}

\subsection{Avalanches and Phase Transitions in PAL}

A striking property of the RFIM is its
\emph{disorder-induced phase transition}: at a critical
disorder strength $\Delta_c$, the hysteresis loop develops
a discontinuity corresponding to a macroscopic avalanche
of spin flips \citep{Sethna2006}.
At criticality, avalanche sizes follow a power law
$P(s) \sim s^{-\tau}$ with universal exponent $\tau$.

\begin{proposition}[Scalar PAL criticality]
\label{prop:criticality}
A scalar PAL with measure $\mu$ drawn from a Gaussian distribution
with variance $\Delta^2$ over the two-dimensional Preisach
half-plane $\mathcal{T}$ exhibits a phase transition at
critical variance $\Delta_c$:
\begin{itemize}
  \item For $\Delta > \Delta_c$: the output $\PAL_\mu(u_{0:n})$
    varies smoothly with $u_n$ — \emph{subcritical regime},
    corresponding to gradual relay activations.
  \item For $\Delta < \Delta_c$: a macroscopic fraction of
    relays activate simultaneously at a critical threshold
    — \emph{supercritical regime}, corresponding to an
    infinite avalanche.
  \item At $\Delta = \Delta_c$: relay activations follow
    a power law in group size, with the same universality
    class as the mean-field RFIM.
\end{itemize}
\end{proposition}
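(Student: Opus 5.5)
The plan is to transfer the known phase diagram of the mean-field RFIM to the scalar PAL through the Preisach--RFIM equivalence of \cref{prop:preisach_rfim}, read in the reverse direction. I identify the Gaussian law of the measure with the random-field distribution $\mathcal{P}(h)=\mathcal{N}(0,\Delta^2)$. I identify the distance of a grid point $(\alpha_i,\beta_j)$ from the diagonal with the coupling gap $\alpha-\beta=2Jm$. On each anti-diagonal line of $\mathcal{G}_L$ the relays then have midpoints $(\alpha_i+\beta_j)/2$ distributed as $-h$, and their gap is fixed by $J$. The coupling enters only through the self-consistency $m=\Prel_\mu[u](n)$ of \cref{prop:preisach_rfim}. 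I therefore regard the ``scalar PAL with Gaussian measure'' as the discretisation, via \cref{prop:pal_preisach}, of the measure \eqref{eq:rfim_measure}. The avalanche statistics of PAL are then those of the RFIM magnetisation curve $m(H)$ under the monotone driving $u_n=H$.

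First, I write the fraction of active relays along the ascending branch as $m(H)=\int \mathcal{P}(h)\,\mathbf{1}[h>-Jm-H]\,dh=\Phi\bigl((Jm+H)/\Delta\bigr)$, using that on a monotone input no wiping occurs, by \cref{alg:stack}. Second, I differentiate implicitly to get $dm/dH=\mathcal{P}(-Jm-H)/\bigl(1-J\,\mathcal{P}(-Jm-H)\bigr)$. Third, I split into the three regimes of the statement according to the sign of $1-J\max_h\mathcal{P}(h)=1-J/(\sqrt{2\pi}\,\Delta)$, which gives $\Delta_c=J/\sqrt{2\pi}$.
\begin{itemize}
\item For $\Delta>\Delta_c$ the denominator stays positive, so $m(H)$ and hence $\PAL_\mu(u_{0:n})$ is smooth in $u_n$.
\item For $\Delta<\Delta_c$ the implicit equation has an S-shaped solution set. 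At the spinodal the stable branch disappears, and $m$ jumps by an $O(1)$ amount, which is a macroscopic simultaneous activation.
\item At $\Delta=\Delta_c$, I expand around the inflection point $h=0$ and obtain $m\sim H^{1/3}$.
\end{itemize}

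For the power law in the critical case, I model avalanches in the finite-$L$ PAL as a branching process. Each relay switched on raises the effective field by $J/N$ and so triggers a Poisson number of further relays, with mean $J\,\mathcal{P}(h)$. At criticality this mean equals $1$, and the standard critical Galton--Watson total-progeny law gives $P(s)\sim s^{-3/2}$, the mean-field RFIM exponent \citep{Sethna2006,Dahmen1993}. Since the construction is literally the RFIM under \cref{prop:preisach_rfim}, the universality class coincides.

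The main obstacle is the first identification step. A fixed scalar PAL as defined in \cref{def:pal} is an open-loop linear functional of relay states, so by itself it cannot exhibit cascades. The argument needs the self-consistency $m=\Prel_\mu[H]$ to be an intrinsic part of the Gaussian-measure PAL. I would justify this by reading the coupling as ``implicit through self-consistency of $\mu$'', as in the comparison table of the previous subsection. I would then make precise that ``Gaussian measure with variance $\Delta^2$'' means a Gaussian marginal of relay midpoints on the gap line $\alpha-\beta=2Jm$. If that reading fails, the statement must be restricted to that class of measures.
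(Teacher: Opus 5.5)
You take the same route as the paper, whose proof is a one-line transfer: by \cref{prop:preisach_rfim}, ``PAL with Gaussian measure is equivalent to mean-field RFIM with disorder $\Delta$'', and the RFIM transition (quoted there as $\Delta_c=J$ from \citet{Dahmen1993}) is then imported. Your version is much more explicit. However, the step you flag as the main obstacle is a genuine gap, and the paper's one-line identification does not close it either. In \cref{def:pal} the thresholds $(\alpha_i,\beta_j)$ are fixed, and the relay states $\relay{\alpha_i}{\beta_j}[u](n)$ are determined by $u$ and the grid alone; $\mu$ only weights the output. So which relays switch at step $n$ does not depend on $\mu$ at all, let alone on $\Delta$. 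The same holds for any notion of simultaneous activation or avalanche size. There are two readings of the Gaussian measure, and neither helps. If the $\mu_{ij}$ are i.i.d.\ $\mathcal{N}(0,\Delta^2)$, then $\PAL_\mu=\Delta\,\PAL_{\mu/\Delta}$ identically in $u$, with $\mu/\Delta$ standard Gaussian, so every $\Delta>0$ behaves the same up to scale. If instead $\mu$ is a Gaussian density of width $\Delta$ on $\T$, the output on the first ascending branch from negative saturation is $\int_{\alpha\le u_n}\int_{\beta\le\alpha}\mu\,d\beta\,d\alpha$. This is continuous in $u_n$ for every $\Delta>0$, and on the finite grid it is a step function for every $\Delta$. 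Under either reading the open-loop PAL has no $\Delta_c$.

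The cascade in the RFIM comes from the feedback term $Jm$, which moves every spin's effective threshold with the current output. In \eqref{eq:rfim_measure} this appears as a support line $\alpha-\beta=2Jm$ that depends on the output $m$ itself, so it is not a fixed measure. Freezing $m$ gives back an open-loop operator with a smooth Gaussian-CDF response. Your Steps 1--3 and the Galton--Watson argument are therefore statements about the closed-loop RFIM, not about a scalar PAL. Your fallback of restricting to measures with a Gaussian midpoint marginal on a gap line cannot help, because no fixed measure produces feedback. What is missing is a change of dynamics. For instance, take non-hysteretic relays ($\alpha=\beta$) whose thresholds have Gaussian law, driven by $u_n=H_n+J\bigl(2\PAL_\mu(u_{0:n})-1\bigr)$ solved self-consistently. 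That object is the mean-field RFIM of \eqref{eq:rfim}, and your analysis then goes through with one correction. There $m\in[-1,1]$ and the active fraction is $p=(1+m)/2$. Hence $p=\Phi\bigl((J(2p-1)+H)/\Delta\bigr)$ and $dp/dH=\mathcal{P}/(1-2J\mathcal{P})$, the branching mean is $2J\mathcal{P}$, and $\Delta_c=\sqrt{2/\pi}\,J$. Your value $J/\sqrt{2\pi}$ comes from taking the paper's self-consistency equation literally, which uses $m$ as a fraction on the left and as a magnetisation inside $Jm$. Neither value matches the paper's $\Delta_c=J$.
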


\begin{proof}
By \cref{prop:preisach_rfim}, PAL with Gaussian measure
is equivalent to mean-field RFIM with disorder $\Delta$.
The phase transition of the RFIM at $\Delta_c = J$
\citep{Dahmen1993} translates directly to PAL.
At $\Delta_c$, the self-consistency equation
$m = \int \mathcal{P}(h) \mathbf{1}[h > -Jm - H]\, dh$
has a bifurcation point, corresponding to a jump
discontinuity in the Preisach output.
\end{proof}

\begin{remark}[Implications for PAL learning]
\cref{prop:criticality} has a practical implication:
if the learned measure $\mu^*$ concentrates near the
critical disorder $\Delta_c$, the PAL layer operates
near a phase transition — maximising sensitivity to
input changes while maintaining structured memory.
This is analogous to the \emph{edge of chaos} hypothesis
in recurrent neural networks \citep{Langton1990},
but with a precise physical characterisation through
RFIM criticality.
Training PAL near $\Delta_c$ may be a principled
alternative to spectral radius regularisation of
recurrent weights.
\end{remark}

\subsection{Summary: PAL vs.\ Ising-based Methods}

\begin{center}
\begin{tabular}{p{3.2cm}p{3.5cm}p{3.5cm}p{3.0cm}}
  \toprule
  Criterion & Ising / RFIM & PAL & Advantage \\
  \midrule
  Data model &
    Static graph, $O(n^2)$ couplings &
    Sequential stream &
    PAL \\
  Update cost &
    $O(n^2)$ per new node &
    $O(\log k)$ per new token &
    PAL \\
  Forgetting &
    None (quenched disorder) &
    Significance-based wiping &
    PAL \\
  Exact optimisation &
    NP-hard in general &
    Approximation only &
    Ising \\
  Equilibrium guarantees &
    Yes (Gibbs measure) &
    No (causal only) &
    Ising \\
  Universality class &
    Known ($\tau, \nu, \ldots$) &
    Inherited from RFIM &
    Equal \\
  Coupling structure &
    Arbitrary $J_{ij}$ &
    Mean-field implicit &
    Ising \\
  Sequence modelling &
    Not native &
    Native (rate-independent) &
    PAL \\
  \bottomrule
\end{tabular}
\end{center}

\section{Conclusion}
\label{sec:conclusion}

We introduced the Preisach Attention Layer (PAL), a novel sequence
modelling architecture grounded in classical hysteresis theory.
The results establish:
\begin{enumerate}
  \item PAL-Transformer is Turing-complete at depth $O(1)$,
    improving on the $O(\log n)$ depth of standard transformers
    (\cref{thm:tc}). Moreover, a \emph{vector} PAL (vPAL) with
    two-dimensional signal projection achieves Turing completeness
    with a \emph{single head} ($H=1$), establishing that signal
    dimension and head count are exchangeable resources in PAL
    (\cref{cor:vpal_tc,rem:heads_dims}).
  \item The function classes of PAL and transformer are
    incomparable, with rate-independence as the separating
    property (\cref{thm:separation,thm:rate_independence}).
  \item PAL corresponds exactly to Extremum First-Order Logic
    (EFO), a strict fragment of the $\mathrm{FO}$ + Aggregate
    class corresponding to transformers (\cref{thm:efo}).
  \item PAL is the learned, sequential, causal generalisation
    of the mean-field Random-Field Ising Model at $T=0$,
    inheriting its universality class and phase-transition
    structure (\cref{prop:preisach_rfim,prop:criticality}).
\end{enumerate}

PAL is thus a natural architecture for tasks with long episodic memory,
where the significance of past events matters more than
their recency or position, and where the problem structure
is naturally binary and threshold-driven.

\paragraph{Open questions.}
\begin{enumerate}
  \item \emph{Empirical validation:}
    Do the predicted task advantages materialise in practice?
    Experiments on state-tracking benchmarks (MQAR, SCROLLS)
    would test \cref{thm:separation} empirically.
  \item \emph{Learning dynamics and criticality:}
    Does gradient-based learning of $\mu(\alpha,\beta)$
    drive the measure toward the critical disorder $\Delta_c$?
    Is training near criticality beneficial empirically,
    analogous to the edge-of-chaos effect?
  \item \emph{Beyond mean-field:}
    Can the equivalence in \cref{prop:preisach_rfim} be
    extended beyond mean-field RFIM to short-range interactions
    (Bethe lattice, finite-dimensional RFIM)?
    This would connect PAL to a richer universality class.
  \item \emph{Hybrid architectures:}
    Can PAL heads be combined with standard attention heads
    to obtain both random access and significance-based memory?
  \item \emph{Continuous-time extension:}
    The rate-independence of PAL suggests a natural extension
    to continuous-time sequence models (neural ODEs, S4),
    where the driving signal is a continuous path rather
    than a discrete sequence.
  \item \emph{Vector PAL and multi-dimensional inputs:}
    \cref{cor:vpal_tc} establishes that vPAL with a
    two-dimensional signal $\mathbf{u}_n \in \R^2$ is
    Turing-complete at $H=1$ head — reducing the head count
    from $H=4$ (scalar PAL) to $H=1$ by exploiting the
    two independent extremum stacks of the vector signal.
    This follows the superposition of Preisach half-planes
    developed by \citet{Frydrych2019} for two-axis fluxgate
    sensors (Chapters~4.2.2--4.2.6), where the full
    magnetisation vector $\mathbf{M}$ is integrated over the
    three-dimensional space $(\alpha, \beta, \theta)$ with
    displacement $\gamma_{\alpha\beta}(\theta)$ and rotation
    $\chi_{\alpha\beta}(\theta)$ components.
    Three questions remain open for vPAL:
    (a) Does the vector RFIM connection extend to the
    anisotropic case studied by \citeauthor{Frydrych2019},
    where the measure $\mu(\alpha,\beta,\theta)$ breaks
    rotational symmetry?
    (b) Can the domain-rotation component
    $\chi_{\alpha\beta}(\theta)$ be interpreted as a
    differentiable residual that complements the binary
    relay — analogous to soft attention complementing hard
    attention?
    (c) Does the exponential growth of the measure parameter
    space ($L^{2H}$ for $H$-dimensional vPAL) create
    a fundamental expressiveness--efficiency tradeoff
    absent in scalar PAL?
\end{enumerate}

\section*{Broader Impact Statement}
\label{sec:impact}

This work introduces a theoretical architecture for sequence
modelling grounded in classical hysteresis theory.
The primary contributions are mathematical — Turing completeness
proofs, expressiveness separations, and logical characterisations
— and do not directly enable any specific application.

The connection to the Random-Field Ising Model (Section~8)
suggests potential applications in combinatorial optimisation,
associative memory, and belief propagation.
These are established areas of machine learning with broad
beneficial applications.
We are not aware of direct pathways from this theoretical work
to harmful applications.

If implemented in practice, PAL-based models would share
the general risks of machine learning systems: potential for
bias amplification, misuse in surveillance or manipulation,
and environmental cost of training large models.
These risks are not specific to PAL and are addressed by
general ML ethics guidelines.

The $O(n \log n)$ computational complexity of PAL
(versus $O(n^2)$ for attention) may reduce the energy
cost of training long-context models, which is a
potential positive environmental impact.

\bibliographystyle{tmlr}
\bibliography{references}

@article{Perez2021,
  author    = {P{\'e}rez, Jorge and Marinkovi{\'c}, Javier and Barcel{\'o}, Pablo},
  title     = {Attention is {T}uring Complete},
  journal   = {Journal of Machine Learning Research},
  volume    = {22},
  number    = {75},
  pages     = {1--35},
  year      = {2021},
  url       = {http://jmlr.org/papers/v22/20-1393.html}
}

@inproceedings{Vaswani2017,
  author    = {Vaswani, Ashish and Shazeer, Noam and Parmar, Niki and
               Uszkoreit, Jakob and Jones, Llion and Gomez, Aidan N and
               Kaiser, {\L}ukasz and Polosukhin, Illia},
  title     = {Attention is All You Need},
  booktitle = {Advances in Neural Information Processing Systems},
  volume    = {30},
  year      = {2017},
  publisher = {Curran Associates}
}

@article{Preisach1935,
  author  = {Preisach, Franz},
  title   = {{\"U}ber die magnetische {N}achwirkung},
  journal = {Zeitschrift f{\"u}r Physik},
  volume  = {94},
  number  = {5--6},
  pages   = {277--302},
  year    = {1935},
  doi     = {10.1007/BF01349418}
}

@book{Mayergoyz1991,
  author    = {Mayergoyz, Isaak D.},
  title     = {Mathematical Models of Hysteresis},
  publisher = {Springer},
  address   = {New York},
  year      = {1991},
  doi       = {10.1007/978-1-4612-3028-1}
}

@book{Brokate1996,
  author    = {Brokate, Martin and Sprekels, J{\"u}rgen},
  title     = {Hysteresis and Phase Transitions},
  series    = {Applied Mathematical Sciences},
  volume    = {121},
  publisher = {Springer},
  address   = {New York},
  year      = {1996},
  doi       = {10.1007/978-1-4612-4048-8}
}

@article{Frydrych2014,
  author  = {Frydrych, Piotr and Szewczyk, Roman},
  title   = {New Portfolio Risk Optimisation Method for Strongly
             Dependent Assets},
  journal = {Journal of Engineering Studies and Research},
  volume  = {20},
  number  = {3},
  pages   = {30--37},
  year    = {2014}
}

@phdthesis{Frydrych2019,
  author  = {Frydrych, Piotr},
  title   = {Modelowanie charakterystyk magnesowania amorficznych
             rdzeni dwuosiowych sensor{\'o}w transduktorowych},
  school  = {Politechnika Warszawska, Wydzia{\l} Mechatroniki},
  address = {Warsaw, Poland},
  year    = {2019}
}

@book{Hopcroft1979,
  author    = {Hopcroft, John E. and Ullman, Jeffrey D.},
  title     = {Introduction to Automata Theory, Languages,
               and Computation},
  publisher = {Addison-Wesley},
  address   = {Reading, MA},
  year      = {1979}
}

@article{Cybenko1989,
  author  = {Cybenko, George},
  title   = {Approximation by Superpositions of a Sigmoidal Function},
  journal = {Mathematics of Control, Signals and Systems},
  volume  = {2},
  number  = {4},
  pages   = {303--314},
  year    = {1989},
  doi     = {10.1007/BF02551274}
}

@article{Hahn2020,
  author  = {Hahn, Michael},
  title   = {Theoretical Limitations of Self-Attention in Neural
             Sequence Models},
  journal = {Transactions of the Association for Computational
             Linguistics},
  volume  = {8},
  pages   = {156--171},
  year    = {2020},
  doi     = {10.1162/tacl_a_00306}
}

@inproceedings{Barcelo2020,
  author    = {Barcel{\'o}, Pablo and Kostylev, Egor V. and
               Monet, Mikael and P{\'e}rez, Jorge and Reutter, Juan and
               Silva, Juan Pablo},
  title     = {The Logical Expressiveness of Graph Neural Networks},
  booktitle = {International Conference on Learning Representations},
  year      = {2020},
  url       = {https://openreview.net/forum?id=r1lZ7AEKvB}
}

@article{Siegelmann1995,
  author  = {Siegelmann, Hava T. and Sontag, Eduardo D.},
  title   = {On the Computational Power of Neural Nets},
  journal = {Journal of Computer and System Sciences},
  volume  = {50},
  number  = {1},
  pages   = {132--150},
  year    = {1995},
  doi     = {10.1006/jcss.1995.1013}
}

@inproceedings{Furst1984,
  author    = {Furst, Merrick and Saxe, James B. and Sipser, Michael},
  title     = {Parity, Circuits, and the Polynomial-Time Hierarchy},
  booktitle = {Mathematical Systems Theory},
  volume    = {17},
  pages     = {13--27},
  year      = {1984},
  doi       = {10.1007/BF01744431}
}

@book{Hastad1987,
  author    = {H{\aa}stad, Johan},
  title     = {Computational Limitations of Small-Depth Circuits},
  publisher = {MIT Press},
  address   = {Cambridge, MA},
  year      = {1987}
}

@inproceedings{Gu2022Mamba,
  author    = {Gu, Albert and Dao, Tri},
  title     = {Mamba: Linear-Time Sequence Modeling with
               Selective State Spaces},
  booktitle = {arXiv preprint arXiv:2312.00752},
  year      = {2023},
  url       = {https://arxiv.org/abs/2312.00752}
}

@inproceedings{Peng2023,
  author    = {Peng, Bo and Alcaide, Eric and Anthony, Quentin and
               Albalak, Alon and Arcadinho, Samuel and Biderman, Stella
               and others},
  title     = {{RWKV}: Reinventing {RNN}s for the Transformer Era},
  booktitle = {Findings of the Association for Computational
               Linguistics: EMNLP 2023},
  pages     = {14048--14077},
  year      = {2023},
  doi       = {10.18653/v1/2023.findings-emnlp.936}
}

@inproceedings{Beltagy2020,
  author    = {Beltagy, Iz and Peters, Matthew E. and Cohan, Arman},
  title     = {Longformer: The Long-Document Transformer},
  booktitle = {arXiv preprint arXiv:2004.05150},
  year      = {2020},
  url       = {https://arxiv.org/abs/2004.05150}
}

@inproceedings{Zaheer2020,
  author    = {Zaheer, Manzil and Guruganesh, Guru and Dubey, Kumar
               Avinava and Ainslie, Joshua and Alberti, Chris and
               Ontanon, Santiago and others},
  title     = {Big {B}ird: Transformers for Longer Sequences},
  booktitle = {Advances in Neural Information Processing Systems},
  volume    = {33},
  pages     = {17283--17297},
  year      = {2020}
}

@article{Merrill2023,
  author  = {Merrill, William and Sabharwal, Ashish},
  title   = {The Parallelism Tradeoff: Limitations of Log-Precision
             Transformers},
  journal = {Transactions of the Association for Computational
             Linguistics},
  volume  = {11},
  pages   = {531--545},
  year    = {2023},
  doi     = {10.1162/tacl_a_00562}
}

@inproceedings{Akyurek2022,
  author    = {Aky{\"u}rek, Ekin and Schuurmans, Dale and Andreas,
               Jacob and Ma, Tengyu and Zhou, Denny},
  title     = {What Learning Algorithm is In-Context Learning?
               Investigations with Linear Models},
  booktitle = {International Conference on Learning Representations},
  year      = {2023},
  url       = {https://openreview.net/forum?id=0g0X4H8yN4I}
}

@inproceedings{VonOswald2023,
  author    = {von Oswald, Johannes and Niklasson, Eyvind and
               Randazzo, Ettore and Sacramento, Jo{\~a}o and
               Mordvintsev, Alexander and Zhmoginov, Andrey and
               Vladymyrov, Max},
  title     = {Transformers Learn In-Context by Gradient Descent},
  booktitle = {International Conference on Machine Learning},
  volume    = {202},
  pages     = {35151--35174},
  year      = {2023}
}

@article{Barroso2015,
  author  = {Barroso, Ram{\'o}n and Ega{\~n}a, Aitor and
             Marqu{\'e}s, Jose Luis and Etxeberria-Otadui, Ion and
             Curea, Octavian},
  title   = {Preisach Modeling of {LiFePO}$_4$ Lithium-Iron-Phosphate
             Battery Hysteresis},
  journal = {Journal of Energy Storage},
  volume  = {2},
  pages   = {65--72},
  year    = {2015},
  doi     = {10.1016/j.est.2015.06.001}
}

@inproceedings{Shazeer2017,
  author    = {Shazeer, Noam and Mirhoseini, Azalia and Maziarz,
               Krzysztof and Davis, Andy and Le, Quoc and Hinton,
               Geoffrey and Dean, Jeff},
  title     = {Outrageously Large Neural Networks:
               The Sparsely-Gated Mixture-of-Experts Layer},
  booktitle = {International Conference on Learning Representations},
  year      = {2017},
  url       = {https://openreview.net/forum?id=B1ckMDqlg}
}

@inproceedings{Google2025Titans,
  author    = {Behrouz, Ali and Zhong, Peilin and Mirrokni, Vahab},
  title     = {Titans: Learning to Memorize at Test Time},
  booktitle = {arXiv preprint arXiv:2501.00663},
  year      = {2025},
  url       = {https://arxiv.org/abs/2501.00663}
}

@article{Sethna2006,
  author  = {Sethna, James P. and Dahmen, Karin A. and
             Perkovic, Olga},
  title   = {Random-Field Ising Models of Hysteresis},
  journal = {The Science of Hysteresis},
  editor  = {Bertotti, Giorgio and Mayergoyz, Isaak D.},
  volume  = {2},
  pages   = {107--179},
  year    = {2006},
  publisher = {Academic Press},
  url     = {https://arxiv.org/abs/cond-mat/0406320}
}

@article{Dahmen1993,
  author  = {Dahmen, Karin and Sethna, James P.},
  title   = {Hysteresis Loop Critical Exponents in
             $6-\varepsilon$ Dimensions},
  journal = {Physical Review Letters},
  volume  = {71},
  number  = {20},
  pages   = {3222--3225},
  year    = {1993},
  doi     = {10.1103/PhysRevLett.71.3222}
}

@article{Hopfield1982,
  author  = {Hopfield, John J.},
  title   = {Neural Networks and Physical Systems with Emergent
             Collective Computational Abilities},
  journal = {Proceedings of the National Academy of Sciences},
  volume  = {79},
  number  = {8},
  pages   = {2554--2558},
  year    = {1982},
  doi     = {10.1073/pnas.79.8.2554}
}

@article{Amit1985,
  author  = {Amit, Daniel J. and Gutfreund, Hanoch and
             Sompolinsky, Haim},
  title   = {Storing Infinite Numbers of Patterns in a Spin-Glass
             Model of Neural Networks},
  journal = {Physical Review Letters},
  volume  = {55},
  number  = {14},
  pages   = {1530--1533},
  year    = {1985},
  doi     = {10.1103/PhysRevLett.55.1530}
}

@article{Lucas2014,
  author  = {Lucas, Andrew},
  title   = {Ising Formulations of Many {NP} Problems},
  journal = {Frontiers in Physics},
  volume  = {2},
  pages   = {5},
  year    = {2014},
  doi     = {10.3389/fphy.2014.00005}
}

@book{Mezard2009,
  author    = {M{\'e}zard, Marc and Montanari, Andrea},
  title     = {Information, Physics, and Computation},
  publisher = {Oxford University Press},
  address   = {Oxford},
  year      = {2009},
  doi       = {10.1093/acprof:oso/9780198570837.001.0001}
}

@article{Langton1990,
  author  = {Langton, Christopher G.},
  title   = {Computation at the Edge of Chaos:
             Phase Transitions and Emergent Computation},
  journal = {Physica D: Nonlinear Phenomena},
  volume  = {42},
  number  = {1--3},
  pages   = {12--37},
  year    = {1990},
  doi     = {10.1016/0167-2789(90)90064-V}
}

\appendix

\section{Full Proof of Theorem~\ref{thm:efo}: PAL corresponds to EFO}
\label{app:efo_proof}

We prove Theorem~\ref{thm:efo} by structural induction on EFO
formula complexity, establishing a constructive correspondence
between EFO formulae and PAL-Transformer computations.

\subsection{Formal Setup}

We work over sequences $\mathbf{x} = (x_0, x_1, \ldots, x_n)$
where each $x_i \in \mathbb{R}^d$.
Let $\mathcal{E}(\mathbf{x}) = \{i : x_i \text{ is a local extremum of }
\|x_i\|_2\}$ denote the set of \emph{extremal positions} of $\mathbf{x}$.

\begin{definition}[EFO syntax]
\label{def:efo_syntax}
The grammar of \emph{Extremum First-Order Logic} (EFO) over
sequences is:
\begin{align*}
  \phi &::= \top \mid \bot \mid u_i \leq c \mid u_i \geq c
         \mid i <_{\mathrm{ext}} j
         \mid \phi \wedge \psi \mid \phi \vee \psi \mid \neg\phi \\
       &\quad\mid \exists^{\mathrm{ext}} i\, \phi(i)
         \mid \forall^{\mathrm{ext}} i\, \phi(i)
         \mid \mathrm{ExtAgg}_{i}[f(i), \phi(i)]
\end{align*}
where $c \in \mathbb{R}$ is a constant, $i$ ranges over
\emph{extremal positions} only,
$i <_{\mathrm{ext}} j$ denotes that extremal position $i$
occurred before extremal position $j$ in the sequence,
and $\mathrm{ExtAgg}_{i}[f(i), \phi(i)]$ denotes:
\begin{equation}
  \mathrm{ExtAgg}_{i}[f(i), \phi(i)]
  \;=\; \sum_{i \in \mathcal{E}(\mathbf{x}),\; \phi(i) \text{ holds}}
  f(x_i),
  \label{eq:extAgg}
\end{equation}
where $f : \mathbb{R}^d \to \mathbb{R}$ is a measurable function.
The ordering relation $<_{\mathrm{ext}}$ is needed to express the
\emph{causal order} of relay state changes, since the relay state
at time $n$ depends on which of the activation and deactivation
thresholds was crossed most recently.
\end{definition}

\begin{definition}[EFO semantics]
\label{def:efo_semantics}
A sequence $\mathbf{x}$ satisfies $\phi$ (written $\mathbf{x} \models \phi$)
according to the standard first-order semantics restricted to
extremal positions:
\begin{itemize}
  \item $\mathbf{x} \models u_i \geq c$ iff $x_i \geq c$
    for the current assignment of $i$.
  \item $\mathbf{x} \models \exists^{\mathrm{ext}} i\, \phi(i)$ iff
    there exists $i \in \mathcal{E}(\mathbf{x})$ such that
    $\mathbf{x} \models \phi(i)$.
  \item Boolean connectives have their standard meaning.
  \item $\mathrm{ExtAgg}_{i}[f(i), \phi(i)]$ sums $f(x_i)$ over
    extremal positions satisfying $\phi$.
\end{itemize}
\end{definition}

\subsection{The Correspondence}

We establish a bijection between EFO formulae and PAL computations
by defining a compilation map
$\llbracket \cdot \rrbracket: \mathrm{EFO} \to \mathrm{PAL\text{-}Transformer}$.

\begin{lemma}[Extremal position indicator]
\label{lem:ext_indicator}
The indicator $\mathbf{1}[i \in \mathcal{E}(\mathbf{x})]$
is computable by a single-head PAL as:
\begin{equation}
  \mathbf{1}[i \in \mathcal{E}(\mathbf{x})]
  = \relay{\alpha_i + \eps}{\alpha_i - \eps}[u](i) -
    \relay{\alpha_i + \eps}{\alpha_i - \eps}[u](i-1),
  \label{eq:ext_indicator}
\end{equation}
where $u_j = \|x_j\|_2$ is the scalar projection of the sequence
(one specific scalarisation; any injective $\phi: \R^d \to \R$
suffices for the lemma to hold).
That is, the relay changes state at step $i$ if and only if $u_i$
is a local extremum.
\end{lemma}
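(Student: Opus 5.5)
The plan is to prove \eqref{eq:ext_indicator} by reading the relay $\relay{\alpha_i+\eps}{\alpha_i-\eps}$ as a \emph{direction detector} whose narrow dead band is centred on a level the signal actually visits. The difference of its states at consecutive steps then records whether the direction of motion of $u$ reversed. The key choice, which I commit to, is to anchor the threshold level at the sampled value, $\alpha_i := u_i$. I also take $0<\eps<\min_j |u_{j+1}-u_j|$ and work under the genericity assumption that consecutive values are distinct (no plateaus). Since $\relay{\alpha}{\beta}[u]$ only ever takes values in $\{0,1\}$, the right-hand side of \eqref{eq:ext_indicator} lies in $\{-1,0,1\}$. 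The task is therefore to identify exactly when it is nonzero and to fix its sign.

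\textbf{Step 1 (state at step $i$).} Because $u_i$ lies in the open dead band $(\alpha_i-\eps,\alpha_i+\eps)$, \cref{def:relay} gives $\relay{\alpha_i+\eps}{\alpha_i-\eps}[u](i) = \relay{\alpha_i+\eps}{\alpha_i-\eps}[u](i-1)$. To make the two terms of \eqref{eq:ext_indicator} genuinely different, the threshold must be read relative to the neighbour that is actually crossed.

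\textbf{Step 2 (read each state from the last crossing).} I will therefore evaluate each relay state from the most recent threshold crossing. By the choice of $\eps$, $u_{i-1}\ge \alpha_i+\eps$ exactly when the signal \emph{falls} into $u_i$, and $u_{i-1}\le\alpha_i-\eps$ exactly when it \emph{rises} into $u_i$. So the state at $i-1$ is the incoming direction. The state immediately after $i$ is likewise determined by $u_{i+1}$ and is the outgoing direction.

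\textbf{Step 3 (case analysis).} I will run through the four sign patterns of $(u_{i-1}-u_i,\;u_{i+1}-u_i)$, namely rise--rise, fall--fall, rise--fall and fall--rise. In the two monotone patterns the incoming and outgoing states coincide, so the difference vanishes. In the two turning patterns the states are opposite, so the difference is $\pm1$. This is precisely the characterisation "the relay changes state at step $i$ iff $u_i$ is a local extremum". By the definition of $\mathcal{E}(\mathbf{x})$ in terms of the scalarisation $u_j=\|x_j\|_2$, and since the argument uses only the order of the $u_j$, any injective scalarisation works equally well.

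\textbf{Step 4 (sign, and the main obstacle).} The main obstacle is that the raw difference is signed: it is $+1$ at a minimum and $-1$ at a maximum, or the reverse. It is also sensitive to the off-by-one between "state at $i$" and "state after $i$" identified in Step~1. I would first try to resolve the indexing by stating the time convention explicitly: step $i$ of \eqref{eq:ext_indicator} denotes the state established by the crossing that reveals $u_i$ as a turning point. I would then absorb the sign by pairing the relay with its mirror, obtained by swapping the activation and deactivation roles, in a second measure atom. The sum of the two signed differences is then the unsigned indicator. This is a linear combination of relays, hence still a single PAL head by \cref{def:pal}. Should this fail, the fallback is to let the following MLP take the absolute value via $\mathrm{ReLU}(z)+\mathrm{ReLU}(-z)$.
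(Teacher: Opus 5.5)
Your Step~1 is correct, and it already decides the matter. With $\alpha_i=u_i$ the input at time $i$ lies strictly inside the dead band, so $\relay{u_i+\eps}{u_i-\eps}[u](i)=\relay{u_i+\eps}{u_i-\eps}[u](i-1)$, and the right-hand side of \eqref{eq:ext_indicator} is identically $0$. No other choice of thresholds helps. For any relay $\relay{a}{b}$, the difference of its states at times $i$ and $i-1$ equals $1$ only if $u_{i-1}<a\le u_i$, so it is never $1$ at a local minimum; in general it can also equal $-1$, so it is not even $\{0,1\}$-valued. The displayed identity is therefore false as stated. The paper's short proof asserts that for $\alpha_i=u_i$ and $\eps\to0$ the relay switches at step $i$ exactly at local extrema, and it breaks down at precisely the point your Step~1 identifies. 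The look-ahead you bring in at Step~2 is unavoidable: relay states up to time $i$ depend only on $u_{0:i}$, whereas whether $i\in\mathcal{E}(\mathbf{x})$ depends on $u_{i+1}$. So the ``time convention'' of Step~4 is a change of the statement, not a matter of notation.

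Even for that modified statement, Step~3 reaches the opposite of the truth. Write $r(t)=\relay{u_i+\eps}{u_i-\eps}[u](t)$. With your $\eps$ one has $r(i-1)=\mathbf{1}[u_{i-1}>u_i]$ and $r(i+1)=\mathbf{1}[u_{i+1}>u_i]$. The relay therefore records on which side of the level $u_i$ the signal lies, not its direction of motion. In Step~2 the value $1$ means ``falling'' on the incoming side but ``rising'' on the outgoing side, and Step~3 compares the two as if the conventions agreed. At a turning point both neighbours lie on the same side of $u_i$, so $r(i+1)=r(i-1)$. On a monotone stretch the signal passes through the level $u_i$, so $r(i+1)-r(i-1)$ equals $+1$ for rise--rise and $-1$ for fall--fall. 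Concretely, take $\eps=1/2$ and $i=1$: the input $(0,1,2)$ gives states $0,0,1$ at times $0,1,2$, while $(0,1,0)$ gives $0,0,0$.

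What your construction really gives is $\mathbf{1}[i\in\mathcal{E}(\mathbf{x})]=1-\lvert r(i+1)-r(i-1)\rvert$. This needs a look-ahead, a nonlinearity, and a threshold that moves with the data, none of which a fixed-measure PAL head evaluated at time $i$ provides. The sign repair also fails on its own terms. Swapping the roles of $\alpha$ and $\beta$ produces a pair outside $\mathcal{T}$. If the ``mirror'' is read instead as the complementary relay $1-r$, its difference is $-(r(i+1)-r(i-1))$, so every weighted combination of the two is a multiple of the signed difference and never its absolute value. Only the ReLU fallback yields $\lvert\cdot\rvert$, and it lies outside the single-head PAL expression that the lemma claims.
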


\begin{proof}
By Definition~\ref{def:relay}, the relay
$\relay{\alpha_i+\eps}{\alpha_i-\eps}[u](i)$ changes from 0 to 1
at step $i$ iff $u_i \geq \alpha_i + \eps$, and from 1 to 0 iff
$u_i \leq \alpha_i - \eps$.
In both cases, a state change occurs exactly when $u_i$ crosses
one of the thresholds, which — by choice of $\alpha_i = u_i$ and
$\eps \to 0$ — happens exactly at local extrema.
The difference~\eqref{eq:ext_indicator} detects this state change.
\end{proof}

\begin{lemma}[Threshold comparison]
\label{lem:threshold}
The formula $u_i \geq c$ at an extremal position $i$
is computable by a PAL with thresholds $\alpha = c + \eps$,
$\beta = c - \eps$ as:
\begin{equation}
  \mathbf{1}[u_i \geq c]
  \;=\; \relay{c+\eps}{c-\eps}[u](i).
  \label{eq:threshold}
\end{equation}
\end{lemma}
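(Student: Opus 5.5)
The plan is to unfold \cref{def:relay} at the single time step $n=i$ and show that, for a suitably small $\eps$, the input $u_i$ never lands in the dead band $(c-\eps,c+\eps)$. Once that holds, the ``otherwise'' branch of \eqref{eq:relay} (the memory branch) is never used at step $i$, and the relay output is a pure function of $u_i$. Extremality of $i$ is not needed for this lemma: it only restricts which positions the formula is evaluated at, and the argument works at every position.

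The steps are as follows.
\begin{enumerate}
  \item \textbf{Case $u_i \geq c+\eps$.} Then $\relay{c+\eps}{c-\eps}[u](i)=1=\mathbf{1}[u_i\geq c]$.
  \item \textbf{Case $u_i \leq c-\eps$.} Then the relay outputs $0=\mathbf{1}[u_i\geq c]$.
  \item \textbf{Choice of $\eps$.} Since the input $u_{0:n}$ is a finite sequence, I would set
  \[
    \eps < \delta := \min\{\,|u_j-c| : j\leq n,\ u_j\neq c\,\}.
  \]
  If no $u_j$ differs from $c$, any $\eps>0$ works. This choice is the analogue of the $\eps\to 0$ device used in \cref{lem:ext_indicator}. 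With it, every $u_j\neq c$ lies outside $(c-\eps,c+\eps)$, so one of the first two cases applies.
\end{enumerate}

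The main obstacle is the boundary case $u_i=c$. Here $u_i$ sits inside the dead band, so the relay returns its previous state $\relay{c+\eps}{c-\eps}[u](i-1)$, which may be $0$, whereas $\mathbf{1}[u_i\geq c]=1$. So the displayed identity \eqref{eq:threshold} with symmetric thresholds is not literally true in this case.

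The first fix I would try is to make the dead band one-sided, taking $\alpha=c$ and $\beta=c-\eps$ with $\eps<\delta$:
\begin{itemize}
  \item If $u_i\geq c$, the relay outputs $1$ directly.
  \item If $u_i<c$, then $u_i\leq c-\delta<c-\eps$, so it outputs $0$.
\end{itemize}
Thus $\relay{c}{c-\eps}[u](i)=\mathbf{1}[u_i\geq c]$ exactly, for every position $i$. This is a legitimate point of $\T$ and, for $c$ on the grid, a grid point of $\mathcal{G}_L$ for a suitable $\Delta$. Alternatively, one keeps the symmetric thresholds of \eqref{eq:threshold} and states the lemma under the genericity assumption $u_i\neq c$; in that case the first two cases already finish the proof.

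Finally, I would remark that the companion formula $u_i\leq c$ follows in the same way. It is obtained as $1-\relay{c+\eps}{c}[u](i)$ for the strict version $u_i<c$, or by the mirrored one-sided relay for the non-strict one. This supplies the atomic cases for the structural induction of \cref{thm:efo}.
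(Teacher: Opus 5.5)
Your core idea matches the paper's: unfold Definition~\ref{def:relay} at step $i$ and make the dead band too thin to contain $u_i$. Where the paper says informally ``in the limit $\eps\to 0$'', you use an explicit gap $\delta$, and this lets you catch something the paper's proof misses. With symmetric thresholds, \eqref{eq:threshold} fails for every $\eps>0$ whenever $u_i=c$ and the relay was previously off, because $c$ always lies in the dead band. For example, take $u=(0,1,0)$, $c=1$, $\eps<1$: position $1$ is a local maximum, the relay reads $0$, and the indicator reads $1$. Your one-sided repair $\relay{c}{c-\eps}$ is correct, and you are right that extremality plays no role.

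One caveat applies equally to your proof and the paper's. Your $\eps$ is chosen after seeing $u$, so you establish ``for each input some relay works'', not that a single PAL computes the atom on all inputs. The atomic case of \cref{thm:efo} needs the latter. Indeed, for any fixed $\eps>0$ both the symmetric relay and your one-sided relay fail: put the relay in state $1$ at step $i-1$, then set $u_i=c-\eps/2$, which can be made a local minimum.

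A uniform version is available via the diagonal relay $\alpha=\beta=c$. It lies in $\T$, and also in $\mathcal{G}_L$ for grid-valued $c$, since the index set $i\ge j$ includes $i=j$. Reading the cases of \eqref{eq:relay} in order, it equals $\mathbf{1}[u_i\ge c]$ for every real sequence. Alternatively, restrict inputs to a $\Delta$-grid containing $c$ and use your one-sided relay with $\eps=\Delta$.

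A minor slip in your last remark: $1-\relay{c+\eps}{c}[u](i)=\mathbf{1}[u_i\le c]$, not $\mathbf{1}[u_i<c]$. That is the non-strict atom, which is the one EFO actually uses.
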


\begin{proof}
Direct from Definition~\ref{def:relay}:
when the input $u_i$ crosses threshold $\alpha = c + \eps$,
the relay switches to 1; when it falls below $\beta = c - \eps$,
it switches to 0. In the limit $\eps \to 0$,
this detects $u_i \geq c$ at each extremal step.
\end{proof}

\begin{lemma}[Boolean combinations via MLP]
\label{lem:boolean}
Let $\phi_1, \phi_2$ be two EFO formulae computable by PAL heads
producing outputs $r_1, r_2 \in \{0,1\}$.
Then $\phi_1 \wedge \phi_2$, $\phi_1 \vee \phi_2$, and $\neg\phi_1$
are computable by a two-layer MLP applied to $(r_1, r_2)$.
\end{lemma}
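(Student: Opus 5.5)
The plan is to write each connective as an explicit ReLU network on the inputs $(r_1,r_2)\in\{0,1\}^2$. I will use the identities
\begin{equation*}
  \mathbf{1}[\phi_1\wedge\phi_2] = \max(0,\, r_1+r_2-1), \qquad
  \mathbf{1}[\phi_1\vee\phi_2] = r_1+r_2-\max(0,\, r_1+r_2-1), \qquad
  \mathbf{1}[\neg\phi_1] = 1-r_1.
\end{equation*}

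Each one is affine, followed by a single ReLU, followed by an affine map. That is exactly the shape of $\mathrm{MLP}^{(l)}$ in \cref{def:pal_transformer}. The first layer has one hidden unit computing $\max(0,r_1+r_2-1)$. For negation I will add a pass-through unit $\max(0,r_1)=r_1$, since $r_1\ge 0$, and put the bias $1$ in the output layer. The second layer is linear. Correctness follows by checking the four cases $(r_1,r_2)\in\{0,1\}^2$ directly; this is a finite tabulation, and Cybenko's theorem is not needed.

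The outputs are again in $\{0,1\}$, so the construction composes. Nested formulas then get Boolean circuits of depth proportional to formula depth. For a fixed formula this is a constant number of MLP blocks, which is what the structural induction in the rest of the appendix needs.

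The main obstacle is the architecture rather than the arithmetic. In \cref{def:pal_transformer} the MLP acts on the residual stream $z_n^{(l)}$ after $\mathrm{LN}$, not directly on the raw head outputs. I will therefore argue that $r_1,r_2$ are written by $W_O^{(h)}$ into dedicated coordinates of $z_n$. Since $\mathrm{LN}$ is an invertible affine normalisation given the (bounded, known) other coordinates, the first MLP layer can absorb the rescaling, or the recovered value can be thresholded at $1/2$ by a ReLU pair. To handle this I will first try to reserve a constant offset coordinate. That fixes the LN statistics up to a finite set of cases, which the MLP tabulates.
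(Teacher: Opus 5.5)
Your construction is correct and is essentially the paper's proof. The paper uses the same identities $\mathrm{ReLU}(r_1+r_2-1)$ for conjunction and $1-r_1$ for negation, and writes disjunction as $\mathrm{ReLU}(r_1+r_2)-\mathrm{ReLU}(r_1+r_2-1)$, which is your formula with the linear term routed through a ReLU (your pass-through trick, valid since $r_1+r_2\ge 0$). You are right that a four-case check suffices and Cybenko's theorem is not needed. Your layer-normalisation discussion goes beyond the paper, which simply applies the MLP to $(r_1,r_2)$ as the lemma states; since LN is not affine in its input, it is your finite-tabulation fallback, not absorbing a fixed rescaling, that would handle that case.
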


\begin{proof}
Boolean functions on $\{0,1\}^2$ are representable by
two-layer networks with ReLU activations
(Cybenko~\citeyear{Cybenko1989}):
\begin{align*}
  r_1 \wedge r_2
  &= \mathrm{ReLU}(r_1 + r_2 - 1), \\
  r_1 \vee r_2
  &= \mathrm{ReLU}(r_1 + r_2) -
     \mathrm{ReLU}(r_1 + r_2 - 1), \\
  \neg r_1
  &= 1 - r_1.
\end{align*}
These are exact (not approximate) representations on $\{0,1\}$.
\end{proof}

\begin{lemma}[Extremum aggregation]
\label{lem:extAgg}
The formula $\mathrm{ExtAgg}_{i}[f(x_i), \phi(i)]$
is computable by a single MPAL head with measure:
\begin{equation}
  \mu_{(\alpha,\beta)}
  = f(\Decode(\alpha,\beta)) \cdot
    \mathbf{1}[\phi \text{ holds at } (\alpha,\beta)],
  \label{eq:extAgg_measure}
\end{equation}
where $\Decode(\alpha,\beta)$ recovers the value $x_i$ encoded
at thresholds $(\alpha, \beta)$.
\end{lemma}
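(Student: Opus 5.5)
The plan is to prove \cref{lem:extAgg} by linearity of the PAL output in the measure. This reduces the aggregate to a sum over the retained stack entries, each of which is switched on by exactly one grid relay. By \cref{def:pal}, $\PAL_\mu(u_{0:n}) = \sum_{i\ge j}\mu_{ij}\relay{\alpha_i}{\beta_j}[u](n)$ is linear in $\mu$. It therefore suffices to arrange two things. First, each extremal position $i\in\mathcal{E}(\mathbf{x})$ must be represented by exactly one grid relay that is active at time $n$. Second, relays not associated with an extremal position must receive zero weight or be inactive. Given that, placing weight $f(x_i)\mathbf{1}[\phi(i)]$ on the relay of position $i$ reproduces \eqref{eq:extAgg} term by term.

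\textbf{Step 1 (extremum-to-threshold correspondence).} Using \cref{prop:sufficiency} and the push/pop mechanics of \cref{alg:stack}, each surviving pair $(M_s,m_s)\in\stack_n$ is identified with the grid point $(\alpha,\beta)$ nearest to it, using resolution $\Delta$ finer than the gaps between stored values. This is the point at which the relay state changes, as in \cref{lem:ext_indicator}. As in \cref{lem:decode_stack}, I will work with the Cantor-depth encoding of \cref{def:encoding}. There the values are separated by at least $\Delta$ (\cref{lem:monotone}), so the map $\Decode(\alpha,\beta)$ recovering $x_i$ is well defined on the support.

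\textbf{Step 2 (the predicate factor).} Inductively, $\phi$ is an EFO formula, and by the earlier lemmas its atoms are relay states. Threshold atoms are handled by \cref{lem:threshold}, order atoms $<_{\mathrm{ext}}$ by stack depth, and connectives by \cref{lem:boolean}. Since $\Decode$ fixes the value and stack position attached to each grid point, $\mathbf{1}[\phi \text{ holds at }(\alpha,\beta)]$ becomes a fixed $\{0,1\}$-valued function on $\mathcal{G}_L$. It can thus be absorbed into the measure. Multiplying by $f\circ\Decode$ gives \eqref{eq:extAgg_measure}. Linearity then yields $\PAL_\mu = \sum_{i\in\mathcal{E},\,\phi(i)} f(x_i)$.

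\textbf{Main obstacle.} The hard step is Step 1's claim that the relays tied to distinct extrema are active exactly when those extrema survive, with no double counting. A grid relay $(\alpha,\beta)$ with $m_s<\beta\le\alpha<M_s$ may also be on because of the staircase structure, and wiped extrema (\cref{prop:wiping}) leave no trace. To handle this I would first try the difference (Möbius-type) measure: weight the corner relay at $(M_s,m_s)$ and subtract the neighbouring corners, so that only the staircase vertices contribute. Failing that, I would restrict the statement to surviving extrema, which is consistent with \cref{prop:sufficiency}, and interpret $\mathcal{E}(\mathbf{x})$ in \eqref{eq:extAgg} accordingly.
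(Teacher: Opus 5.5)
\textbf{Genuine gap: Step~1 cannot be closed for the lemma as stated.} Your skeleton is the paper's: linearity of $\PAL_\mu$ in $\mu$, with one relay per retained extremum carrying weight $f(\Decode(\alpha,\beta))\,\mathbf{1}[\phi]$. The paper handles your Step~1 by asserting that $\relay{\alpha_i}{\beta_j}[u](n)=1$ iff $\stack_n$ contains a pair within $\eps$ of $(\alpha_i,\beta_j)$. You are right not to trust this. For $u=(0,10,5)$ the relay with $(\alpha,\beta)=(6,2)$ is on at $n=2$, yet no stored extremum lies near $(6,2)$.

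Your proposal leaves this step open, however, and no choice of measure can close it. A head's output at time $n$ depends only on the relay states at time $n$. By contrast, $\mathrm{ExtAgg}$ sums over all of $\mathcal{E}(\mathbf{x})$, wiped extrema included. Take $u=(0,2,0)$ and $u'=(0,1,0.5,2,0)$. One checks from \cref{def:relay} that every relay ends in the same state, so every PAL head returns the same value on both. But $\mathrm{ExtAgg}_i[1,\top]$ counts $1$ versus $3$ interior extrema. Your fallback (summing over surviving extrema only) is therefore forced, and it proves a different lemma.

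\textbf{The M\"obius-type correction fails on unrestricted inputs.} Even for surviving extrema, it does not work. Start from all relays off and stop after a descent. The on-set is then the staircase $\bigcup_s\{\alpha\le M_s,\ \beta<m_s\}$, with dominant maxima decreasing and dominant minima increasing. Inclusion--exclusion gives, for every fixed $\mu$,
$\PAL_\mu=\sum_s E(M_s,m_s)-\sum_s E(M_{s+1},m_s)$, where $E(a,b)=\sum_{\alpha_i\le a,\ \beta_j<b}\mu_{ij}$.

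The cross vertices $(M_{s+1},m_s)$ couple adjacent stack entries. One-pair stacks force $E=g$ at stored pairs, where $g=f\circ\Decode\cdot\mathbf{1}[\phi]$. Two-pair stacks force $E=0$ at cross vertices. Any interior grid point can play either role, so $g$ must vanish there. The construction works only on an input class where the two vertex sets are disjoint on the grid. Even then, $\mu$ is the M\"obius inverse of $E$, not the pointwise formula \eqref{eq:extAgg_measure}.

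\textbf{Step~2 is self-contradictory.} You say the atoms of $\phi$ are relay states, which depend on the input. Yet you absorb $\mathbf{1}[\phi]$ into $\mu$, which is a fixed parameter of the head. That is legitimate only when $\phi(i)$ is a predicate of the stored value at $i$ alone, such as $u_i\ge c$. Once $\phi$ contains $\exists^{\mathrm{ext}}$ or $<_{\mathrm{ext}}$ relating $i$ to other positions, $\mathbf{1}[\phi(i)]$ varies with the input. The paper's nested case makes the same mistake with $\mu_{ij}=f(\Decode(\alpha_i,\beta_j))\cdot r_{ij}$.
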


\begin{proof}
The MPAL output for a single head with measure $\mu$ is:
\begin{equation*}
  \PAL_\mu(u_{0:n})
  = \sum_{i \geq j} \mu_{ij} \cdot
    \relay{\alpha_i}{\beta_j}[u](n).
\end{equation*}
At time $n$, $\relay{\alpha_i}{\beta_j}[u](n) = 1$ if and only if
the current extremum stack $\stack_n$ contains a pair $(M, m)$
with $M \in (\alpha_i - \eps, \alpha_i + \eps)$ and
$m \in (\beta_j - \eps, \beta_j + \eps)$,
i.e.\ the relay encodes an active extremal position satisfying
the threshold condition.

Setting $\mu_{ij} = f(\Decode(\alpha_i, \beta_j)) \cdot
\mathbf{1}[\phi$ holds at $(i,j)]$
makes the PAL output equal to~\eqref{eq:extAgg},
since the relay sums over exactly those extremal positions where
$\phi$ holds, weighted by $f$.
\end{proof}

\begin{lemma}[Existential extremal quantification]
\label{lem:exists_ext}
The formula $\exists^{\mathrm{ext}} i\, \phi(i)$ is computable by
a PAL followed by a threshold MLP:
\begin{equation}
  \mathbf{x} \models \exists^{\mathrm{ext}} i\, \phi(i)
  \;\iff\;
  \mathrm{ReLU}\!\left(\mathrm{ExtAgg}_{i}[\mathbf{1}, \phi(i)]\right)
  > 0.
  \label{eq:exists_ext}
\end{equation}
\end{lemma}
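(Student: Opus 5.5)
The plan is to reduce \cref{lem:exists_ext} to \cref{lem:extAgg} by specialising the aggregated function to the constant $f \equiv 1$, and then add one ReLU threshold. Write $S = \mathrm{ExtAgg}_{i}[\mathbf{1}, \phi(i)]$. By the semantics of \cref{def:efo_syntax}, we have
\begin{equation*}
  S = \bigl|\{ i \in \mathcal{E}(\mathbf{x}) : \mathbf{x} \models \phi(i)\}\bigr|,
\end{equation*}
the number of extremal positions satisfying $\phi$. This is a non-negative integer.

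The equivalence then splits into two directions. If $\mathbf{x} \models \exists^{\mathrm{ext}} i\,\phi(i)$, some witness $i \in \mathcal{E}(\mathbf{x})$ contributes $1$ to the sum, so $S \geq 1$ and $\mathrm{ReLU}(S) = S > 0$. Conversely, if no witness exists, the sum is empty, so $S = 0$ and $\mathrm{ReLU}(S) = 0$. The ReLU is an identity on non-negative inputs. It is kept so that the right-hand side is literally the output of an MLP layer, and to absorb any negative rounding. Composing this with a further unit $\min(1, S) = S - \mathrm{ReLU}(S-1)$ yields a $\{0,1\}$-valued output. That output can then be fed into the Boolean gadgets of \cref{lem:boolean}, matching the form required by the inductive step.

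For computability, I would invoke \cref{lem:extAgg} with $f \equiv 1$. This gives a single MPAL head whose measure is $\mu_{(\alpha,\beta)} = \mathbf{1}[\phi \text{ holds at } (\alpha,\beta)]$ and whose output equals $S$. The inductive hypothesis is that $\phi$ is already compiled. Here it is needed in the form that its truth value at each threshold pair is fixed by the pair, so that it can be baked into the measure; for atomic formulas this is supplied by \cref{lem:threshold}.

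The main obstacle is the non-negativity and integrality of the PAL output, which is exactly where the specialisation $f \equiv 1$ earns its keep. \cref{lem:extAgg} equates the head output with the sum only when each surviving stack pair activates exactly one relay that carries the weight. So the step I would check carefully is that no extremal position is counted twice, and that wiped extrema contribute nothing. I would argue both from the strict ordering of the stack, \cref{def:stack} together with \cref{lem:monotone}, and from the wiping property, \cref{prop:wiping}. If double counting did occur, it would still preserve positivity, since the measure weights are non-negative. The $\iff$ therefore survives even in that case, and only exact counting would be lost.
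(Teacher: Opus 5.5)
Your proposal is correct and follows the paper's argument: with $f\equiv 1$ the aggregate counts the extremal witnesses of $\phi$, so it is positive exactly when a witness exists, and PAL-computability comes from \cref{lem:extAgg}. Your exact clip $\mathrm{ReLU}(S)-\mathrm{ReLU}(S-1)$ works because $S$ is an integer count, and it slightly sharpens the paper's steep-sigmoid Heaviside. One caveat is shared with the paper rather than specific to your reduction: $\mathcal{E}(\mathbf{x})$ includes extrema that are later wiped, while the head output depends only on the surviving stack (\cref{prop:sufficiency}), so the wiped-extrema check you flag is exactly where \cref{lem:extAgg} itself is fragile.
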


\begin{proof}
$\mathrm{ExtAgg}_{i}[\mathbf{1}, \phi(i)]$ counts the number of
extremal positions satisfying $\phi$.
This is positive iff at least one such position exists,
which is exactly the semantics of $\exists^{\mathrm{ext}} i$.
The threshold $> 0$ is implemented by a ReLU followed by a
Heaviside (approximated to arbitrary precision by a steep sigmoid,
exact under unit-cost arithmetic).
\end{proof}

\subsection{Inductive Proof of Theorem~\ref{thm:efo}}

\begin{proof}[Proof of Theorem~\ref{thm:efo}]
We prove both directions by structural induction on EFO formula
complexity.

\paragraph{($\Rightarrow$) Every PAL-computable function is EFO-definable.}

We show that every primitive PAL operation corresponds to an EFO
formula.

\emph{Base case — single relay:}
We show $\relay{\alpha}{\beta}[u](n)$ is EFO-definable.
By the wiping property (\cref{prop:wiping}) and stack sufficiency
(\cref{prop:sufficiency}), the relay state at time $n$ is
determined entirely by the most recent state-changing extremum.
Specifically, $\relay{\alpha}{\beta}[u](n) = 1$ iff there exists
an extremal position $t^* \leq n$ at which the relay was activated
($u_{t^*} \geq \alpha$), and no subsequent extremal position
$t \in (t^*, n]$ deactivated it ($u_t \leq \beta$).
This is expressed in EFO with ordering as:
\begin{equation}
  \relay{\alpha}{\beta}[u](n) = 1 \;\iff\;
  \exists^{\mathrm{ext}} t^* \leq n \;\Bigl[
    u_{t^*} \geq \alpha \;\wedge\;
    \forall^{\mathrm{ext}} t : t^* <_{\mathrm{ext}} t \;\Rightarrow\; u_t > \beta
  \Bigr].
  \label{eq:relay_efo}
\end{equation}
The existential quantifier ranges over extremal positions by
\cref{def:efo_syntax}; the comparison $u_{t^*} \geq \alpha$ is
atomic; the ordering $t^* <_{\mathrm{ext}} t$ uses the causal order
in EFO; and $u_t > \beta$ is atomic.
Hence \eqref{eq:relay_efo} is an EFO formula.

\emph{Inductive case — weighted sum:}
$\PAL_\mu(u_{0:n}) = \sum_{i \geq j} \mu_{ij} \relay{\alpha_i}{\beta_j}[u](n)$
is a weighted sum of relay states.
By the inductive hypothesis each $\relay{\alpha_i}{\beta_j}$
is EFO-definable; their weighted sum is
$\mathrm{ExtAgg}_{i}[\mu \cdot \relay{\cdot}{\cdot}, \top]$,
which is EFO by Definition~\ref{def:efo_syntax}.

\emph{MLP layers:}
MLP computes Boolean combinations (Lemma~\ref{lem:boolean})
and threshold functions of PAL outputs,
all of which are EFO-definable by induction.

\emph{Multi-head composition:}
The MPAL output is a sum of single-head PAL outputs;
EFO is closed under addition (as a special case of
$\mathrm{ExtAgg}$).

By induction, the entire PAL-Transformer output is EFO-definable.

\paragraph{($\Leftarrow$) Every EFO formula is PAL-computable.}

We show that each EFO construct is implementable by a PAL component.

\emph{Atomic formula $u_i \geq c$:}
Implemented by Lemma~\ref{lem:threshold} with $\alpha = c + \eps$,
$\beta = c - \eps$.

\emph{Boolean combinations:}
Implemented by Lemma~\ref{lem:boolean}.

\emph{Existential quantification $\exists^{\mathrm{ext}} i\, \phi(i)$:}
Implemented by Lemma~\ref{lem:exists_ext}.

\emph{Universal quantification $\forall^{\mathrm{ext}} i\, \phi(i)$:}
Equivalent to $\neg (\exists^{\mathrm{ext}} i\, \neg\phi(i))$,
implementable by Lemmas~\ref{lem:boolean}
and~\ref{lem:exists_ext}.

\emph{Extremum aggregation $\mathrm{ExtAgg}_{i}[f(i), \phi(i)]$:}
Implemented by Lemma~\ref{lem:extAgg} with measure
$\mu_{ij} = f(\Decode(\alpha_i,\beta_j)) \cdot
\mathbf{1}[\phi$ holds at $(i,j)]$.

\emph{Nested formulae:}
Suppose $\phi = \mathrm{ExtAgg}_{i}[f(i), \psi(i)]$
where $\psi$ is an EFO formula.
By the inductive hypothesis, $\psi$ is implementable by
a PAL sub-computation producing output $r_i \in \{0,1\}$
for each extremal position $i$.
Then $\mathrm{ExtAgg}_{i}[f(i), \psi(i)]$ is implemented by
a PAL head with measure
$\mu_{ij} = f(\Decode(\alpha_i,\beta_j)) \cdot r_{ij}$,
where $r_{ij}$ is the output of the sub-computation.
This composes PAL layers — at most one additional layer per
level of nesting.

By induction on formula depth, every EFO formula is implementable
by a PAL-Transformer with depth proportional to the nesting depth
of $\mathrm{ExtAgg}$ operators.
\end{proof}

\subsection{Corollaries of the EFO Correspondence}

\begin{corollary}[Decidability of PAL expressiveness]
\label{cor:decidability}
Given a function $f: \mathbb{R}^{n+1} \to \mathbb{R}$, deciding
whether $f \in \F_{\PAL}$ reduces to deciding whether $f$ is
EFO-definable, which is decidable for functions over finite
alphabets by standard model-theoretic methods.
\end{corollary}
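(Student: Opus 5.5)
The plan is to prove the statement in two stages. First, Theorem~\ref{thm:efo} gives the reduction directly. Second, decidability of EFO-definability should follow from a finiteness argument.

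\textbf{Reduction.} By Theorem~\ref{thm:efo}, $f\in\F_{\PAL}$ if and only if $f$ is EFO-definable. The reduction is therefore the identity map on instances, and nothing further is needed. All the content of the corollary lies in the decidability half. For this I will fix the setting so that the problem has a finite input. The alphabet $\Sigma\subset\R$ is finite, the length $n$ is fixed, and $f$ is given as a finite table on $\Sigma^{n+1}$. A uniform statement over $\Sigma^*$ would require $f$ to be presented by some finite device, and I do not expect to handle that case.

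\textbf{Step 1: finitely many atoms.} Because $\Sigma$ is finite, an atom $u_i\ge c$ or $u_i\le c$ depends only on which interval between consecutive elements of $\Sigma$ contains $c$. So, up to equivalence on $\Sigma^{n+1}$, there are only finitely many atomic predicates.

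\textbf{Step 2: the EFO-visible structure.} To each input $\mathbf{x}$ I associate a finite structure $S(\mathbf{x})$. Its domain is $\mathcal{E}(\mathbf{x})$, it carries the order $<_{\mathrm{ext}}$, and it labels each position by its symbol in $\Sigma$. Every EFO formula, and every $\mathrm{ExtAgg}$ term, is evaluated on $S(\mathbf{x})$ alone. Hence $S(\mathbf{x})\cong S(\mathbf{x}')$ implies that every EFO term takes the same value on $\mathbf{x}$ and $\mathbf{x}'$. For the converse, I will use the fact that a finite labelled linear order with at most $n+1$ elements is characterised up to isomorphism by a single sentence $\chi_S$ of quantifier depth $n+1$. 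This sentence is built from $\exists^{\mathrm{ext}}$, $\forall^{\mathrm{ext}}$, $<_{\mathrm{ext}}$ and the atoms from Step~1.

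\textbf{Step 3: the criterion.} From Step~2, $f$ is EFO-definable if and only if $f$ is constant on each isomorphism class of $S(\mathbf{x})$ over $\mathbf{x}\in\Sigma^{n+1}$. Necessity is exactly the invariance statement in Step~2.

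\textbf{Step 4: sufficiency.} Suppose $f$ is constant on each class, taking value $v_S$ on class $S$. I will write
\[
f=\sum_S v_S\cdot \mathbf{1}[\chi_S].
\]
Each real constant is produced with $\mathrm{ExtAgg}$. Let $\theta(i)$ be the formula saying that $i$ is the $<_{\mathrm{ext}}$-minimal extremal position, i.e.\ $\neg\exists^{\mathrm{ext}} j\,(j<_{\mathrm{ext}} i)$. Then I take
\[
\mathrm{ExtAgg}_i\bigl[v_S,\ \chi_S\wedge\theta(i)\bigr],
\]
which contributes $v_S$ on class $S$ and $0$ elsewhere. The sum over $S$ is handled by the closure of EFO under addition used in the ($\Rightarrow$) direction of Theorem~\ref{thm:efo}. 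Checking the criterion amounts to enumerating $\Sigma^{n+1}$, computing each $S(\mathbf{x})$ in the manner of Algorithm~\ref{alg:stack}, and comparing values of $f$ within classes. This is a finite procedure, so EFO-definability is decidable.

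\textbf{Main obstacle.} I expect the real difficulty to be the edge cases and the scope of the claim, not the finite check itself. First, if $\mathcal{E}(\mathbf{x})=\emptyset$, for example for a monotone or constant sequence, then $\mathrm{ExtAgg}$ cannot produce a nonzero constant. I would try to treat this class separately, by asking whether the endpoints $x_0$ and $x_n$ count as extremal, and otherwise require $f=0$ there. Second, the uniform version of the problem over $\Sigma^*$ is not covered by this argument. I would state the corollary only for fixed $n$, or for $f$ given as a finite table, and flag the uniform version as outside the scope of the finite-structure argument.
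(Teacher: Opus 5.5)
Your argument is correct, and it supplies what the paper omits. The paper gives no proof of this corollary beyond the reduction via Theorem~\ref{thm:efo} and an unspecified appeal to ``standard model-theoretic methods,'' and you make that appeal concrete. Because EFO sees only the labelled order $S(\mathbf{x})$ of extremal positions, a function on $\Sigma^{n+1}$ is EFO-definable exactly when it factors through the word of extremal values read in order (up to the empty-class caveat you raise). The sentences $\chi_S$ together with the $\theta(i)$ device give sufficiency, so the question becomes a finite table check. Compared with the paper's phrasing, this gives you a sharp criterion and an honest scope. Fixing $n$ and taking $f$ as a table is the right reading. If $f$ on $\Sigma^*$ were presented by an arbitrary program, EFO-definability would be a nontrivial property of the computed function, hence undecidable by Rice's theorem. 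So no uniform version can hold unless you restrict how $f$ is presented. Transferring Theorem~\ref{thm:efo} from $\Sigma^*$ to $\Sigma^{n+1}$ is immediate, since both directions simply restrict.

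Your ``main obstacle'' needs a firmer resolution, though. The endpoint convention is not just a patch for the class $\mathcal{E}(\mathbf{x})=\emptyset$; the reduction you borrow forces it. Suppose only interior positions count. Then your own invariance claim from Steps~2--3 shows that a single relay is not EFO-definable:
\begin{itemize}
  \item Over $\Sigma=\{0,1,2,3\}$, the inputs $(0,2,1,3)$ and $(0,2,1,2)$ have the same $S(\mathbf{x})$, namely extremal values $2,1$.
  \item At time $3$ the relay with thresholds $(\alpha,\beta)=(2.5,0.5)$ reads $1$ on the first input and $0$ on the second, because its state depends on $u_n$.
\end{itemize}
So the single-relay base case of the ($\Rightarrow$) direction of Theorem~\ref{thm:efo} fails, and your identity reduction fails with it. The fallback ``otherwise require $f=0$'' is therefore not available. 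Instead, count $x_0$ and $x_n$ as extremal; this also removes the empty class. Over a finite alphabet you additionally need a non-strict notion of extremum. Even with endpoints included, strict extrema make $(0,2,2,1)$ and $(0,0,0,1)$ collide in the same way under the relay $(1.5,0.5)$. The decidability half of your argument works under any computable convention. One small slip: $S(\mathbf{x})$ consists of all local extrema, not the entries surviving in the extremum stack. You therefore obtain it by a direct scan, not by Algorithm~\ref{alg:stack}.
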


\begin{corollary}[EFO $\subsetneq$ FO + Aggregate]
\label{cor:strict_inclusion}
$\mathrm{EFO}$ is strictly contained in
$\mathrm{FO} + \mathrm{Aggregate}$ (the logic corresponding
to soft-attention transformers \citep{Barcelo2020}):
\begin{equation*}
  \mathrm{EFO} \subsetneq \mathrm{FO} + \mathrm{Aggregate}.
\end{equation*}
The strict inclusion is witnessed by
$f_{\mathrm{copy}}(x_{0:n}, p) = x_p$:
definable in $\mathrm{FO}$ with positional quantification
$\exists i\, (i = p)$, but not in $\mathrm{EFO}$
(which cannot quantify over non-extremal positions).
The reverse inclusion fails because $f_{\mathrm{range}}$
is definable in $\mathrm{EFO}$ but not in bounded-depth
$\mathrm{FO} + \mathrm{Aggregate}$ (Proposition~\ref{prop:range_hard}).
\end{corollary}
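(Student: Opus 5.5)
The plan is to prove the two halves separately: first the inclusion $\mathrm{EFO} \subseteq \mathrm{FO} + \mathrm{Aggregate}$ by a syntactic translation, then strictness via the witness $f_{\mathrm{copy}}$, routed through \cref{thm:efo} and \cref{prop:no_copy}.

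\textbf{Inclusion.} I would define a translation $\tau$ from EFO formulae to $\mathrm{FO}+\mathrm{Aggregate}$ formulae by structural induction on \cref{def:efo_syntax}.
\begin{enumerate}
  \item The key observation is that the predicate $\mathrm{Ext}(i)$, meaning ``$i \in \mathcal{E}(\mathbf{x})$'', is FO-definable using positional quantification. It says: there are positions $i-1$ and $i+1$ with $u_i$ strictly above both, or strictly below both, where $u_j = \|x_j\|_2$. The successor relation is available in FO over sequences.
  \item Each extremal quantifier is then relativised: $\tau(\exists^{\mathrm{ext}} i\,\phi) = \exists i\,(\mathrm{Ext}(i) \wedge \tau(\phi))$, and dually for $\forall^{\mathrm{ext}}$.
  \item The relation $i <_{\mathrm{ext}} j$ becomes $i < j \wedge \mathrm{Ext}(i) \wedge \mathrm{Ext}(j)$.
  \item Atomic formulae $u_i \geq c$ and the Boolean connectives translate identically.
  \item $\mathrm{ExtAgg}_i[f(i),\phi(i)]$ becomes the ordinary aggregate $\mathrm{Agg}_i[f(x_i),\, \mathrm{Ext}(i)\wedge\tau(\phi)(i)]$.
\end{enumerate}
A routine induction on the semantics of \cref{def:efo_semantics} then shows $\mathbf{x}\models\phi \iff \mathbf{x}\models\tau(\phi)$. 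Aggregate terms take the same value under both readings.

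\textbf{Strictness.} I would show $f_{\mathrm{copy}} \in \mathrm{FO}+\mathrm{Aggregate}$ and $f_{\mathrm{copy}} \notin \mathrm{EFO}$.
\begin{enumerate}
  \item Membership: $f_{\mathrm{copy}}$ is $\mathrm{Agg}_i[x_i,\, i = p]$. This is consistent with \cref{prop:copy} and the \citet{Barcelo2020} correspondence.
  \item Non-membership: suppose $f_{\mathrm{copy}}$ were EFO-definable. By the ($\Leftarrow$) direction of \cref{thm:efo}, it would be PAL-computable. This contradicts \cref{prop:no_copy}.
  \item As a self-contained check, I would also argue directly from the EFO semantics. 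Take the two sequences $\mathbf{x},\mathbf{x}'$ built in the proof of \cref{prop:no_copy}. They have the same extremal positions and the same values there, so every EFO formula and aggregate term evaluates identically on them, by induction on formulae. Yet their $f_{\mathrm{copy}}$ values differ.
\end{enumerate}

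\textbf{Main obstacle.} The clause about $f_{\mathrm{range}}$ needs care. Read literally, ``$f_{\mathrm{range}} \in \mathrm{EFO}$ but not in $\mathrm{FO}+\mathrm{Aggregate}$'' would contradict the inclusion just proved, since $\tau$ maps the EFO definition of $f_{\mathrm{range}}$ into $\mathrm{FO}+\mathrm{Aggregate}$. I would therefore state that clause as a separation relative to the \emph{bounded-depth transformer} fragment, citing \cref{prop:range_hard}, rather than relative to full $\mathrm{FO}+\mathrm{Aggregate}$. That restricted reading is what the incomparability of \cref{thm:separation} actually uses.

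A second delicate point is that the translation $\tau$ must not increase aggregate nesting unboundedly. It does not: each $\mathrm{ExtAgg}$ maps to exactly one $\mathrm{Agg}$, and $\mathrm{Ext}$ is quantifier-depth $O(1)$.
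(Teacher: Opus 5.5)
Your proposal is correct in substance, and it is more complete than the paper's argument. The paper's entire justification is the two inline witnesses: $f_{\mathrm{copy}}$ is ``definable via $\exists i\,(i=p)$'' and lies outside $\mathrm{EFO}$ because $\mathrm{EFO}$ ``cannot quantify over non-extremal positions'', and the $f_{\mathrm{range}}$ clause cites \cref{prop:range_hard}.

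The paper never proves $\mathrm{EFO}\subseteq\mathrm{FO}+\mathrm{Aggregate}$; your relativisation $\tau$ supplies that missing half. Your invariance argument is the rigorous form of the paper's parenthetical: the two sequences have the same extremal positions and the same values there, so by induction every $\mathrm{EFO}$ formula and $\mathrm{ExtAgg}$ term agrees on them. Your term $\mathrm{Agg}_i[x_i,\, i=p]$ also actually returns $x_p$, which the sentence $\exists i\,(i=p)$ does not.

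Your diagnosis of the $f_{\mathrm{range}}$ clause is right, and the paper's route cannot see it. Because $\tau$ preserves $\mathrm{ExtAgg}$ nesting, any $\mathrm{EFO}$ definition of $f_{\mathrm{range}}$ becomes a bounded-depth $\mathrm{FO}+\mathrm{Aggregate}$ definition. So the clause as stated contradicts the corollary's own main claim and cannot be proved. Recasting it as a statement about bounded-depth transformers moves it outside the logic. It then rests entirely on \cref{prop:range_hard}, which is itself doubtful: a single hard-attention head with $q=1$, $k_i=u_i$, $v_i=u_i$ already outputs $\max_i u_i$.

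Three things to tighten.

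\textbf{Value comparisons in $\mathrm{FO}+\mathrm{Aggregate}$.} Your definition of $\mathrm{Ext}(i)$ uses comparisons $u_i>u_{i\pm1}$ between values at different positions. The $\mathrm{EFO}$-style atoms $u_i\geq c$ do not provide these, and the paper never defines $\mathrm{FO}+\mathrm{Aggregate}$. State explicitly that your version of the logic has such comparisons, either directly or via comparisons between aggregate terms like $\mathrm{Agg}_j[u_j,\, j=i]$, since $\tau$ needs them to express $\mathrm{Ext}$.

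\textbf{Primary argument for $f_{\mathrm{copy}}\notin\mathrm{EFO}$.} Make the direct semantic argument your main proof, rather than the route through \cref{thm:efo} and \cref{prop:no_copy}. The proof of \cref{prop:no_copy} uses \cref{thm:rate_independence} to claim every PAL-Transformer is a function of $\stack_n$. \cref{rem:pe} itself disavows this for the full architecture: position encoding, and through the residual stream the current token, reach the MLP.

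\textbf{Nonnegative entries.} Since $\mathcal{E}(\mathbf{x})$ is defined through $\|x_i\|_2$, choose the witness pair with nonnegative entries, e.g.\ $0\leq x_{p-1}<\min(v,v')<\max(v,v')<x_{p+1}$, so that $p$ is not an extremum of $|x_i|$. Positions $p\pm1$ then keep their extremal status, because their comparisons with $x_p$ coincide in both sequences.
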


\begin{corollary}[PAL depth vs.\ formula nesting]
\label{cor:depth_nesting}
A $k$-layer PAL-Transformer implements EFO formulae of nesting
depth at most $k$. Conversely, an EFO formula of nesting depth
$k$ requires at most $k$ PAL layers.
This gives a precise depth-expressiveness tradeoff:
depth $k$ PAL $\equiv$ EFO-depth $k$.
\end{corollary}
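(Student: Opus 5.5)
The plan is to prove both halves by induction, reading off the layer count from the constructions already used in the proof of \cref{thm:efo}. First I would fix the notion of \emph{nesting depth}. Let atomic formulae, Boolean combinations and $<_{\mathrm{ext}}$ have depth $0$. Let an $\mathrm{ExtAgg}$ (or $\exists^{\mathrm{ext}}$, $\forall^{\mathrm{ext}}$) whose body mentions only atomic comparisons on the \emph{raw} signal $u$ have depth $1$. Finally, let an aggregate whose body or weight $f$ refers to the value of a depth-$(k-1)$ aggregate have depth $k$.

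This convention is forced by the base case of the ($\Rightarrow$) direction. The relay formula \eqref{eq:relay_efo} syntactically contains an $\exists^{\mathrm{ext}}$ with a $\forall^{\mathrm{ext}}$ inside it, yet it must count as depth $1$. It does under this convention, because both quantifiers range over the same raw signal and refer only to atomic comparisons and $<_{\mathrm{ext}}$.

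For the forward direction I would induct on the number of layers $k$. By \cref{def:pal_transformer}, layer $l$ applies $\MPAL^{(l)}$ to the sequence $z^{(l-1)}_{0:n}$ and then an MLP. Assume every coordinate of $z^{(l-1)}$ is an EFO term of depth $\le l-1$; for $l=1$ the coordinates are the raw input (plus $\mathrm{PE}$, which \cref{rem:pe} keeps out of the heads). The steps are as follows.
\begin{enumerate}
  \item Each head reads the scalar signal $W_I^{(h)} z^{(l-1)}$, which is a depth-$(l-1)$ term.
  \item Each relay on that signal is given by \eqref{eq:relay_efo} with $u$ replaced by this term, so it is a single quantifier block over a depth-$(l-1)$ signal, hence depth $\le l$.
  \item The weighted sum over relays is one $\mathrm{ExtAgg}$ at the same level, as in the inductive case of the proof of \cref{thm:efo}.
  \item The MLP and the residual plus $\mathrm{LN}$ act pointwise, so by \cref{lem:boolean} and the threshold argument of \cref{lem:exists_ext} they only produce Boolean and threshold combinations and do not increase depth.
\end{enumerate}
It follows that the outputs of $k$ layers have depth $\le k$.

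For the converse I would induct on formula depth, using the ($\Leftarrow$) compilation. Depth-$0$ parts are realised by \cref{lem:threshold,lem:boolean} inside one layer's MLP. A depth-$1$ aggregate is one MPAL head built by \cref{lem:extAgg}, with $\exists^{\mathrm{ext}}$ and $\forall^{\mathrm{ext}}$ added through \cref{lem:exists_ext} in the same layer's MLP. For a depth-$k$ formula, the induction hypothesis computes the depth-$(k-1)$ subterms with $k-1$ layers and writes them into the residual stream. The ``Nested formulae'' step then spends exactly one extra layer, a head with measure $\mu_{ij}=f(\Decode(\alpha_i,\beta_j))\,r_{ij}$, so the total is $k$ layers. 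Independent subformulae of equal depth are placed in parallel heads of the same layer, which is what keeps the count at $k$ rather than letting it grow with formula size.

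The step I expect to be the main obstacle is making the depth-$1$ claim honest in both directions. On one side, a single relay has to be expressible without paying for the quantifier alternation in \eqref{eq:relay_efo}; the depth convention above handles this. On the other side, the weight $r_{ij}$ in the nested step must be fed to a later head as a \emph{signal} rather than a measure, since measures are fixed parameters. To resolve the second issue I would encode the per-position truth values $r$ into the projected scalar $W_I^{(h)}z^{(l-1)}$ using the Cantor-depth separation of \cref{def:encoding} and \cref{lem:monotone}. That way a fixed measure can read them off, and the extra layer is genuinely one layer.
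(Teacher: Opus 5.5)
Your route is the paper's own. You read the corollary off the two inductions in the proof of \cref{thm:efo}, with the Nested-formulae step charging one layer per $\mathrm{ExtAgg}$ level. You also correctly spot that the $\exists^{\mathrm{ext}}\forall^{\mathrm{ext}}$ block of \eqref{eq:relay_efo} and the data-dependent weight $r_{ij}$ break that accounting. The patches do not close the gap.

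\textbf{The depth convention is one-sided.} It makes every quantifier block over the raw signal depth~$1$. In your step~3 it also keeps the $\mathrm{ExtAgg}$ over relays at the same level, although your own rule would put it one level higher. The converse must then compile each such block into a single layer, and \cref{lem:extAgg,lem:exists_ext} cannot do that. They handle one quantifier whose predicate is a property of the threshold pair, frozen into a fixed $\mu$. An inner quantifier that depends on the outer variable, such as $\forall^{\mathrm{ext}}t\,(t^*<_{\mathrm{ext}}t\Rightarrow u_t>\beta)$, costs an extra layer under the paper's own Nested-formulae rule.

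\textbf{The depth-$1$ base case is false.} You import it from \cref{lem:extAgg}, as the paper does. With $\mathrm{PE}$ kept out of the heads, as you assume, a one-layer output at $n$ depends only on $x_n$, $\mathrm{PE}(n)$ and the relay states at $n$. Take the scalar inputs $u=(0,6,2,6,0)$ and $u'=(0,1,2,6,0)$. Both reach the common maximum $6$ and end at their common minimum $0$. Hence every relay on a head signal $c\,u_t$ is in the same state at $n=4$, and every one-layer PAL-Transformer agrees on the two inputs. Yet $\mathrm{ExtAgg}_i[1,\top]$ counts three interior extrema against one. Likewise $\exists^{\mathrm{ext}}i,l,j\,(i<_{\mathrm{ext}}l<_{\mathrm{ext}}j\wedge u_i\ge 5\wedge u_l\le 3\wedge u_j\ge 5)$, which is depth~$1$ under your convention, is true against false. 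A head sees only the current relay states, and wiping has already erased the history that an aggregate over all of $\mathcal{E}(\mathbf{x})$ needs.

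\textbf{The forward step fails too.} In step~2 you substitute the derived signal $w_t=W_I^{(h)}z^{(l-1)}_t$ into \eqref{eq:relay_efo}. The quantifiers there must then range over the extremal positions of $w$. These can lie outside $\mathcal{E}(\mathbf{x})$, where $\mathrm{EFO}$ can neither quantify nor state atoms. For example, $(0,1,2)$ and $(0,1.5,2)$ agree on $\mathcal{E}(\mathbf{x})$ and hence on every $\mathrm{EFO}$ formula. A first-layer head $\relay{a}{b}-\relay{a'}{b}$ with $1<a\le 1.5<a'\le 2$ and $0\le b<1$ outputs $(0,0,0)$ against $(0,1,0)$. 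That output has an extremum at the non-extremal position~$1$. A second-layer relay latching on this channel (offset so that it never resets, and starting off) is then on at $n=2$ for the second input only.

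\textbf{The Cantor-depth fix for $r_{ij}$ does not help.} A later head sees only the current extremum stack of its own signal. Values written at non-extremal or later-wiped positions therefore leave no trace. The codes of \cref{sec:tc} avoid wiping only because the machine itself emits the PUSH/POP values in a prescribed order (\cref{lem:stack_ops}); an input-driven $w_t$ has no such order. Moreover, on the fixed grid $\mathcal{G}_L$ a head has finitely many relay configurations and hence finitely many outputs. It cannot carry unboundedly many values $r_t$ or an unbounded count.

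Since both inclusions fail, no depth bookkeeping will give ``depth $k$ PAL $\equiv$ EFO-depth $k$''. What is missing is a mechanism by which a fixed measure aggregates over all extremal positions, and stack sufficiency rules such a mechanism out.
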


\section{Auxiliary Lemmas for Section~\ref{sec:tc}}
\label{app:tc_aux}

\subsection{Correctness of Stack Encoding under Repeated Symbols}

Here we verify that the Cantor-depth encoding
(Definition~\ref{def:encoding}) handles stacks with repeated
symbols correctly across all three operations.

\begin{lemma}[POP removes exactly one element]
\label{lem:pop_one}
Under the Cantor-depth encoding, the POP signal
$u_{n+1} = M_{\mathrm{top}-1} + 2\eps$ removes exactly the
topmost pair $(M_{\mathrm{top}}, m_{\mathrm{top}})$ and no other.
\end{lemma}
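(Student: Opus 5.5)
The plan is to trace \cref{alg:stack} on the single input $u_{n+1} = M_{\mathrm{top}-1} + 2\eps$ and read "removes exactly the topmost pair" as a statement about the \emph{encoded} stack. After the update, the extremum stack should have one fewer pair than before. The pairs at depths $\le \mathrm{top}-2$ should be untouched. The new top pair should decode (via $\Decode$ in \cref{lem:stack_ops}) to the same symbol and depth as the old pair $(M_{\mathrm{top}-1}, m_{\mathrm{top}-1})$. Throughout I impose $0 < 2\eps < \Delta/2$. By \cref{lem:monotone}, consecutive maxima in a Cantor-depth-encoded stack differ by at least $\Delta$, so this gap is available at every step.

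\textbf{Step 1 (the rising branch).} Since $u_{n+1} > M_{\mathrm{top}-1} > M_{\mathrm{top}} \ge u_n$, the algorithm takes the rising branch. It tests $M_{\mathrm{top}} < u_{n+1}$, which is true, so it pops the top pair and sets $m_{\mathrm{last}} = m_{\mathrm{top}}$. It then tests $M_{\mathrm{top}-1} < u_{n+1}$, which is true by $2\eps > 0$, so it pops again and sets $m_{\mathrm{last}} = m_{\mathrm{top}-1}$. Next it tests $M_{\mathrm{top}-2} < u_{n+1}$. This is false, because $M_{\mathrm{top}-2} - M_{\mathrm{top}-1} \ge \Delta > 2\eps$ by \cref{lem:monotone}. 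So the loop stops and the algorithm pushes $(M_{\mathrm{top}-1} + 2\eps,\, m_{\mathrm{top}-1})$. The net effect is that two pairs are popped and one is pushed. The depth therefore drops by exactly one, every pair below depth $\mathrm{top}-1$ is untouched, and the former pair $\mathrm{top}-1$ is rewritten with its maximum shifted by $2\eps$ and its minimum kept. This is precisely the wiping mechanism of \cref{prop:wiping}: the old top pair disappears, and the surviving minimum $m_{\mathrm{top}-1}$ is reattached to the new maximum.

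\textbf{Step 2 (decoding is unchanged).} I need $\Decode(M_{\mathrm{top}-1} + 2\eps) = \Decode(M_{\mathrm{top}-1})$, including the depth index. Here $(C_{\max} - M)/\Delta$ moves down by $2\eps/\Delta < 1/2$ from an exact integer. The floor formula of \cref{lem:stack_ops} would drop it by one, which is the point I expect to be the main obstacle. I would handle it by reading the code at nearest-integer resolution, i.e.\ using $\lfloor (C_{\max} - M)/\Delta + \tfrac12 \rfloor$. This is still a deterministic MLP-computable function, and it agrees with the floor formula on exact code values. With this reading, any $\eps$-perturbation of size less than $\Delta/2$ decodes identically. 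Perturbations also do not accumulate: a later POP uses the current $M_{\mathrm{top}-1}$, a later PUSH is written from the clean code $\code{a_i}{d+1}$, and each entry is perturbed at most once, by one such pop, before it is itself wiped. So every stored maximum stays within $2\eps$ of its exact code.

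\textbf{Step 3 (boundary cases and the minima).} If the stack has depth $2$, there is no $M_{\mathrm{top}-2}$. The loop then empties the stack after two pops and pushes the rewritten pair, which again leaves exactly one pair, encoding the bottom symbol. Depth $1$ is excluded, since $M_{\mathrm{top}-1}$ does not exist and POP is undefined on a one-element (bottom $Z_0$) stack. Finally I would check that the retained minimum $m_{\mathrm{top}-1}$ lies below the new maximum, so the rewritten pair is a legitimate stack entry. I would also check that the next falling signal of a PUSH, namely $\code{a_i}{d} - \eps$ with $d$ the depth after the pop, falls below only minima that should be wiped. This uses the ordering $m_1 > m_2 > \dots$ from \cref{def:stack} together with the spacing in \cref{lem:monotone}.
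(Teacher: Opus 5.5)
Your Step~1 trace is correct, and it departs from the paper exactly where the paper's proof breaks. The paper keeps $(M_{\mathrm{top}-1}, m_{\mathrm{top}-1})$ by calling $M_{\mathrm{top}-1} < M_{\mathrm{top}-1}+2\eps$ ``vacuously false''. For $\eps>0$ this inequality is true. So by the strict test in \cref{alg:stack}, and by the paper's own rule that wiping removes every $M_i<u_{n+1}$, that pair is popped and then re-pushed as $(M_{\mathrm{top}-1}+2\eps,\, m_{\mathrm{top}-1})$, just as you found. The literal ``and no other'' is therefore false, and a decoded-level statement like yours is the one that can actually be proved. Your nearest-integer decoding is also needed, not cosmetic. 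The floor formula of \cref{lem:stack_ops} already decodes a freshly pushed maximum $\code{a_i}{d}+\eps$ to index $i-1$, so it is wrong even before any POP.

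The gap is your claim in Step~2 that perturbations do not accumulate. Your ``this gap is available at every step'' in Step~1 rests on it. The POP signal is built from the \emph{current} $M_{\mathrm{top}-1}$, and the survivor is re-pushed with maximum equal to that signal. Hence each PUSH--POP cycle one level above raises the surviving maximum by another $2\eps$: it goes $\code{s}{d}+\eps \to \code{s}{d}+3\eps \to \code{s}{d}+5\eps \to \cdots$, reaching $\code{s}{d}+(1+2j)\eps$ after $j$ cycles. It is true that each physical pair is perturbed at most once before being wiped, but that is beside the point, because its replacement inherits the perturbed value.

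Consequently neither hypothesis of Step~1 is an invariant. Once $(1+2j)\eps>\Delta/2$, after roughly $\Delta/(4\eps)$ cycles, nearest-integer decoding returns the wrong symbol. With further cycles $M_{\mathrm{top}-1}+2\eps$ overtakes $M_{\mathrm{top}-2}$, after which POP wipes three pairs. Even a single POP from a freshly pushed stack leaves the maximum at $\code{s}{d}+3\eps$, so you need $3\eps<\Delta/2$ rather than $2\eps<\Delta/2$.

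To close the gap:
\begin{itemize}
\item State the one-step result under an explicit drift hypothesis: each stored maximum of symbol $s$ at depth $d$ lies in $[\code{s}{d},\,\code{s}{d}+\eta]$ with $\eta+2\eps<\Delta/2$. Together with \cref{lem:monotone}, this also gives the spacing $M_{\mathrm{top}-2}-M_{\mathrm{top}-1}>2\eps$ that Step~1 needs.
\item Observe that POP increases the drift of exactly one entry by $2\eps$ and leaves the others unchanged.
\item Control the total drift at the level of \cref{thm:tc}, for instance by choosing $\eps$ against a bound on the number of POPs, or by using geometrically shrinking increments. Either costs some precision.
\end{itemize}

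Separately, the last check you plan in Step~3 will not go through, although it concerns PUSH rather than this lemma. The PUSH signal $\code{a_i}{d+1}+\eps$ is a falling step that lies below every stored minimum; $m_{\mathrm{top}}$ is the smallest of them by \cref{def:stack}. So the falling branch of \cref{alg:stack} pops every stored pair before pushing a single new one.
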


\begin{proof}
The wiping property removes all pairs $(M_i, m_i)$
with $M_i < u_{n+1} = M_{\mathrm{top}-1} + 2\eps$.
Since $M_{\mathrm{top}} < M_{\mathrm{top}-1}$ (strict monotonicity),
we have $M_{\mathrm{top}} < M_{\mathrm{top}-1} + 2\eps$,
so the pair $(M_{\mathrm{top}}, m_{\mathrm{top}})$ is wiped.
Since $M_{\mathrm{top}-1} < M_{\mathrm{top}-1} + 2\eps$
implies $M_{\mathrm{top}-1}$ is \emph{not} less than $u_{n+1}$,
the pair $(M_{\mathrm{top}-1}, m_{\mathrm{top}-1})$ is preserved.
(More precisely, wiping removes $M_i < u_{n+1}$;
$M_{\mathrm{top}-1} < M_{\mathrm{top}-1} + 2\eps = u_{n+1}$
is vacuously false since we check strict inequality.)
Hence exactly one pair is removed.
\end{proof}

\begin{lemma}[PUSH does not trigger wiping]
\label{lem:push_no_wipe}
Under the Cantor-depth encoding, the PUSH signal
$u_{n+1} = \code{a_i}{d+1} + \eps$ does not trigger the
wiping property.
\end{lemma}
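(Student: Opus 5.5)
The plan is to reduce the claim to the comparison step in \cref{alg:stack}, in the same way as the PUSH case of \cref{lem:stack_ops}. Let the stack before the push have depth $d$, so its top pair is $(M_{\mathrm{top}}, m_{\mathrm{top}})$ with $M_{\mathrm{top}} = \code{a_j}{d}$ for some symbol $a_j$. The input has just finished the previous operation. I assume this leaves $u_n$ at or below $m_{\mathrm{top}} < M_{\mathrm{top}}$, either because the last operation was a PUSH whose second emission was $\code{a_j}{d}-\eps$, or because an intermediate descending signal follows a POP.

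The new signal $u_{n+1} = \code{a_i}{d+1}+\eps$ is then a rise, so the first branch of \cref{alg:stack} runs. Wiping happens exactly when $M_{\mathrm{top}} < u_{n+1}$, so it suffices to show
\begin{equation*}
\code{a_i}{d+1} + \eps < \code{a_j}{d}.
\end{equation*}
By \cref{lem:monotone} and its proof, the gap satisfies $\code{a_j}{d} - \code{a_i}{d+1} = [(k+1)+i-j]\Delta \ge \Delta$ for every pair of symbols. This covers repeated symbols ($i=j$), where the gap is $(k+1)\Delta$. So the inequality holds provided $\eps < \Delta$. I will state this as the standing resolution assumption $0<\eps<\Delta/2$; the same assumption is needed in \cref{lem:pop_one}, so that $M_{\mathrm{top}-1}+2\eps$ stays below $M_{\mathrm{top}-2}$. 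With it, the while-loop condition fails immediately, nothing is popped, and the new pair is pushed on top.

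Next I check that the second emission $u_{n+2} = \code{a_i}{d+1}-\eps$ also does not wipe. This is a fall, so the relevant test is $m_{\mathrm{top}} > u_{n+2}$. Here the top is the newly pushed pair, whose minimum is about to be set. Its recorded minimum must be compared against $\code{a_i}{d+1}-\eps$, and the pair below must have minimum $\code{a_j}{d}-\eps$. The latter exceeds $u_{n+2}$ only by at least $\Delta$, so the comparison must be set up so that it is $M$ and not $m$ that controls wiping across levels.

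This step is the main obstacle. \cref{alg:stack} pops on falls whenever $m_{\mathrm{top}} > u_{n+2}$. The minima form a strictly decreasing sequence indexed by depth, so a new lower minimum is exactly the nested-subloop situation; I expect it should enter as the refinement of the current top pair rather than wipe the lower pairs. I would first establish the invariant that after each completed PUSH the stack equals $[(\code{s_1}{1},\code{s_1}{1}-\eps),\ldots]$. I would then verify, by interpreting the push in the algorithm as nesting (as in \cref{def:stack}, where the $M_i$ and the $m_i$ are both decreasing), that the falling emission only fixes the minimum of the new top. If the literal algorithm disagrees, I would restrict the lemma to its stated content, namely that the rising emission $u_{n+1}$ triggers no wiping, which the first two paragraphs fully establish.
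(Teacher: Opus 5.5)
Your core inequality $\code{a_i}{d+1}+\eps<M_{\mathrm{top}}$, derived from the gap bound in \cref{lem:monotone}, is the whole of the paper's proof. The paper reads ``triggers wiping'' only as the maximum-side test $u_{n+1}>M_{\mathrm{top}}$, concludes once that test fails, and never looks at the direction of the step. (The paper takes $M_{\mathrm{top}}=\code{a_j}{d}+\eps$, so the $\eps$'s cancel and no condition on $\eps$ is needed. With your $M_{\mathrm{top}}=\code{a_j}{d}$ you need $\eps<\Delta$.)

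The step that fails is the one you add to tie this to \cref{alg:stack}: ``the new signal is then a rise, so the first branch runs.''

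\begin{itemize}
\item Under your own premise $u_n=m_{\mathrm{top}}=\code{a_j}{d}-\eps$, the same gap bound and your assumption $\eps<\Delta/2$ give $u_n-u_{n+1}=\code{a_j}{d}-\code{a_i}{d+1}-2\eps\ge\Delta-2\eps>0$. So $u_{n+1}$ is a \emph{fall}, and it lands strictly below $m_{\mathrm{top}}$. Knowing that $u_n$ is at or below $m_{\mathrm{top}}$ tells you nothing about the sign of $u_{n+1}-u_n$.
\item The falling branch then pops while $m_{\mathrm{top}}>u_{n+1}$. This removes the top pair, and since every recorded minimum in your invariant stack exceeds $u_{n+1}$, it removes every other pair too.
\item The post-POP route does not rescue this. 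A rise would need $u_n<\code{a_i}{d+1}+\eps<m_{\mathrm{top}}$, and descending to such a value already fires the minimum-side pop.
\end{itemize}

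So the obstruction you correctly flagged for $u_{n+2}$ already occurs at $u_{n+1}$. Consecutive PUSH emissions form a strictly decreasing signal, and \cref{alg:stack}, like genuine Preisach memory, reduces that to a single pair (first maximum, running minimum). Your fallback claim that your first two paragraphs fully establish the restricted lemma is therefore false. What your argument and the paper's actually prove is only that the maximum-side condition $M_{\mathrm{top}}<u_{n+1}$ is not met. The lemma should be stated in that form, not as ``nothing is popped.''
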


\begin{proof}
By Lemma~\ref{lem:monotone},
$\code{a_i}{d+1} < \code{a_j}{d}$ for all $j \in \{1,\ldots,k\}$.
Therefore $u_{n+1} = \code{a_i}{d+1} + \eps < M_{\mathrm{top}}$
(since $M_{\mathrm{top}} = \code{a_j}{d} + \eps$ for some $j$
and the $\eps$ gap is smaller than the inter-level gap
$\Delta > 0$, provided $\eps < \Delta/2$).
Since $u_{n+1} < M_{\mathrm{top}}$, the wiping condition
$u_{n+1} > M_{\mathrm{top}}$ is not satisfied.
A new pair is added without removing any existing pairs.
\end{proof}

\subsection{MLP Width for Transition Function}

\begin{lemma}[MLP width for finite transition functions]
\label{lem:mlp_width}
For a 2-PDA with $|Q|$ states, input alphabet $|\Sigma|$,
and stack alphabet $|\Gamma|$, the transition function
$\delta: Q \times \Sigma \times \Gamma \times \Gamma
\to Q \times \Gamma^* \times \Gamma^*$
is implementable by a two-layer MLP of width
$O(|Q| \cdot |\Sigma| \cdot |\Gamma|^2)$.
\end{lemma}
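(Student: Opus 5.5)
The plan is to realise $\delta$ as an explicit lookup table over one-hot encoded inputs. There is one hidden ReLU unit per element of the finite domain $D = Q \times \Sigma \times \Gamma \times \Gamma$, followed by a linear read-out of the tabulated output. Such a unit $\mathrm{ReLU}(\cdot)$ fires exactly on its own domain point. This gives width $|D| = |Q|\cdot|\Sigma|\cdot|\Gamma|^2$, matching the claim. No appeal to Cybenko-type approximation is needed, and the representation is exact, in the same spirit as the exact Boolean gadgets of \cref{lem:boolean}.

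\textbf{Step 1: one-hot inputs.} The MLP receives scalar codes of $q_n$, $a_n$, $s^{(1)}_n$ and $s^{(2)}_n$: the values $\mathrm{code}_Q(q_n)$, the input-head output, and the decoded tops $\Decode(M_{\mathrm{top}})$ from \cref{lem:stack_ops} and \cref{lem:decode_stack}. Since each ranges over a finite set of values separated by at least $\Delta$, I would first convert each to a one-hot vector. For a value $c$ on a $\Delta$-grid, the indicator $\mathbf{1}[x=c]$ is exactly the triangle bump
\[
\mathrm{ReLU}(1-|x-c|/\Delta) = \mathrm{ReLU}\bigl((x-c)/\Delta+1\bigr) - 2\,\mathrm{ReLU}\bigl((x-c)/\Delta\bigr) + \mathrm{ReLU}\bigl((x-c)/\Delta-1\bigr).
\]
This uses $O(|Q|+|\Sigma|+2|\Gamma|)$ units, which is dominated by the main term.

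I expect this to be the main obstacle, because it threatens the "two-layer" count: the bump adds a layer before the tabulation. I would try first to fold it in by letting the MPAL heads output one-hot indicators directly. Indicator measures per symbol are available, as in \cref{lem:decode_stack}, with $W_O^{(h)}$ placing them in distinct coordinates. Failing that, I would count the bump layer as part of the residual/LN preprocessing, or accept depth three and remark that the width bound is unaffected.

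\textbf{Step 2: tabulation layer.} For each $(q,a,s,s') \in D$, add the hidden unit
\[
h_{q,a,s,s'} = \mathrm{ReLU}\bigl(e_q + e_a + e_s + e_{s'} - 3\bigr),
\]
where the $e$'s are the one-hot coordinates. By the AND identity of \cref{lem:boolean} extended to four inputs, this unit equals $1$ exactly on its domain point and $0$ otherwise, so exactly one unit is active.

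\textbf{Step 3: linear read-out.} The output is $\sum_{D} h_{q,a,s,s'}\, v_{\delta(q,a,s,s')}$, where $v$ is a vector containing two kinds of entries:
\begin{itemize}
\item[(a)] $\mathrm{code}_Q(q')$ for the next state;
\item[(b)] for each stack, an operation code (push $a_i$, pop, or no-op) together with the target signal value from \eqref{eq:push}/\eqref{eq:pop}.
\end{itemize}
Since $\delta$ in a 2-PDA writes strings from $\Gamma^*$, I would normalise $\delta$ so that each step performs at most one push or pop per stack. This costs only a constant-factor blow-up in $|Q|$ via intermediate states, which preserves the $O(\cdot)$ bound.

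The PUSH value $\code{a_i}{d+1}\pm\eps$ depends on the current depth $d$. I would handle this by reading $M_{\mathrm{top}}$ linearly and adding the constant offset $-(k+1)\Delta + [i-i_{\mathrm{top}}]\Delta \pm \eps$, which is selected by the active unit. This is a linear function of already-available quantities, so it adds no width. Correctness then follows because exactly one hidden unit is active, and width is $O(|Q|\cdot|\Sigma|\cdot|\Gamma|^2)$.
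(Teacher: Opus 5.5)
Your argument is the paper's table-lookup construction --- one indicator hidden unit per point of $Q\times\Sigma\times\Gamma\times\Gamma$, followed by a linear read-out of the tabulated value --- made exact with explicit ReLU gadgets instead of an appeal to Cybenko, and Steps~2--3 are correct as soon as the inputs are one-hot. The extra layer you worry about in Step~1 (which the paper's proof does not address either) disappears if you apply your triangle bump, with width below the minimal gap, to a single injective linear combination $z$ of the four codes (e.g.\ a mixed-radix one), so that each indicator $\mathbf{1}[z=c]$ costs three ReLUs in one hidden layer and the width is $3|Q||\Sigma||\Gamma|^2$; you should also drop the signal-emission add-ons of Step~3, because the read-out can simply carry a padded encoding of $\delta(q,a,s,s')$, whereas your normalisation is not a constant-factor cost in $|Q|$ (intermediate states must remember pending symbols) and the POP target $M_{\mathrm{top}-1}+2\eps$ is not a function of the top symbol alone.
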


\begin{proof}
$\delta$ has domain of size $|Q| \cdot |\Sigma| \cdot |\Gamma|^2$
and is a finite lookup table.
By the universal approximation theorem \citep{Cybenko1989},
any function on a finite domain of size $N$ is exactly
representable by a two-layer network with $N$ hidden units
through a table-lookup construction:
for each input configuration $(q, a, s_1, s_2)$,
one hidden neuron fires for that configuration (indicator neuron)
and outputs the corresponding transition values.
Width: one neuron per table entry $= O(|Q| \cdot |\Sigma| \cdot |\Gamma|^2)$.
\end{proof}

\section{Relation to Sparse Attention}
\label{app:sparse}

PAL can be viewed as a form of \emph{content-adaptive sparse
attention} where the sparsity pattern is determined by the sequence
of local extrema rather than by position (sliding window)
or random selection (BigBird \citep{Zaheer2020}).

\begin{proposition}[PAL as extremal sparse attention]
\label{prop:sparse}
PAL implements attention over the set of extremal positions:
\begin{equation}
  \PAL_\mu(u_{0:n})
  = \sum_{i \in \mathcal{E}(u_{0:n})} a_i \cdot v_i,
\end{equation}
where $a_i = \mu_{(\alpha(u_i), \beta(u_{i-1}))}$ is a
content-determined weight and $v_i = \relay{\alpha(u_i)}{\cdot}$
is the relay value.
The attention mask is $\{i : i \in \mathcal{E}(u_{0:n})\}$,
which is determined by the input content, not position.
\end{proposition}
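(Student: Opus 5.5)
The plan is to start from \cref{def:pal} and regroup the finite sum $\sum_{i\ge j}\mu_{ij}\,\relay{\alpha_i}{\beta_j}[u](n)$ according to \emph{which extremal position last set each relay's state}. We will not try to match the displayed formula pointwise at first. We first prove a partition identity and then read off $a_i$ and $v_i$ from it.

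\textbf{Step 1: attribute each relay to an extremal position.} Fix $n$. For each grid relay $(\alpha_i,\beta_j)\in\mathcal{G}_L$, let $\tau_{ij}(n)\le n$ be the last step at which $u$ entered $[\alpha_i,\infty)\cup(-\infty,\beta_j]$. By \cref{def:relay}, the relay state at $n$ equals its state at $\tau_{ij}(n)$. We show that $\tau_{ij}(n)$ can be replaced by an extremal position $e_{ij}(n)\in\mathcal{E}(u_{0:n})$. Indeed, along a monotone run the last threshold crossing before the run turns occurs no later than the turning point, and the value at the turning point lies on the same side of the threshold. Combining this with \cref{alg:stack} and the wiping property (\cref{prop:wiping}), $e_{ij}(n)$ is in fact one of the extrema stored in $\stack_n$ (the current value $u_n$ is treated as the provisional top extremum). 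This is the usual staircase picture: the region of active relays in $\mathcal{T}$ is bounded by the corners $(M_s,m_s)$ of the stack, and each horizontal or vertical segment of the staircase belongs to exactly one stored extremum.

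\textbf{Step 2: regroup the sum.} Put $R_e=\{(i,j):e_{ij}(n)=e,\ \relay{\alpha_i}{\beta_j}[u](n)=1\}$. Then
\[
\PAL_\mu(u_{0:n})=\sum_{e\in\mathcal{E}(u_{0:n})}\ \sum_{(i,j)\in R_e}\mu_{ij}.
\]
Define $a_e=\sum_{(i,j)\in R_e}\mu_{ij}$ and $v_e=1$; equivalently, $v_e$ is the common relay value $\relay{\alpha}{\beta}$ on the block attributed to $e$. By Step 1, $R_e$ depends only on $u_e$ and on its neighbouring stack entries, so $a_e$ is determined by content and not by position, which gives the mask claim. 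Extrema that have been wiped receive $R_e=\emptyset$, so their weight is zero. When $\mu$ is concentrated on a single grid point per block, $a_e$ reduces to the stated form $\mu_{(\alpha(u_e),\beta(u_{e-1}))}$.

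\textbf{Main obstacle.} The hardest step is Step 1: showing rigorously that every relay's last switching time can be moved to a surviving stack extremum, and that the blocks $R_e$ are disjoint. The difficulty is at boundary cases, namely ties $u_t=\alpha_i$ and the non-extremal current endpoint $u_n$. I would handle these by induction on $n$, mirroring the push/pop cases of \cref{alg:stack}. On a push, the new extremum takes over exactly the relays that lie strictly between the old top corner and the new value. On a pop, the blocks of the wiped pairs merge into the block of the new extremum.

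A further caveat: the literal weight $\mu_{(\alpha(u_i),\beta(u_{i-1}))}$ in the statement is correct only in this concentrated case. For a general $\mu$, I would present $a_e$ as the $\mu$-mass of a staircase strip, and state the displayed form as its specialisation.
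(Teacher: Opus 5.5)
Your proposal follows the paper's own argument: charge each active relay to the extremal position responsible for its last activation, then regroup the finite sum by extremal position. You carry it out with far more care, and your caveat is right: in general the weight of $e$ is the $\mu$-mass of a staircase strip, not the single entry $\mu_{(\alpha(u_e),\beta(u_{e-1}))}$, which the paper never justifies. Even with one atom per block that entry can miss, since $(\alpha(u_e),\beta(u_{e-1}))$ may lie outside $e$'s strip, so state the corrected identity outright rather than as a specialisation.

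There is one slip, harmless for the displayed identity because the sum runs over all of $\mathcal{E}(u_{0:n})$. The turning point of a relay's last activation need not survive in $\stack_n$. For $u=(0,10,4,8,2,5)$ the relay with $(\alpha,\beta)=(7,1)$ is active, and $u\ge\alpha$ last held at the maximum $8$, which the value $2$ then wiped. To get your stack-block description (zero weight for wiped extrema, dependence only on neighbouring stack entries), charge an active relay instead to the largest value of $u$ since its last deactivation. Correspondingly, in your induction, when a falling value wipes maxima, their still-active relays join the block of the surviving earlier maximum, not that of the new minimum.
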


\begin{proof}
At time $n$, the PAL output sums over relay pairs $(i,j)$
where the relay is active.
A relay $\relay{\alpha_i}{\beta_j}$ is active at time $n$
only if the most recent state change was an activation at some
extremal position $t^* \leq n$ with $u_{t^*} \geq \alpha_i$.
This is equivalent to attending to extremal positions,
with weights given by the measure $\mu$.
\end{proof}

This framing positions PAL as a principled alternative to
heuristic sparse attention patterns:
the sparsity is not imposed externally but emerges naturally
from the structure of the Preisach operator.

\end{document}